\begin{document}

\title{Contextual Explanation Networks}

\author{%
  \name{Maruan~Al-Shedivat} \email{alshedivat@cs.cmu.edu} \\
  \addr{Carnegie Mellon University}
  \AND
  \name{Avinava~Dubey} \email{avinava.dubey@google.com} \\
  \addr{Google Research}
  \AND
  \name{Eric~P.~Xing} \email{epxing@cs.cmu.edu} \\
  \addr{Carnegie Mellon University \& Petuum Inc.}%
}

\editor{Edoardo M. Airoldi}

\maketitle

\begin{abstract}%
Modern learning algorithms excel at producing accurate but complex models of the data.
However, deploying such models in the real-world requires extra care: we must ensure their reliability, robustness, and absence of undesired biases.
This motivates the development of models that are equally accurate but can be also easily inspected and assessed beyond their predictive performance.
To this end, we introduce \emph{contextual explanation networks} ({\CENs})---a class of architectures that learn to predict by generating and utilizing intermediate, simplified probabilistic models.
Specifically, {\CENs} generate parameters for intermediate graphical models which are further used for prediction and play the role of explanations.
Contrary to the existing \emph{post-hoc} model-explanation tools, {\CENs} learn to predict and to explain simultaneously.
Our approach offers two major advantages: (i) for each prediction, valid, instance-specific explanation is generated with no computational overhead and (ii) prediction via explanation acts as a regularizer and boosts performance in data-scarce settings.
We analyze the proposed framework theoretically and experimentally.
Our results on image and text classification and survival analysis tasks demonstrate that {\CENs} are not only competitive with the state-of-the-art methods but also offer additional insights behind each prediction, that can be valuable for decision support.
We also show that while post-hoc methods may produce misleading explanations in certain cases, {\CENs} are consistent and allow to detect such cases systematically.
\end{abstract}

\section{Introduction}\label{sec:introduction}

Model interpretability is a long-standing problem in machine learning that has become quite acute with the accelerating pace of the widespread adoption of complex predictive algorithms.
While high performance often supports our belief in the predictive capabilities of a system, perturbation analysis reveals that black-box models can be easily broken in an unintuitive and unexpected manner \citep{szegedy2013intriguing,nguyen2015deep}.
Therefore, for a machine learning system to be used in a social context (\eg, in healthcare) it is imperative to provide sound reasoning for each prediction or decision it makes.

To design such systems, we may restrict the class of models to only \emph{human-intelligible} \citep{caruana2015intelligible}.
However, such an approach is often limiting in modern practical settings.
Alternatively, we may fit a complex model and explain its predictions \emph{post-hoc}, \eg, by searching for linear local approximations of the decision boundary~\citep{ribeiro2016trust}.
While such methods achieve their goal, explanations are generated \emph{a posteriori} require additional computation per data instance and, most importantly, are never the basis for the predictions made in the first place, which may lead to erroneous interpretations, as we show in this paper, or even be exploited~\citep{dombrowski2019explanations, lakkaraju2019fool}.

\begin{figure}[t]
    \centering
    \includegraphics[width=0.60\textwidth]{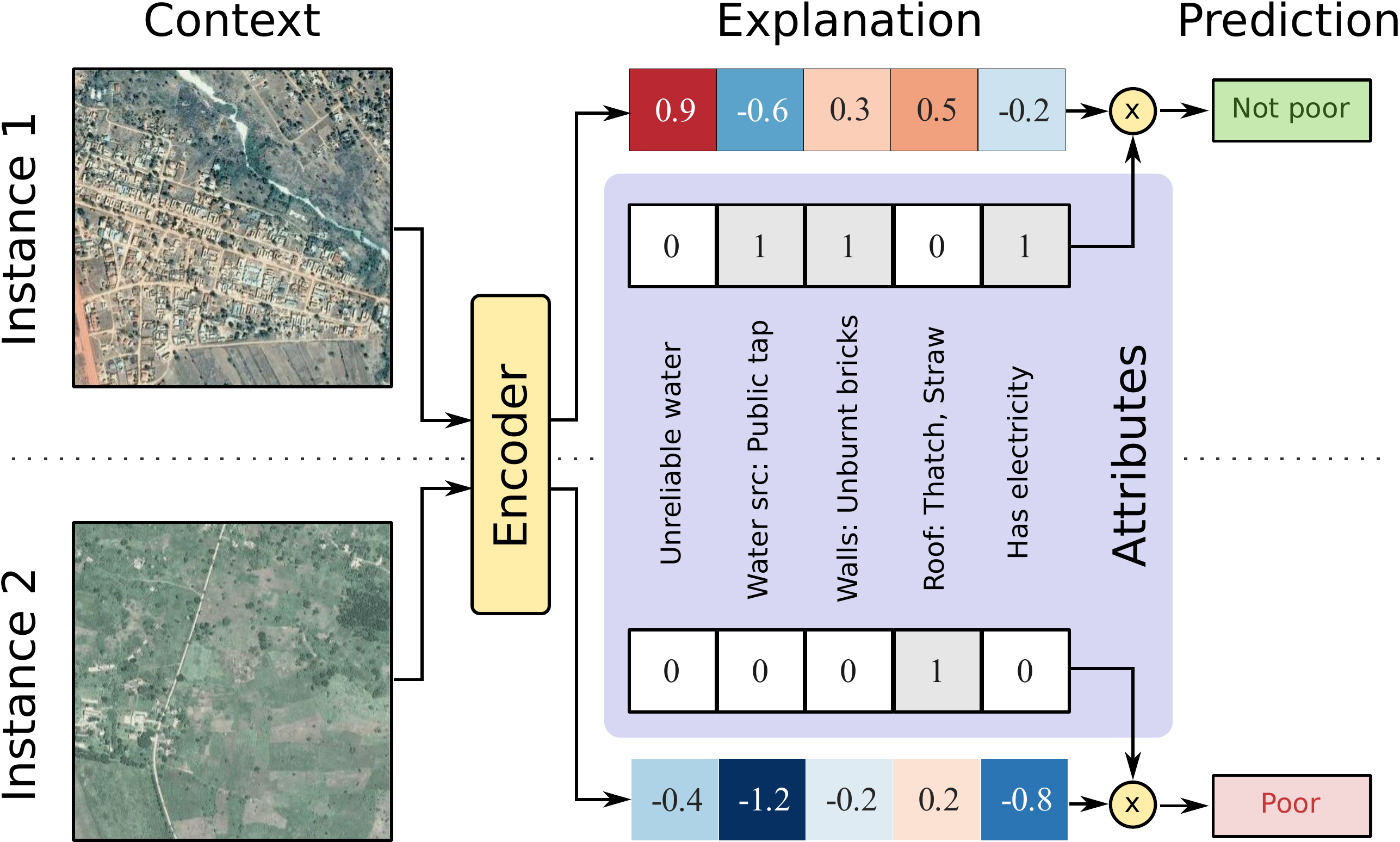}%
    \caption{%
        High-level functionality of {\CENs}:
        The context is represented by satellite imagery and used to generate instance-specific linear models (explanations).
        The latter act on a set of interpretable attributes from regional survey data and produce predictions.%
    }
    \label{fig:illustration}
\end{figure}

Explanation is a fundamental part of the human learning and decision process~\citep{lombrozo2006structure}.
Inspired by this fact, we introduce \emph{contextual explanation networks} ({\CENs})---a class of architectures that learn to predict and to explain jointly, alleviating the drawbacks of the post-hoc methods.
To make a prediction, {\CENs} operate as follows (Figure~\ref{fig:illustration}).
First, they process a subset of inputs and generate parameters for a simple probabilistic model (\eg, sparse linear model) which is regarded interpretable by a domain expert.
Then, the generated model is applied to another subset of inputs and produces a prediction.
To motivate such an architecture, we consider the following example.

\paragraph{A motivating illustration.}
One of the tasks we consider in this paper is classification of households into poor and not poor having access to satellite imagery and categorical data from surveys~\citep{jean2016combining}.
If a human were to solve this task, to make predictions, they might assign weights to features in the categorical data and explain their predictions in terms of the most relevant variables (\ie, come up with a linear model).
Moreover, depending on the type of the area (as seen from the imagery), they might select slightly different weights for different areas (\emph{e.g.}, when features indicative of poverty are different for urban, rural, and other types of areas).

The {\CEN} architecture given in Figure~\ref{fig:illustration} imitates this process by making predictions using sparse linear models applied to interpretable categorical features.
The weights of the linear models are contextual, generated by a learned encoder that maps images (the context) to the weight vectors.
The learned encoder is sensitive to the infrastructure presented in the input images and generates different linear models for urban and rural areas.
The generated models not only are used for prediction but also play the role of explanations and can encode arbitrary prior knowledge.
{\CENs} can represent complex model classes by using powerful encoders.
At the same time, by offsetting complexity into the encoding process, we achieve simplicity of explanations and can interpret predictions in terms the variables of interest.

The proposed architecture opens a number of questions:
What are the fundamental advantages and limitations of {\CEN}?
How much of the performance should be attributed to the context encoder and how much to the explanations?
Are there any degenerate cases and do they happen in practice?
Finally, how do \CEN-generated explanations compare to alternatives, \eg, produced with \LIME~\citep{ribeiro2016trust}?
In the rest of this paper, we formalize our intuitions and answer these questions theoretically and experimentally.

\subsection{Contributions}\label{sec:contributions}

The main four contributions of this paper are as follows:
\begin{itemize}[noitemsep,topsep=6pt,parsep=2pt,leftmargin=2em]
    \item[(i)] We formally define {\CENs} as a class of probabilistic models, consider special cases, and derive learning and inference algorithms for scalar and structured outputs.

    \item[(ii)] We design {\CENs} in the form of new deep learning architectures trainable end-to-end for prediction and survival analysis tasks.

    \item[(iii)] Empirically, we demonstrate the value of learning with explanations for both prediction and model diagnostics.
    Moreover, we find that explanations can act as a regularizer and result in improved sample efficiency.

    \item[(iv)] We also show that noisy features can render post-hoc explanations inconsistent and misleading, and how {\CENs} can help to detect and avoid such situations.

\end{itemize}

\noindent Our code is available at \url{https://github.com/alshedivat/cen}.

\subsection{Organization}
\label{sec:organization}

The paper is organized as follows.
Section~\ref{sec:background} presents the notation and some background on post-hoc interpretability methods.
In Sections \ref{sec:CEN}, we introduce the general {\CEN} framework, describe specific implementations, learning, and inference.
In Section~\ref{sec:related_work}, we overview broadly related work.
In Section~\ref{sec:analysis}, we discuss and analyze properties of {\CEN} theoretically.
Section~\ref{sec:case-studies} presents a number of case studies:
experimental results for scalar prediction tasks (Section~\ref{sec:applications-classification}),
an empirical analysis of consistency of linear explanations generated by {\CEN} vs. alternatives (Section~\ref{sec:applications-properties}),
and finally how {\CENs} with structured explanations can efficiently solve survival analysis tasks (Section~\ref{sec:applications-survival-analysis}).

\section{Background}
\label{sec:background}

We start by introducing the notation and reviewing post-hoc model explanations, with a focus on {\LIME}~\citep{ribeiro2016trust} as one of the most popular frameworks to date.

Given a collection of data where each instance is represented by inputs, $\cv \in \Cc$, and targets, $\yv \in \Yc$, our goal is to learn an accurate predictive model, $f: \Cc \mapsto \Yc$.
To explain predictions, we can assume that each data point has another set of features, $\xv \in \Xc$.
We construct explanations in the form of simpler models, $g_{\cv}: \Xc \mapsto \Yc$, so that they are consistent with the original model in the neighborhood of the corresponding data instance, \ie, $g_\cv(\xv) = f(\cv)$.
While the original inputs, $\cv$, can be of complex, low-level, unstructured data types (\eg, text, image pixels, sensory inputs), we assume that $\xv$ are high-level, meaningful variables (\eg, categorical features).
In the post-hoc explanation literature, it is assumed that $\xv$ are \emph{derived} from $\cv$ and are often binary~\citep{lundberg2017shap} (\eg, $\cv$ can be images, while $\xv$ can be vectors of binary indicators over the corresponding super-pixels).
We consider a more general setting where $\cv$ and $\xv$ are not necessarily derived from each other.
Throughout the paper, we call $\cv$ the \emph{context} and $\xv$ the \emph{attributes} or \emph{variables of interest}.

\subsection*{Locally Interpretable Model-agnostic Explanations (LIME)}
\label{sec:lime}

Given a trained model, $f$, and a data instance with features $(\cv, \xv)$, LIME constructs an explanation, $g_{\cv}$, as follows:
\begin{equation}
    \label{eq:LIME-general}
    g_{\cv} = \argmin_{g \in \Gc} \Lc(f, g, \pi_{\cv}) + \Omega(g)
\end{equation}
where $\Lc(f, g, \pi_{\cv})$ is the loss that measures how well $g$ approximates $f$ in the neighborhood defined by the similarity kernel, $\pi_{\cv} : \Xc \mapsto \Rb_+$, in the space of attributes, $\Xc$, and $\Omega(g)$ is the penalty on the complexity of explanation.\footnote{\citet{ribeiro2016trust} argue that only simple models of low complexity (\eg, sufficiently sparse linear models) are human-interpretable and support that by human studies.}
Now more specifically, \citet{ribeiro2016trust} assume that $\Gc$ is the class of linear models, $g_{\cv}(\xv) := b_{\cv} + \wv_{\cv} \cdot \xv$, and define the loss and the similarity kernel as follows:
\begin{equation}
    \label{eq:LIME-specific}
    \Lc(f, g, \pi_{\cv}) := \sum_{\xv^\prime \in \Xc} \pi_{\cv}(\xv^\prime)\left(f(\cv^\prime) - g(\xv^\prime)\right)^2, \quad
    \pi_{\cv}(\xv^\prime) := \exp\left\{-D(\xv, \xv^\prime)^2 / \sigma^2\right\}
\end{equation}
where the data instance of interest is represented by $(\cv, \xv)$, $\xv^\prime$ and the corresponding $\cv^\prime$ are the perturbed features, $D(\xv, \xv^\prime)$ is some distance function, and $\sigma$ is the scale parameter of the kernel.
The regularizer, $\Omega(g)$, is often chosen to favor sparse explanations.

The model-agnostic property is the key advantage of LIME (and variations)---we can solve \eqref{eq:LIME-general} for any trained model, $f$, any class of explanations, $\Gc$, at any point of interest, $(\cv, \xv)$.
While elegant, predictive and explanatory models in this framework are learned independently and hence never affect each other.
In the next section, we propose a class of models that ties prediction and explanation together in a joint probabilistic framework.

\section{Contextual Explanation Networks}
\label{sec:CEN}

\begin{figure}[t]
\centering
\begin{subfigure}[b]{0.16\textwidth}
    \begin{tikzpicture}[
        inner/.style={draw, fill=yellow!20, thin, inner sep=2pt},
    ]
        \tikzstyle{latent} = [circle,fill=white,draw=black,inner sep=1pt,
        minimum size=17pt, font=\fontsize{9}{9}\selectfont, node distance=1]

        \node (C) [obs] {$\Cv$};
        \node (theta) [latent, above=20pt of C] {$\thetav$};
        \node (X_) [obs, transparent, right=15pt of C, yshift=6pt] {};
        \node (Y_) [obs, transparent, above=15pt of X_, xshift=2pt] {};
        \node (theta_)  [latent, transparent, above=25pt of C] {};

        \node (w) [const, above=5pt of C, xshift=-7pt] {$\wv$};

        \plate[inner sep=8pt, thick] {CEN} {(C)(theta_)(Y_)} {N};
        \plate[inner] {CRF} {(X_)(Y_)} { };

        \node (X1) [obs, right=16pt of C]  {$\Xv$};
        \node (Y1) [latent, above=20pt of X1] {$\Yv$};

        \edge {C} {theta};

        \factor[above=5pt of X1]{X1-Y1} {} {} {};
        \factoredge {X1} {X1-Y1} {Y1};
        \edge[bend right=25, dashed] {theta} {X1-Y1};
    \end{tikzpicture}
    \caption{}\label{fig:CEN-scalar}
\end{subfigure}
\hfil
\begin{subfigure}[b]{0.29\textwidth}
    \begin{tikzpicture}[
        inner/.style={draw, fill=yellow!20, thin, inner sep=2pt},
    ]
        \tikzstyle{latent} = [circle,fill=white,draw=black,inner sep=1pt,
        minimum size=17pt, font=\fontsize{9}{9}\selectfont, node distance=1]

        \node (C) [obs] {$\Cv$};
        \node (theta) [latent, above=20pt of C] {$\thetav$};
        \node (X_) [obs, transparent, right=15pt of C, yshift=6pt] {};
        \node (Y_) [obs, transparent, above=15pt of X_, xshift=57pt] {};
        \node (theta_)  [latent, transparent, above=25pt of C] {};

        \node (w) [const, above=5pt of C, xshift=-7pt] {$\wv$};

        \plate[inner sep=8pt, thick] {CEN} {(C)(theta_)(Y_)} {N};
        \plate[inner] {CRF} {(X_)(Y_)} { };

        \node (X1) [obs, right=16pt of C]  {$\Xv_1$};
        \node (Y1) [latent, above=20pt of X1] {$\Yv_1$};
        \node (X2) [obs, right=10pt of X1] {$\Xv_2$};
        \node (Y2) [latent, right=10pt of Y1] {$\Yv_2$};
        \node (X3) [obs, right=10pt of X2] {$\Xv_3$};
        \node (Y3) [latent, right=10pt of Y2] {$\Yv_3$};

        \edge {C} {theta};

        \factor[above=5pt of X1]{X1-Y1} {} {} {};
        \factoredge {X1} {X1-Y1} {Y1};
        \edge[bend right=25, dashed] {theta} {X1-Y1};

        \factor[above=5pt of X2]{X2-Y2} {} {} {};
        \factoredge {X2} {X2-Y2} {Y2};
        \edge[bend right=18, dashed] {theta} {X2-Y2};

        \factor[above=5pt of X3]{X3-Y3} {} {} {};
        \factoredge {X3} {X3-Y3} {Y3};
        \edge[bend right=12, dashed] {theta} {X3-Y3};

        \factor[right=2pt of Y1]{Y1-Y2} {} {} {};
        \factor[right=2pt of Y2]{Y2-Y3} {} {} {};
        \edge[-] {Y1} {Y2};
        \edge[-] {Y2} {Y3};
        \edge[bend left=50, dashed] {theta} {Y1-Y2};
        \edge[bend left=50, dashed] {theta} {Y2-Y3};
    \end{tikzpicture}
    \caption{}\label{fig:CEN-structured}
\end{subfigure}
\hfil
\begin{subfigure}[b]{0.29\textwidth}
    \begin{tikzpicture}[
        inner/.style={draw, fill=yellow!20, thin, inner sep=2pt},
    ]
        \tikzstyle{latent} = [circle,fill=white,draw=black,inner sep=1pt,
        minimum size=17pt, font=\fontsize{9}{9}\selectfont, node distance=1]

        \node (C) [obs] {$\Cv$};
        \node (theta) [latent, above=20pt of C] {$\thetav$};
        \node (C_) [obs, transparent, xshift=-2pt] {};
        \node (X_) [obs, transparent, right=15pt of C, yshift=6pt] {};
        \node (Y_) [obs, transparent, above=15pt of X_, xshift=57pt] {};
        \node (theta_)  [latent, transparent, above=25pt of C] {};

        \node (p) [const, above=8pt of C, xshift=5pt] {$p$};
        \node (q) [const, above=8pt of C, xshift=-13pt] {$q$};

        \plate[inner sep=8pt, thick] {CEN} {(C_)(theta_)(Y_)} {N};
        \plate[inner] {CRF} {(X_)(Y_)} { };

        \node (X1) [obs, right=16pt of C]  {$\Xv_1$};
        \node (Y1) [latent, above=20pt of X1] {$\Yv_1$};
        \node (X2) [obs, right=10pt of X1] {$\Xv_2$};
        \node (Y2) [latent, right=10pt of Y1] {$\Yv_2$};
        \node (X3) [obs, right=10pt of X2] {$\Xv_3$};
        \node (Y3) [latent, right=10pt of Y2] {$\Yv_3$};

        \edge {theta} {C};
        \edge[bend left=30, dashed] {C} {theta};

        \factor[above=5pt of X1]{X1-Y1} {} {} {};
        \factoredge {X1} {X1-Y1} {Y1};
        \edge[bend right=25, dashed] {theta} {X1-Y1};

        \factor[above=5pt of X2]{X2-Y2} {} {} {};
        \factoredge {X2} {X2-Y2} {Y2};
        \edge[bend right=18, dashed] {theta} {X2-Y2};

        \factor[above=5pt of X3]{X3-Y3} {} {} {};
        \factoredge {X3} {X3-Y3} {Y3};
        \edge[bend right=12, dashed] {theta} {X3-Y3};

        \factor[right=2pt of Y1]{Y1-Y2} {} {} {};
        \factor[right=2pt of Y2]{Y2-Y3} {} {} {};
        \edge[-] {Y1} {Y2};
        \edge[-] {Y2} {Y3};
        \edge[bend left=50, dashed] {theta} {Y1-Y2};
        \edge[bend left=50, dashed] {theta} {Y2-Y3};
    \end{tikzpicture}
    \caption{}\label{fig:VCEN}
\end{subfigure}%
\caption{%
(a) A graphical model for {\CEN} with a context encoder parameterized by $\wv$ and linear explanations.
(b) A graphical model for {\CEN} with context encoder and CRF-based explanations.
The model is parameterized by $\wv$.
(c) A graphical model for {\CEN} with context autoencoding via the inference, $q$, and generator, $p$, networks and CRF-based explanations.%
}
\label{fig:PGM}
\end{figure}

We consider the same problem of learning from a collection of data represented by context variables, $\cv \in \Cc$, attributes, $\xv \in \Xc$, and targets, $\yv \in \Yc$.
We denote the corresponding random variables by capital letters, $\Cv$, $\Xv$, and $\Yv$, respectively.
Our goal is to learn a model, $\prob[\wv]{\Yv \mid \xv, \cv}$, parametrized by $\wv$ that can predict $\yv$ from $\xv$ and $\cv$.
We define contextual explanation networks as probabilistic models that assume the following form (Figure~\ref{fig:PGM}):\footnote{%
While we focus on predictive modeling, {\CENs} are applicable beyond that.
For example, instead of learning a predictive distribution, $\prob[\wv]{\Yv \mid \xv, \cv}$, we may want to learn a contextual marginal distribution, $\prob[\wv]{\Xv \mid \cv}$, over a set random variables $\Xv$, where $\prob{\Xv \mid \thetav}$ is defined by an arbitrary graphical model.}
\begin{equation}
    \label{eq:CEN-general}
    \yv \sim \prob{\Yv \mid \xv, \thetav}, \quad
    \thetav \sim \prob[\wv]{\thetav \mid \cv}, \quad
    \prob[\wv]{\Yv \mid \xv, \cv} = \int \prob{\Yv \mid \xv, \thetav} \prob[\wv]{\thetav \mid \cv} d\thetav
\end{equation}
where $\prob{\Yv \mid \xv, \thetav}$ is a predictor parametrized by $\thetav$.
We call such predictors \emph{explanations}, since they explicitly relate interpretable attributes, $\xv$, to the targets, $\yv$.
For example, when the targets are scalar and binary, explanations may take the form of linear logistic models;
when the targets are more complex, dependencies between the components of $\yv$ can be represented by a graphical model, \eg, \emph{conditional random field}~\citep{lafferty2001crf}.

{\CENs} assume that each explanation is context-specific: $\prob[\wv]{\thetav \mid \cv}$ defines a conditional probability of an explanation $\thetav$ being valid in the context $\cv$.
To make a prediction, we marginalize out $\thetav$.
To interpret a prediction, $\hat \yv$, for a given data instance, $(\xv, \cv)$, we infer the posterior, $\prob[\wv]{\thetav \mid \hat \yv, \xv, \cv}$.
The main advantage of this approach is to allow modeling conditional probabilities, $\prob[\wv]{\thetav \mid \cv}$, in a black-box fashion while keeping the class of explanations, $\prob{\Yv \mid \xv, \thetav}$, simple and interpretable.
For instance, when the context is given as raw text, we may choose $\prob[\wv]{\thetav \mid \cv}$ to be represented with a recurrent neural network, while $\prob{\Yv \mid \xv, \thetav}$ be in the class of linear models.

Implications of these assumptions are discussed in Section~\ref{sec:analysis}.
Here, we continue with a discussion of a number of practical choices for $\prob[\wv]{\thetav \mid \cv}$ and $\prob{\Yv \mid \xv, \thetav}$ (Table~\ref{tab:cen-components}).

\begin{table}[t!]
\caption{\small%
Different types of encoders and explanations used in {\CEN}.}
\label{tab:cen-components}
\smallskip
\centering
\scriptsize
\def\arraystretch{1.2}
\begin{tabular}[t]{@{}l|r@{}}
    \toprule
    \textbf{Encoder}    & \textbf{Parameter distribution, $\prob{\thetav \mid \cv}$}  \\
    \midrule
    Deterministic       & $\delta\left(\phi(\cv), \thetav\right)$ where $\phi(\cv)$ is arbitrary \\
    Constrained         & $\delta\left(\phi(\cv), \thetav\right)$ where $\phi(\cv) := \alphav(\cv)^\top \Dv$ \\
    MoE                 & $\sum_{k=1}^K \prob{k \mid \cv} \delta(\thetav, \thetav_k)$ \\
    \bottomrule
\end{tabular}
~
\begin{tabular}[t]{@{}l|>{\raggedleft\arraybackslash}p{5.1cm}@{}}
    \toprule
    \textbf{Explanation} & \textbf{Predictive distribution, $\prob{\yv \mid \xv, \thetav}$}  \\
    \midrule
    Linear                & $\softmax\bb{\thetav^\top \xv}$ \\
    Structured            & $\propto \exp\cbb{-E_{\thetav}(\xv, \yv)}$ where $E_{\thetav}(\cdot, \cdot)$ is some energy function, linear in $\thetav$ \\[0.575ex]
    \bottomrule
\end{tabular}
\end{table}

\subsection{Context Encoders}\label{sec:encoders}

In practice, we represent $\prob[\wv]{\thetav \mid \cv}$ with a neural network that encodes the context into the parameter space of the explanation models.
There are two simple ways to construct an encoder, which we consider below.

\subsubsection{Deterministic encoding}
\label{sec:det-enc}

Let $\prob[\wv]{\thetav \mid \cv} := \delta\left(\phi_{\wv}(\cv), \thetav\right)$, where $\delta(\cdot, \cdot)$ is a delta-function and $\phi_{\wv}(\cdot)$ is the network that maps $\cv$ to $\thetav$.
Collapsing the conditional distribution to a delta-function makes $\thetav$ depend deterministically on $\cv$ and results into the following conditional likelihood:
\begin{equation}
    \label{eq:CEN-det}
    \prob{\yv \mid \xv, \cv; \wv}
    = \int \prob{\yv \mid \xv, \thetav} \delta\left(\phi_{\wv}(\cv), \thetav\right)d\thetav = \prob{\yv \mid \xv, \thetav = \phi_{\wv}(\cv)}
\end{equation}
Modeling $\prob[\wv]{\thetav \mid \cv}$ with a delta-function is convenient since the posterior, $\prob[\wv]{\thetav \mid \yv, \xv, \cv} \propto \prob{\yv \mid \xv, \thetav} \delta\left(\phi_{\wv}(\cv), \thetav\right)$ also collapses to $\thetav^\star = \phi_{\wv}(\cv)$, hence the inference is done via a single forward pass and the posterior can be regularized by imposing $L_1$ or $L_2$ losses on $\phi_{\wv}(\cv)$.

\subsubsection{Constrained deterministic encoding}
\label{sec:constrained-det-enc}

The downside of deterministic encoding is the lack of constraints on the generated explanations.
There are multiple reasons why this might be an issue:
(i) when the context encoder is unrestricted, it might generate unstable, overfitted local models,
(ii) when we want to reason about the patterns in the data as a whole, local explanations are not enough.
To address these issues, we constrain the space of explanations by introducing a context-independent, \emph{global dictionary}, $\Dv := \{\thetav_k\}_{k=1}^K$, where each atom, $\thetav_k$, is sparse.
The encoder generates context-specific explanations using \emph{soft attention} over the dictionary
(Figure~\ref{fig:explanation-net}):
\begin{equation}
    \label{eq:CEN-dict-enc}
    \begin{aligned}
        \phi_{\wv, \Dv}(\cv) := \sum_{k=1}^K \prob[\wv]{k \mid \cv} \thetav_k = \alphav_\wv(\cv)^\top \Dv, \quad
        \sum_{k=1}^K \alphav_\wv^{(k)}(\cv) = 1, \quad
        \forall k: \alphav_\wv^{(k)}(\cv) \geq 0,
    \end{aligned}
\end{equation}
where $\alphav_\wv(\cv)$ is the attention over the dictionary produced by the encoder.
Attention-based construction of explanations using a global dictionary (i) forces the encoder to produce models shared across different contexts, (ii) allows us to interpret the learned dictionary atoms as global ``explanation modes.''
Again, since $\prob[\wv]{\thetav \mid \cv}$ is a delta-distribution, the likelihood is the same as given in~\eqref{eq:CEN-det} and inference is conveniently done via a forward pass.

\noindent
The two proposed context encoders represent $\prob{\thetav \mid \cv}$ with delta-functions, which simplifies learning, inference, and interpretation of the model, and are used in our experiments.
Other ways to represent $\prob{\thetav \mid \cv}$ include: (i) using a mixture of delta-functions (which makes {\CEN} function similar to a mixture-of-experts model and further discussed in Section~\ref{sec:CEN-special-calses}), or (ii) using variational autoencoding.
We leave more complex approaches to future research.

\begin{figure}[t!]
\centering
\includegraphics[width=\textwidth]{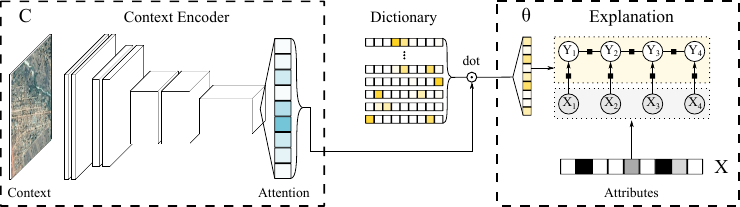}
\caption{%
An example of a CEN architecture.
In this example, the context is represented by an image and transformed by a convnet encoder into an attention vector, which is used to softly select parameters for a contextual linear probabilistic model.}
\label{fig:explanation-net}
\end{figure}

\subsection{Explanations}
\label{sec:explanations}

In this paper, we consider two types of explanations: \emph{linear} that can be used for regression or classification and \emph{structured} that are suitable for structured prediction.

\subsubsection{Linear Explanations}
\label{sec:linear-explanations}

In case of classification, {\CENs} with linear explanations assume the following $\prob{\Yv \mid \xv, \thetav}$:
\begin{equation}
    \label{eq:linear-explanation}
    \prob{\Yv = i \mid \xv, \thetav} := \frac{\exp\cbb{(\Wv \xv + \bv)_i}}{\sum_{j \in \Yc}\exp\cbb{(\Wv \xv + \bv)_j}},
\end{equation}
where $\thetav := (\Wv, \bv)$ and $i, j$ index classes in $\Yc$.
If $\xv$ is $d$-dimensional and we are given $m$-class classification problem, then $\Wv \in \Rb^{m \times d}$ and $\bv \in \Rb^{m}$.
The case of regression is similar.

In Section~\ref{sec:CEN-vs-LIME}, we show that if we apply {\LIME} to interpret {\CEN} with linear explanations, the local linear models inferred by {\LIME} are guaranteed to recover the original {\CEN}-generated explanations.
In other words, linear explanations generated by {\CEN} have similar properties, \eg, local faithfulness  \citep{ribeiro2016trust}.
However, we emphasize the key difference between {\LIME} and {\CEN}: the former regards explanation as a post-processing step (done after training) while the latter integrates explanation into the learning process.

\subsubsection{Structured Explanations}
\label{sec:structured-explanations}

While post-hoc methods, such as {\LIME}, can easily generate local linear explanations for scalar outputs, using such methods for structured outputs is non-trivial.
At the same time, {\CENs} let us represent $\prob{\Yv \mid \xv, \thetav}$ using arbitrary graphical models.
To be concrete, we consider the case where the targets are binary vectors, $\yv \in \{0, 1\}^{m}$, and explanations are represented by  CRFs~\citep{lafferty2001crf} with linear potential functions.

The predictive distribution $\prob{\Yv \mid \xv, \thetav}$ represented by a CRF takes the following form:
\begin{equation}
    \label{eq:generic-CRF}
    \prob{\Yv \mid \xv, \thetav} := \frac{1}{Z_{\thetav}(\xv)} \prod_{a \in \Ac} \Psi_a (\yv_a, \xv_a; \thetav)
\end{equation}
where $Z_{\thetav}(\xv)$ is the normalizing constant and $a \in \Ac$ indexes subsets of variables in $\xv$ and $\yv$ that correspond to the factors:
\begin{equation}
    \label{eq:generic-CRF-linear-factor}
    \Psi_a (\yv_a, \xv_a; \thetav) := \exp\cbb{\sum_{k=1}^K \thetav_{ak} f_{ak}(\xv_a, \yv_a)},
\end{equation}
where $\{f_{ak}(\xv_a, \yv_a)\}_{k=1}^K$ is a collection of feature vectors associated with factor $\Psi_a (\yv_a, \xv_a; \thetav)$.
For interpretability purposes, we are interested in CRFs with feature vectors that are linear or bi-linear in $\xv$ and $\yv$.
There is a variety of application-specific CRF models developed in the literature \citep[\eg, see][]{sutton2012crftutorial}.
While in the following section, we discuss learning and inference more generally, in Section~\ref{sec:applications-survival-analysis} we develop a {\CEN} model with linear chain CRF explanations for solving survival analysis tasks.

\subsection{Inference and Learning}
\label{sec:inference-and-learning}

{\CENs} with deterministic encoders are convenient since the posterior, $\prob{\thetav \mid \yv, \xv, \cv}$, collapses to a point $\thetav^\star = \phi(\cv)$.
Inference in such models is done in two steps: (1) first, compute $\theta^\star$, then (2) using $\theta^\star$ as parameters, compute the predictive distribution, $\prob{\yv \mid \xv, \thetav^\star}$.
To train the model, we can optimize its log likelihood on the training data.
To make a prediction using a trained {\CEN} model, we infer $\hat\yv = \argmax_{\yv} \prob{\yv \mid \xv, \thetav^\star}$.
For classification (and regression) computing predictions is straightforward.
Below, we show how to compute predictions for {\CEN} with CRF-based explanations.

\subsubsection{Inference for {\CEN} with Structured Explanations}
\label{sec:inference-CRF}

Given a CRF model \eqref{eq:generic-CRF}, we can make a prediction $\hat\yv$ for inputs $(\cv, \xv)$ by performing inference:
\begin{equation}
    \label{eq:crf-prediction}
    \hat\yv (\thetav^\star) = \argmax_{\yv \in \Yc} \prob{\yv \mid \xv, \thetav^\star} = \argmax_{\yv \in \Yc} \sum_{a=1}^A \sum_{k=1}^K \thetav^\star_{ak} f_{ak}(\xv_a, \yv_a)
\end{equation}
Depending on the structure of the CRF model (\eg, linear chain, tree-structured model, etc.), we could use different inference algorithms, such the Viterbi algorithm or variational inference, in order to solve \eqref{eq:crf-prediction} \citep[see Ch. 4,][for an overview and examples]{sutton2012crftutorial}.
The key point here is that having $\prob{\yv \mid \xv, \thetav^\star}$ or $\hat\yv(\thetav^\star)$ computable in an (approximate) functional form, lets us construct different objective functions, \eg, $\Lc(\{\yv_i, \xv_i, \cv_i\}_{i=1}^N, \wv)$, and learn parameters of the {\CEN} model end-to-end using gradient methods, which are standard in deep learning.
In Section \ref{sec:applications-survival-analysis}, we construct a specific objective function for survival analysis.

\subsubsection{Learning via Likelihood Maximization and Posterior Regularization}
\label{sec:learning-mle-posterior-reg}

In this paper, we use the negative log likelihood (NLL) objective for learning {\CEN} models:
\begin{equation}
    \label{eq:CEN-loglik}
    \Lc(\{\yv_i, \xv_i, \cv_i\}_{i=1}^N, \wv) := \frac{1}{N} \sum_{i=1}^N \log \prob{\yv_i \mid \xv_i, \thetav = \phi_{\wv}(\cv_i)}
\end{equation}
$L_1$, $L_2$, and other types of regularization imposed on $\thetav$ can be added to the objective \eqref{eq:CEN-loglik}.
Such regularizers, as well as the dictionary constraint introduced in Section \ref{sec:constrained-det-enc}, can be seen as a form of \emph{posterior regularization} \citep{ganchev2010posterior} and are important for achieving the best performance and interpretability.

\section{Related work}\label{sec:related_work}

Contextual explanation networks combine multiple threads of research that we discuss below.

\subsection{Deep graphical models}
The idea of combining deep networks with graphical models has been explored extensively.
Notable threads of recent work include: replacing task-specific feature engineering with task-agnostic general representations (or embeddings) discovered by deep networks \citep{collobert2011natural,rudolph2016efe,rudolph2017structured}, representing energy functions~\citep{belanger2016structured} and potential functions~\citep{jaderberg2014deep} with neural networks, encoding learnable structure into Gaussian processes with deep and recurrent networks~\citep{wilson2016dkl,alshedivat2017srk}, or learning state-space models on top of nonlinear embeddings of the observations~\citep{gao2016lds,johnson2016composing,krishnan2017structured}.
The goal of this body of work is to design principled structured probabilistic models that enjoy the flexibility of deep learning.
The key difference between {\CENs} and the previous art is that the latter directly integrate neural networks \emph{into} graphical models as components (embeddings, potential functions, etc.).
While flexible, the resulting \emph{deep graphical models} could no longer be interpreted in terms of crisp relationships between specific variables of interest.\footnote{%
To see why this is the case, consider graphical models given in Figure~\ref{fig:PGM} which relate input, $\Xv$, and target, $\Yv$, variables using linear pairwise potential functions.
Linearity allows to directly interpret parameters of the model as associations between the variables.
Substituting inputs, $\Xv$, with deep representations or defining potentials via neural networks would result in a more powerful model.
However, precise relationships between the variables will be no longer directly readable from the model parameters.}
{\CENs}, on the other hand, preserve the simplicity of the explanations and shift complexity into conditioning on the context.

\subsection{Context representation}
Generating probabilistic models after conditioning on a context is the key aspect of our approach.
Previous work on context-specific graphical models represented contexts with a discrete variable that enumerated a finite number of possible contexts~\citep[][Ch.~5.3]{koller2009pgm}.
{\CENs}, on the other hand, are designed to handle arbitrary complex context representations.
Context-specific approaches are widely used in language modeling where the context is typically represented with trainable embeddings~\citep{rudolph2016efe}.
We also note that few-shot learning explicitly considers a setup where the context is represented by a small set of labeled examples~\citep{santoro2016meta,garnelo2018conditional}.

\subsection{Meta-learning}
The way {\CENs} operate resembles the meta-learning setup.
In meta-learning, the goal is to learn a meta-model which, given a task, can produce another model capable of solving the task~\citep{thrun1998learning}.
The representation of the task can be seen as the context while produced task-specific models are similar to {\CEN}-generated explanations.
Meta-training a deep network that generates parameters for another network has been successfully used for zero-shot~\citep{lei2015predicting,changpinyo2016synthesized} and few-shot~\citep{edwards2016towards,vinyals2016matching} learning, cold-start recommendations~\citep{vartak2017meta}, and a few other scenarios~\citep{bertinetto2016learning,de2016dynamic,ha2016hypernetworks}, but is not suitable for interpretability purposes.
In contrast, {\CENs} generate parameters for models from a restricted class (potentially, based on domain knowledge) and use the attention mechanism~\citep{xu2015show} to further improve interpretability.

\subsection{Model interpretability}
While there are many ways to define interpretability~\citep{lipton2016mythos,doshi2017towards},
our discussion focuses on explanations defined as simple models that locally approximate behavior of a complex model.
A few methods that allow to construct such explanations in a \emph{post-hoc} manner have been proposed recently~\citep{ribeiro2016trust,shrikumar2017learning,lundberg2017shap}, some of which we review in the next section.
In contrast, {\CENs} learn to generate such explanations along with predictions.
There are multiple other complementary approaches to interpretability ranging from a variety of visualization techniques~\citep{simonyan2014vgg,yosinski2015understanding,mahendran2015understanding,karpathy2015visualizing}, to explanations by example~\citep{caruana1999case,kim2014bayesian,kim2016examples,koh2017understanding}, to natural language rationales~\citep{lei2016rationalizing}.
Finally, our framework encompasses the so-called \emph{personalized} or \emph{instance-specific} models that learn to partition the space of inputs and fit local sub-models~\citep{joseph2012local}.

\section{Analysis}
\label{sec:analysis}

In this section, we dive into the analysis of {\CEN} as a class of probabilistic models.
First, we mention special cases of {\CEN} model class known in the literature, such as mixture-of-experts \citep{jacobs1991adaptive} and varying-coefficient models \citep{hastie1993vcm}.
Then, we discuss implications of the {\CEN} structure, a potential failure mode of {\CEN} with deterministic encoders and how to rectify it using conditional entropy regularization, and finally analyze relationship between {\CEN}-generated and post-hoc explanations.
Readers who are mostly interested in empirical properties and applications may skip this section.

\subsection{Special Cases of {\CEN}}
\label{sec:CEN-special-calses}

\paragraph{Mixtures of Experts.}
So far, we have represented $\prob[\wv]{\thetav \mid \cv}$ by a delta-function centered around the output of the encoder.
It is natural to extend $\prob[\wv]{\thetav \mid \cv}$ to a mixture of delta-distributions, in which case {\CENs} recover the mixtures-of-experts model~\citep[MoE,][]{jacobs1991adaptive}.
To see this, let $\Dv$ be a dictionary of experts, and define $\prob[\wv,\Dv]{\thetav \mid \cv} := \sum_{k=1}^K \prob[\wv]{k \mid \cv} \delta(\thetav, \thetav_k)$.
The log-likelihood for {\CEN} in such case is the same as for MoE:

\vspace{1.3ex}
\begin{minipage}{0.37\textwidth}
    \flushleft
    \includegraphics[width=\textwidth]{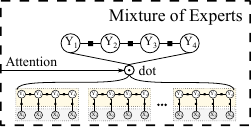}
\end{minipage}%
\begin{minipage}{0.585\textwidth}
    \begin{equation}
        \label{eq:MoE-lik}
        \begin{aligned}
            \MoveEqLeft \log \prob[\wv, \Dv]{\yv_i \mid \xv_i, \cv_i} \\
            & = \log \int \prob{\yv_i | \xv_i, \thetav} \prob[\wv,\Dv]{\thetav | \cv_i} d\thetav \\
            & = \log \sum_{k=1}^K \prob[\wv]{k | \cv_i} \prob{\yv_i | \xv_i, \thetav_k}
        \end{aligned}
    \end{equation}
\end{minipage}
\vspace{1.3ex}

As in Section \ref{sec:constrained-det-enc}, \noindent $\prob[\wv]{k \mid \Cv}$ is represented with a soft attention over the dictionary, $\Dv$, which is now used to combine predictions of the experts with parameters $\{\thetav_k\}_{k=1}^K$ instead of constructing a single context-specific explanation.
Learning of MoE models is done either by optimizing the likelihood or via expectation maximization (EM).
Note another difference between {\CEN} and MoE is that the latter assumed that $\cv \equiv \xv$ and that both $\prob{\yv \mid \xv, \thetav}$ and $\prob{\thetav \mid \cv}$ can be represented by arbitrary complex model classes, ignoring interpretability.

\vspace{-1ex}
\paragraph{Varying-Coefficient Models.}
In statistics, there is a class of (generalized) regression models, called varying-coefficient models \citep[VCMs,][]{hastie1993vcm}, in which coefficients of linear models are allowed to be smooth deterministic functions of other variables (called the ``effect modifiers'').
Interestingly, the motivation for VCM was to increase flexibility of linear regression.
In the original work, \citet{hastie1993vcm} focused on simple dynamic (temporal) linear models and on nonparametric estimation of the varying coefficients, where each coefficient depended on a different effect variable.
CEN generalizes VCM by (i) allowing parameters, $\thetav$, to be random variables that depend on the context, $\cv$, nondeterministically, (ii) letting the ``effect modifiers'' to be high-dimensional context variables (not just scalars), and (iii) modeling the effects using deep neural networks.
In other words, CEN alleviates the limitations of VCM by leveraging the probabilistic graphical models and deep learning frameworks.

\subsection{Implications of the Structure of {\CENs}}
\label{sec:CEN-structure-implications}

{\CENs} represent the predictive distribution in a compound form \citep{lindsay1995mixture}:
\begin{equation*}
    \prob{\Yv \mid \Xv, \Cv} = \int \prob{\Yv \mid \Xv, \thetav} \prob{\thetav \mid \Cv} d\thetav
\end{equation*}
and we assume that the data is generated according to $\Yv \sim \prob{\Yv \mid \Xv, \thetav}$, $\thetav \sim \prob{\thetav \mid \Cv}$.
We would like to understand:

\vspace{1.5ex}
\begin{blockquote}
    \emph{Can {\CEN} represent any conditional distribution, $\prob{\Yv \mid \Xv, \Cv}$, when the class of explanations is limited (\eg, to linear models)? If not, what are the limitations?}
\end{blockquote}
\vspace{1.5ex}

\noindent
Generally, {\CEN} can be seen as a mixture of predictors.
Such mixture models could be quite powerful as long as the mixing distribution, $\prob{\thetav \mid \Cv}$, is rich enough.
In fact, even a finite mixture exponential family regression models can approximate any smooth $d$-dimensional density at a rate $O(m^{-4/d})$ in the KL-distance~\citep{jiang1999hierarchical}.
This result suggests that representing the predictive distribution with contextual mixtures should not limit the representational power of the model.
However, there are two caveats:
\begin{itemize}[itemsep=0pt,parsep=1ex,topsep=1ex]
    \item[(i)] In practice, $\prob{\thetav \mid \Cv}$ is limited, since we represent it either with a delta-function, a finite mixture, or a simple distribution parametrized by a deep network.

    \item[(ii)] Classical predictive mixtures (including MoE) do not separate input features into two subsets, $\cv$ and $\xv$.
    We do this intentionally to produce explanations in terms of specific variables of interest that could be useful for interpretability or model diagnostics down the line.
    However, it could be the case that $\xv$ contains only some limited information about $\yv$, which could limit the predictive power of the full model.
\end{itemize}
To address point (i), we consider $\prob{\thetav \mid \cv}$ that fully factorizes over the dimensions of $\thetav$: $\prob{\thetav \mid \cv} = \prod_{j}\prob{\theta_j \mid \cv}$, and assume that explanations, $\prob{\Yv \mid \xv, \thetav}$, also factorize according to some underlying graph, $\Gc_{\Yv} = (\Vc_{\Yv}, \Ec_{\Yv})$.
The following proposition shows that in such case $\prob{\Yv \mid \xv, \cv}$ inherits the factorization properties of the explanation class.
\begin{proposition}
    \label{prop:CEN-conditional-independence}
    Let $\prob{\thetav \mid \cv} := \prod_{j}\prob{\theta_j \mid \cv}$ and let $\prob{\Yv \mid \xv, \thetav}$ factorize according to some graph $\Gc_{\Yv} = (\Vc_{\Yv}, \Ec_{\Yv})$.
    Then, $\prob{\Yv \mid \xv, \cv}$ defined by {\CEN} with $\prob{\thetav \mid \cv}$ encoder and $\prob{\Yv \mid \xv, \thetav}$ explanations also factorizes according to $\Gc$.
\end{proposition}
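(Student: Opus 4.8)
The plan is to use the compound representation \eqref{eq:CEN-general} together with the fact that, in the explanation model, each clique potential carries its own private block of parameters. First I would group the coordinates of $\thetav$ by the factor they enter: in \eqref{eq:generic-CRF-linear-factor} the coefficient $\thetav_{ak}$ appears only in $\Psi_a$, so the coordinates of $\thetav$ partition into disjoint blocks $\{\thetav_a\}_{a \in \Ac}$, one per clique $a$ of $\Gc_{\Yv}$. Under the hypothesis $\prob{\thetav \mid \cv} = \prod_j \prob{\theta_j \mid \cv}$ this upgrades to a block factorization $\prob{\thetav \mid \cv} = \prod_{a \in \Ac} \prob{\thetav_a \mid \cv}$ with $\prob{\thetav_a \mid \cv} := \prod_{j \in a}\prob{\theta_j \mid \cv}$; i.e., the parameter blocks of distinct factors are mutually independent given $\cv$.

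Then I would substitute the clique form \eqref{eq:generic-CRF} of $\prob{\Yv \mid \xv, \thetav}$ into \eqref{eq:CEN-general} and interchange the integral with the product: since the $a$-th factor of the integrand depends on $\thetav$ only through $\thetav_a$ and the mixing measure is a product over these blocks, Fubini gives $\prob{\Yv \mid \xv, \cv} = \prod_{a \in \Ac} \widetilde{\Psi}_a(\yv_a, \xv_a; \cv)$, where $\widetilde{\Psi}_a(\yv_a, \xv_a; \cv) := \int \Psi_a(\yv_a, \xv_a; \thetav_a)\,\prob{\thetav_a \mid \cv}\,d\thetav_a$. Each $\widetilde{\Psi}_a$ is a nonnegative function of $\yv$ supported on the clique $a$ of $\Gc_{\Yv}$, so $\prob{\Yv \mid \xv, \cv}$ is again a product of clique potentials of $\Gc_{\Yv}$, hence it factorizes according to $\Gc_{\Yv}$. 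To state the conclusion in the usual conditional-independence form, I would then appeal to Hammersley--Clifford (under positivity): whenever $S$ separates $A$ from $B$ in $\Gc_{\Yv}$, the above factorization yields $\Yv_A \perp \Yv_B \mid (\Yv_S, \xv, \cv)$.

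The step I expect to be the real obstacle is the normalizer $Z_{\thetav}(\xv)$: for a genuine undirected explanation it is a function of \emph{all} of $\thetav$ at once and does not split over cliques, so the ``push the integral inside'' step is not literally valid there. The honest resolutions are either (a) to carry the argument out for explanations whose factors are locally normalized (so there is no global $Z$ and the interchange above is immediate), or (b) to argue directly via the local Markov property --- for non-adjacent $i, k$ the conditionals $\prob{Y_i \mid \Yv_{\setminus\{i,k\}}, \xv, \thetav}$ and $\prob{Y_k \mid \Yv_{\setminus\{i,k\}}, \xv, \thetav}$ involve disjoint parameter blocks --- and then control the coupling that $Z_{\thetav}(\xv)$ reintroduces in the posterior over those blocks given the remaining $\Yv$. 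I would also flag that the block-disjointness of the parameters (each $\theta_j$ feeding a single clique) is essential: if one coordinate of $\thetav$ entered two cliques not contained in a common clique of $\Gc_{\Yv}$, marginalizing it would create a new edge. Finally, in the degenerate case $\prob{\thetav \mid \cv} = \delta\left(\phi_{\wv}(\cv), \thetav\right)$ used by the deterministic encoders the statement is immediate, since then $\prob{\Yv \mid \xv, \cv} = \prob{\Yv \mid \xv, \thetav = \phi_{\wv}(\cv)}$ is itself an explanation.
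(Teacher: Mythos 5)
Your proposal is correct in its core mechanism, and that mechanism is the same one the paper relies on: the coordinates of $\thetav$ split into disjoint per-factor blocks, the encoder distribution is a product over those blocks, and Fubini then distributes the integral $\int \prob{\Yv \mid \xv, \thetav}\,\prob{\thetav \mid \cv}\,d\thetav$ across the factors, so each mixed factor remains a local function of $(\yv_a, \xv_a, \cv)$. The difference is in which representation of ``factorizes according to $\Gc_{\Yv}$'' is used. You work with the unnormalized Gibbs form \eqref{eq:generic-CRF}, so you (i) must invoke Hammersley--Clifford at the end to convert the product of mixed potentials $\widetilde{\Psi}_a$ back into conditional independencies, and (ii) run into the global normalizer $Z_{\thetav}(\xv)$, which depends on all of $\thetav$ at once, couples the parameter blocks, and invalidates the naive interchange --- an obstacle you correctly identify as the crux. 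The paper's proof instead takes the factorization of the explanation to be a product of locally normalized conditionals, $\prob{\Yv \mid \Xv, \thetav} = \prod_{\av} \prob{\Yv_{\av} \mid \Yv_{\mathrm{MB}(\av)}, \Xv, \thetav_{\av}}$, each factor carrying its own disjoint parameter block; since every factor is already a normalized distribution over $\Yv_{\av}$, there is no global $Z$ and the interchange is immediate, yielding $\prod_{\av}\prob{\Yv_{\av}\mid\Yv_{\mathrm{MB}(\av)},\Xv,\Cv}$. In other words, the paper's argument is precisely your resolution (a). What your route buys is an honest accounting of why the claim is \emph{not} automatic for a globally normalized CRF --- mixing over $\thetav$ under a shared $Z_{\thetav}(\xv)$ does in general re-couple the cliques --- a subtlety the paper's one-line proof avoids only by its choice of representation (a product of Markov-blanket conditionals, which for undirected models is really the pseudo-likelihood and is stated as an assumption only inside the appendix proof). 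What the paper's route buys is brevity and a factor-wise conclusion that needs no appeal to Hammersley--Clifford. Your remaining observations --- that block-disjointness of the parameters is essential lest marginalization create new edges, and that the deterministic delta-encoder case is trivial --- are both correct and worth keeping.
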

\begin{proof}
    The statement directly follows from the definition of {\CEN}
    (see Appendix \ref{proof:CEN-conditional-independence}).
\end{proof}
\begin{remark}
    All encoders, $\prob{\thetav \mid \cv}$, considered in this paper, including delta functions and their mixtures, fully factorize over the dimensions of $\thetav$.
\end{remark}
\begin{remark}
    The proposition has no implications for the case of scalar targets, $\yv$.
    However, in case of structured prediction, regardless of how good the context encoder is, {\CEN} will strictly assume the same set of independencies as given by the explanation class, $\prob{\Yv \mid \xv, \thetav}$.
\end{remark}

\noindent
As indicated in point (ii), {\CENs} assume a fixed split of the input features into context, $\cv$, and variables of interest, $\xv$, which has interesting implications.
Ideally, we would like $\xv$ to be a good predictor of $\yv$ in any context $\cv$.
For instance, following our motivation example (see Figure~\ref{fig:illustration}), if $\cv$ distinguishes between urban and rural areas, $\xv$ must encode enough information for predicting poverty \emph{within} urban or rural neighborhoods.
However, since the variables of interest are often manually selected (\eg, by a domain expert) and limited, we may encounter the following (not mutually exclusive) situations:
\begin{itemize}[itemsep=0pt,parsep=1ex,topsep=1ex]
    \item[(a)] $\cv$ may happen to be a strong predictor of $\yv$ and already contain information available in $\xv$ (\eg, it is the case when $\xv$ is derived from $\cv$).
    \item[(b)] $\xv$ may happen to be a poor predictor of $\yv$, even within the context specified by $\cv$.
\end{itemize}
In both cases, {\CEN} may learn to ignore $\xv$, leading to essentially meaningless explanations.
In the next section, we show that, if (a) is the case, regularization can help eliminate such behavior.
Additionally, if (b) is the case, \ie, $\xv$ are bad features for predicting $\yv$ (and for seeking explanation in terms of these features), {\CEN} must indicate that.
It turns out that the accuracy of {\CEN} depends on the quality of $\xv$, as empirically shown in Section~\ref{sec:cen-consistency}.

\subsection{Conditional Entropy Regularization}
\label{sec:encoder-regularization}

{\CEN} has a failure mode: when the context $\cv$ is highly predictive of the targets $\yv$ and the encoder is represented by a powerful model, {\CEN} may learn to rely entirely on the context variables.
In such case, the encoder would generate spurious explanations, one for each target class.
For example, for binary targets, $\yv \in \{0, 1\}$, {\CEN} may learn to always map $\cv$ to either $\thetav_0$ or $\thetav_1$ when $\yv$ is 0 or 1, respectively.
In other words, $\thetav$ (as a function of $\cv$) would become highly predictive of $\yv$ on its own, and hence $\prob{\Yv \mid \xv, \thetav} \approx \prob{\Yv \mid \thetav}$, \ie, $\Yv$ would be (approximately) conditionally independent of $\Xv$ given $\thetav$.
This is problematic from the interpretation point of view since explanations would become spurious, \ie, no longer used to make predictions from the variables of interest.

\begin{figure}[t]
\centering
\begin{subfigure}[b]{0.49\textwidth}
\begin{tikzpicture}[scale=0.37]

	\coordinate (P1) at (-30,15); %
	\coordinate (P2) at (20,5); %

	\coordinate (A1) at (0,0); %
	\coordinate (A2) at (0,-12); %
	\coordinate (A9) at (0,-8); %
	\coordinate (A13) at (0,-4);

	\coordinate (A3) at ($(P1)!.8!(A2)$); %
	\coordinate (A4) at ($(P1)!.8!(A1)$);
	\coordinate (A10) at ($(P1)!.8!(A9)$);
	\coordinate (A14) at ($(P1)!.8!(A13)$);

	\coordinate (A7) at ($(P2)!.6!(A2)$);
	\coordinate (A8) at ($(P2)!.6!(A1)$);
	\coordinate (A11) at ($(P2)!.6!(A9)$);
	\coordinate (A15) at ($(P2)!.6!(A13)$);

	\coordinate (A5) at
	  (intersection cs: first line={(A8) -- (P1)},
			    second line={(A4) -- (P2)});
	\coordinate (A6) at
	  (intersection cs: first line={(A7) -- (P1)},
			    second line={(A3) -- (P2)});
	\coordinate (A12) at
	  (intersection cs: first line={(A11) -- (P1)},
			    second line={(A10) -- (P2)});
    \coordinate (A16) at
	  (intersection cs: first line={(A15) -- (P1)},
			    second line={(A14) -- (P2)});

    \draw[thick] (A3) -- (A4);
    \draw[fill=black] (A3) circle (0.15) node[above left] {\fontsize{12}{14}\selectfont$C_1$};
    \draw[fill=black] (A10) circle (0.15) node[above left] {\fontsize{12}{14}\selectfont$C_2$};
    \draw[fill=black] (A14) circle (0.15) node[above left] {\fontsize{12}{14}\selectfont$C_3$};
    \draw[fill=black] (A4) circle (0.15) node[above left] {\fontsize{12}{14}\selectfont$C_4$};

	\fill[gray!30,opacity=0.75] (A2) -- (A3) -- (A6) -- (A7) -- cycle;
	\draw[thick] (A2) -- (A3);
	\draw[thick] (A2) -- (A7);
	\draw[thick] (A3) -- (A6);
	\draw[thick] (A6) -- (A7);

	\coordinate (BVA1) at ($(A2)!.8!(A3)$);
	\coordinate (BVA2) at ($(A3)!.4!(A6)$);
	\coordinate (BVA3) at ($(A2)!.82!(A3)$);
	\coordinate (BVA4) at ($(A3)!.38!(A6)$);

	\fill[nice_orange!50,opacity=1.0] (BVA1) -- (BVA3) -- (BVA4) -- (BVA2) -- cycle;
	\draw[nice_orange, very thick] (BVA1) -- (BVA2);

	\node at (A3) [xshift=0.7cm,yshift=-1.3cm] {\fontsize{12}{14}\selectfont$X_1$};
	\node at (A3) [xshift=4.4cm,yshift=-1.1cm] {\fontsize{12}{14}\selectfont$X_2$};

	\coordinate (D1) at ($(A3)!.9!(A7)$);
	\coordinate (D2) at ($(A3)!.9!(A7)$);
	\coordinate (D3) at ($(A2)!.8!(A6)$);
	\coordinate (D4) at ($(A2)!.7!(A6)$);
    \coordinate (D5) at ($(D4)!.55!(D1)$);
    \coordinate (D6) at ($(D2)!.5!(A6)$);
    \coordinate (D7) at ($(A2)!.8!(D5)$);

	\foreach \i in {1,2,...,7}
	{
	  \draw (D\i) node[circle,red,draw,fill=nice_red,minimum size=1mm,inner sep=2pt] {};
	}

	\fill[gray!30,opacity=0.75] (A9) -- (A10) -- (A12) -- (A11) -- cycle;
	\draw[thick] (A9) -- (A10);
	\draw[thick] (A9) -- (A11);
    \draw[thick] (A10) -- (A12);
    \draw[thick] (A12) -- (A11);

    \coordinate (MBVA1) at ($(A11)!.3!(A12)$);
	\coordinate (MBVA2) at ($(A10)!.3!(A12)$);
	\coordinate (MBVA3) at ($(A11)!.26!(A12)$);
	\coordinate (MBVA4) at ($(A10)!.26!(A12)$);

	\fill[nice_green!50,opacity=1.0] (MBVA1) -- (MBVA3) -- (MBVA4) -- (MBVA2) -- cycle;
	\draw[nice_green, very thick] (MBVA1) -- (MBVA2);

    \coordinate (E1) at ($(A10)!.3!(A11)$);
	\coordinate (E2) at ($(A10)!.1!(A11)$);
	\coordinate (E3) at ($(A9)!.2!(A12)$);
	\coordinate (E4) at ($(A9)!.3!(A12)$);
    \coordinate (E5) at ($(E4)!.55!(E1)$);
    \coordinate (E6) at ($(E2)!.5!(A9)$);

	\foreach \i in {1,2,...,6}
	{
	  \draw (E\i) node[diamond,blue,draw,fill=nice_blue,minimum size=1mm,inner sep=2pt] {};
	}

	\fill[gray!30,opacity=0.75] (A13) -- (A14) -- (A16) -- (A15) -- cycle; %
	\draw[thick] (A13) -- (A14);
	\draw[thick] (A13) -- (A15);
    \draw[thick] (A14) -- (A16);
    \draw[thick] (A15) -- (A16);

    \coordinate (MTVA1) at ($(A15)!.3!(A16)$);
	\coordinate (MTVA2) at ($(A14)!.3!(A16)$);
	\coordinate (MTVA3) at ($(A15)!.26!(A16)$);
	\coordinate (MTVA4) at ($(A14)!.26!(A16)$);

	\fill[nice_green!50,opacity=1.0] (MTVA1) -- (MTVA3) -- (MTVA4) -- (MTVA2) -- cycle;
	\draw[nice_green, very thick] (MTVA1) -- (MTVA2);

    \coordinate (V1) at ($(A14)!.8!(A13)$);
    \coordinate (V3) at ($(A16)!.8!(A15)$);

    \coordinate (F1) at ($(V1)!.2!(V3)$);
	\coordinate (F2) at ($(V1)!.6!(V3)$);
	\coordinate (F3) at ($(V1)!.4!(V3)$);
	\coordinate (F4) at ($(V1)!.9!(V3)$);
	\coordinate (F5) at ($(V1)!.7!(V3)$);

	\foreach \i in {1,2,...,5}
	{
	  \draw (F\i) node[diamond,blue,draw,fill=nice_blue,minimum size=1mm,inner sep=2pt] {};
	}

	\fill[gray!30,opacity=0.75] (A1) -- (A4) -- (A5) -- (A8) -- cycle; %
	\node at (barycentric cs:A1=1,A4=1,A5=1,A8=1) {};
	\draw[thick] (A1) -- (A4);
	\draw[thick] (A1) -- (A8);
    \draw[thick] (A4) -- (A5);
    \draw[thick] (A5) -- (A8);

	\coordinate (VA1) at ($(A1)!.8!(A4)$);
	\coordinate (VA2) at ($(A4)!.4!(A5)$);
	\coordinate (VA3) at ($(A1)!.83!(A4)$);
	\coordinate (VA4) at ($(A4)!.37!(A5)$);

	\fill[nice_orange!50,opacity=1.0] (VA1) -- (VA3) -- (VA4) -- (VA2) -- cycle;
	\draw[nice_orange, very thick] (VA1) -- (VA2);

	\coordinate (V1) at ($(A4)!.2!(A1)$);
	\coordinate (V3) at ($(A5)!.2!(A8)$);

	\coordinate (G1) at ($(V1)!.8!(V3)$);
	\coordinate (G2) at ($(V1)!.4!(V3)$);
	\coordinate (G3) at ($(V1)!.6!(V3)$);
	\coordinate (G4) at ($(V1)!.1!(V3)$);
	\coordinate (G5) at ($(V1)!.3!(V3)$);

	\foreach \i in {1,2,...,5}
	{
	  \draw (G\i) node[circle,red,draw,fill=nice_red,minimum size=1mm,inner sep=2pt] {};
	}
\end{tikzpicture}
\caption{}\label{fig:synthetic-illustration-noentreg}
\end{subfigure}
\begin{subfigure}[b]{0.49\textwidth}
\begin{tikzpicture}[scale=0.37]

	\coordinate (P1) at (-30,15); %
	\coordinate (P2) at (20,5); %

	\coordinate (A1) at (0,0); %
	\coordinate (A2) at (0,-12); %
	\coordinate (A9) at (0,-8); %
	\coordinate (A13) at (0,-4);

	\coordinate (A3) at ($(P1)!.8!(A2)$); %
	\coordinate (A4) at ($(P1)!.8!(A1)$);
	\coordinate (A10) at ($(P1)!.8!(A9)$);
	\coordinate (A14) at ($(P1)!.8!(A13)$);

	\coordinate (A7) at ($(P2)!.6!(A2)$);
	\coordinate (A8) at ($(P2)!.6!(A1)$);
	\coordinate (A11) at ($(P2)!.6!(A9)$);
	\coordinate (A15) at ($(P2)!.6!(A13)$);

	\coordinate (A5) at
	  (intersection cs: first line={(A8) -- (P1)},
			    second line={(A4) -- (P2)});
	\coordinate (A6) at
	  (intersection cs: first line={(A7) -- (P1)},
			    second line={(A3) -- (P2)});
	\coordinate (A12) at
	  (intersection cs: first line={(A11) -- (P1)},
			    second line={(A10) -- (P2)});
    \coordinate (A16) at
	  (intersection cs: first line={(A15) -- (P1)},
			    second line={(A14) -- (P2)});

    \draw[thick] (A3) -- (A4);
    \draw[fill=black] (A3) circle (0.15) node[above left] {\fontsize{12}{14}\selectfont$C_1$};
    \draw[fill=black] (A10) circle (0.15) node[above left] {\fontsize{12}{14}\selectfont$C_2$};
    \draw[fill=black] (A14) circle (0.15) node[above left] {\fontsize{12}{14}\selectfont$C_3$};
    \draw[fill=black] (A4) circle (0.15) node[above left] {\fontsize{12}{14}\selectfont$C_4$};

	\fill[gray!30,opacity=0.75] (A2) -- (A3) -- (A6) -- (A7) -- cycle;
	\draw[thick] (A2) -- (A3);
	\draw[thick] (A2) -- (A7);
	\draw[thick] (A3) -- (A6);
	\draw[thick] (A6) -- (A7);

	\coordinate (BVA1) at ($(A2)!.5!(A7)$);
	\coordinate (BVA2) at ($(A3)!.66!(A6)$);
	\coordinate (BVA3) at ($(A2)!.48!(A7)$);
	\coordinate (BVA4) at ($(A3)!.64!(A6)$);

	\fill[nice_orange!50,opacity=1.0] (BVA1) -- (BVA3) -- (BVA4) -- (BVA2) -- cycle;
	\draw[nice_orange, very thick] (BVA1) -- (BVA2);

	\node at (A3) [xshift=0.7cm,yshift=-1.3cm] {\fontsize{12}{14}\selectfont$X_1$};
	\node at (A3) [xshift=4.4cm,yshift=-1.1cm] {\fontsize{12}{14}\selectfont$X_2$};

	\coordinate (D1) at ($(A3)!.9!(A7)$);
	\coordinate (D2) at ($(A3)!.9!(A7)$);
	\coordinate (D3) at ($(A2)!.8!(A6)$);
	\coordinate (D4) at ($(A2)!.7!(A6)$);
    \coordinate (D5) at ($(D4)!.55!(D1)$);
    \coordinate (D6) at ($(D2)!.5!(A6)$);
    \coordinate (D7) at ($(A2)!.8!(D5)$);

	\foreach \i in {1,2,...,7}
	{
	  \draw (D\i) node[circle,red,draw,fill=nice_red,minimum size=1mm,inner sep=2pt] {};
	}

	\fill[gray!30,opacity=0.75] (A9) -- (A10) -- (A12) -- (A11) -- cycle;
	\draw[thick] (A9) -- (A10);
	\draw[thick] (A9) -- (A11);
    \draw[thick] (A10) -- (A12);
    \draw[thick] (A12) -- (A11);

    \coordinate (MBVA1) at ($(A9)!.5!(A11)$);
	\coordinate (MBVA2) at ($(A10)!.66!(A12)$);
	\coordinate (MBVA3) at ($(A9)!.48!(A11)$);
	\coordinate (MBVA4) at ($(A10)!.64!(A12)$);

	\fill[nice_orange!50,opacity=1.0] (MBVA1) -- (MBVA3) -- (MBVA4) -- (MBVA2) -- cycle;
	\draw[nice_orange, very thick] (MBVA1) -- (MBVA2);

    \coordinate (E1) at ($(A10)!.3!(A11)$);
	\coordinate (E2) at ($(A10)!.1!(A11)$);
	\coordinate (E3) at ($(A9)!.2!(A12)$);
	\coordinate (E4) at ($(A9)!.3!(A12)$);
    \coordinate (E5) at ($(E4)!.55!(E1)$);
    \coordinate (E6) at ($(E2)!.5!(A9)$);

	\foreach \i in {1,2,...,6}
	{
	  \draw (E\i) node[diamond,blue,draw,fill=nice_blue,minimum size=1mm,inner sep=2pt] {};
	}

	\fill[gray!30,opacity=0.75] (A13) -- (A14) -- (A16) -- (A15) -- cycle; %
	\draw[thick] (A13) -- (A14);
	\draw[thick] (A13) -- (A15);
    \draw[thick] (A14) -- (A16);
    \draw[thick] (A15) -- (A16);

    \coordinate (MTVA1) at ($(A13)!.4!(A14)$);
	\coordinate (MTVA2) at ($(A15)!.6!(A16)$);
	\coordinate (MTVA3) at ($(A13)!.37!(A14)$);
	\coordinate (MTVA4) at ($(A15)!.57!(A16)$);

	\fill[nice_green!50,opacity=1.0] (MTVA1) -- (MTVA3) -- (MTVA4) -- (MTVA2) -- cycle;
	\draw[nice_green, very thick] (MTVA1) -- (MTVA2);

    \coordinate (V1) at ($(A14)!.8!(A13)$);
    \coordinate (V3) at ($(A16)!.8!(A15)$);

    \coordinate (F1) at ($(V1)!.2!(V3)$);
	\coordinate (F2) at ($(V1)!.6!(V3)$);
	\coordinate (F3) at ($(V1)!.4!(V3)$);
	\coordinate (F4) at ($(V1)!.9!(V3)$);
	\coordinate (F5) at ($(V1)!.7!(V3)$);

	\foreach \i in {1,2,...,5}
	{
	  \draw (F\i) node[diamond,blue,draw,fill=nice_blue,minimum size=1mm,inner sep=2pt] {};
	}

	\fill[gray!30,opacity=0.75] (A1) -- (A4) -- (A5) -- (A8) -- cycle; %
	\node at (barycentric cs:A1=1,A4=1,A5=1,A8=1) {};
	\draw[thick] (A1) -- (A4);
	\draw[thick] (A1) -- (A8);
    \draw[thick] (A4) -- (A5);
    \draw[thick] (A5) -- (A8);

	\coordinate (VA1) at ($(A1)!.4!(A4)$);
	\coordinate (VA2) at ($(A8)!.6!(A5)$);
	\coordinate (VA3) at ($(A1)!.37!(A4)$);
	\coordinate (VA4) at ($(A8)!.57!(A5)$);

	\fill[nice_green!50,opacity=1.0] (VA1) -- (VA3) -- (VA4) -- (VA2) -- cycle;
	\draw[nice_green, very thick] (VA1) -- (VA2);

	\coordinate (V1) at ($(A4)!.2!(A1)$);
	\coordinate (V3) at ($(A5)!.2!(A8)$);

	\coordinate (G1) at ($(V1)!.8!(V3)$);
	\coordinate (G2) at ($(V1)!.4!(V3)$);
	\coordinate (G3) at ($(V1)!.6!(V3)$);
	\coordinate (G4) at ($(V1)!.1!(V3)$);
	\coordinate (G5) at ($(V1)!.3!(V3)$);

	\foreach \i in {1,2,...,5}
	{
	  \draw (G\i) node[circle,red,draw,fill=nice_red,minimum size=1mm,inner sep=2pt] {};
	}
\end{tikzpicture}
\caption{}\label{fig:synthetic-illustration-entreg}
\end{subfigure}
\caption{A toy synthetic dataset and two linear explanations (\textcolor{nice_green}{green} and \textcolor{nice_orange}{orange}) produced by a {\CEN} model trained (a) with no regularization or (b) with conditional entropy regularization.}
\label{fig:synthetic-illustration}
\end{figure}

Note that such a model would be accurate only when the generated $\thetav$ is always highly predictive of $\Yv$, \ie, when the conditional entropy $\Hc(\Yv \mid \thetav)$ is low.
Following this observation, we propose to regularize the model by approximately \emph{maximizing} $\Hc(\Yv \mid \thetav)$.
For a {\CEN} with a deterministic encoder (Sections~\ref{sec:det-enc} and \ref{sec:constrained-det-enc}), we can compute an unbiased estimate of $\Hc(\Yv \mid \thetav)$ given a mini-batch of samples from the dataset as follows:
\begin{align}
    \label{eq:cond-ent-reg}
    \Hc(\Yv \mid \thetav)
    & = \int \prob{\yv, \thetav} \log \prob{\yv \mid \thetav} d\yv d\thetav \\
    & = \ep[(\cv, \xv) \sim \prob{\Cv, \Xv}]{\int \prob{\yv \mid \xv, \phi(\cv)} \log \ep[\xv^\prime \sim \prob{\Xv \mid \cv}]{\prob{\yv \mid \xv^\prime, \phi(\cv)}} d\yv} \label{eq:cond-entropy-batch} \\
    & \approx \frac{1}{|B|} \sum_{i \in B} \int \prob{\yv \mid \xv_i, \phi(\cv_i)} \log \left[\sum_{\xv^\prime \sim \prob{\Xv \mid \cv_i}} \prob{\yv \mid \xv^\prime, \phi(\cv_i)}\right] d\yv \label{eq:cond-entropy-batch-approx}
\end{align}
In the given expressions, elements of $B$ index training samples (\eg, $B$ represents a mini-batch), \eqref{eq:cond-entropy-batch} is obtained by using the definition of {\CEN} and marginalizing out $\thetav$, \eqref{eq:cond-entropy-batch-approx} is a stochastic estimate that approximates expectations using a mini-batch and samples from $\prob{\Xv \mid \cv_i}$.
In practice, approximate samples $\xv^\prime$ from the latter distribution can be obtained either by simply perturbing $\xv_i$ or first learning $\prob{\Xv \mid \Cv}$ and then sampling from it.
Intuitively, if the predictions are accurate while $\Hc(\Yv \mid \thetav)$ is high, we can be sure that {\CEN} learned to generate contextual $\thetav$'s that are uncorrelated with the targets but result into accurate conditional models, $\prob{\Yv \mid \xv, \thetav}$.

\vspace{-0.5ex}
\paragraph{An illustration on synthetic data.}
To illustrate the problem, we consider a toy synthetic 3D dataset with 2 classes that are not separable linearly (Figure~\ref{fig:synthetic-illustration}).
The coordinates along the vertical axis $C$ correspond to different contexts, and $(X_1, X_2)$ represent variables of interest.
Note we can perfectly distinguish between the two classes by using only the context information.
{\CEN} with a dictionary of size 2 learns to select one of the two linear explanations for each of the contexts.
When trained without regularization (Figure~\ref{fig:synthetic-illustration-noentreg}), selected explanations are spurious hyperplanes since each of them is used for points of a single class only.
Adding entropy regularization (Figure~\ref{fig:synthetic-illustration-entreg}) makes {\CEN} select hyperplanes that meaningfully distinguish between the classes within different contexts.

\vspace{-0.5ex}
\paragraph{Quantifying contribution of the explanations.}
Starting from the introduction, we have argued that explanations are meaningful when they are used for prediction.
In other words, we would like explanations have a non-zero contribution to the overall accuracy of the model.
The following proposition quantifies the contribution of explanations to the predictive performance of entropy-regularized {\CEN}.

\begin{proposition}
    \label{prop:explanation-contribution-bound}
    Let {\CEN} with linear explanations have the expected predictive accuracy
    \begin{equation}
        \label{eq:cen-predictive-acc-bound}
        \ep[\Xv, \thetav \sim \prob{\Xv, \thetav}]{\prob{\hat \Yv = \Yv \mid \Xv, \thetav}} \geq 1 - \varepsilon,
    \end{equation}
    where $\varepsilon \in (0, 1)$ is small.
    Let also the conditional entropy be $\Hc(\Yv \mid \thetav) \geq \delta$ for some $\delta \geq 0$.
    Then, the expected contribution of the explanations to the predictive performance of {\CEN} is given by the following lower bound:
    \begin{equation}
        \label{eq:cen-exp-contribution-bound}
        \ep[\Xv, \thetav \sim \prob{\Xv, \thetav}]{\prob{\hat \Yv = \Yv \mid \Xv, \thetav} - \prob{\hat \Yv = \Yv \mid \thetav}} \geq \frac{\delta - 1}{\log |\Yc|} - \varepsilon,
    \end{equation}
    where $|\Yc|$ denotes the cardinality of the target space.
\end{proposition}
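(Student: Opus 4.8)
The plan is to bound the two terms inside the expectation in \eqref{eq:cen-exp-contribution-bound} separately. The first, $\ep[\Xv, \thetav \sim \prob{\Xv, \thetav}]{\prob{\hat \Yv = \Yv \mid \Xv, \thetav}}$, is already bounded below by $1 - \varepsilon$ by hypothesis~\eqref{eq:cen-predictive-acc-bound}, so by linearity of expectation it suffices to upper bound the ``context-only'' accuracy $\ep[\thetav]{\prob{\hat \Yv = \Yv \mid \thetav}}$ by $1 - \frac{\delta - 1}{\log |\Yc|}$. I read $\prob{\Yv \mid \thetav}$ as the predictive distribution obtained by marginalizing $\Xv$ out of the (linear) explanation, $\prob{\Yv \mid \thetav} = \ep[\Xv]{\prob{\Yv \mid \Xv, \thetav}}$, exactly as in~\eqref{eq:cond-entropy-batch}, and $\prob{\hat \Yv = \Yv \mid \thetav}$ as the accuracy of the Bayes-optimal predictor $\hat\Yv_\thetav := \argmax_{\yv} \prob{\yv \mid \thetav}$ built from $\thetav$ alone, so that $\ep[\thetav]{\prob{\hat \Yv = \Yv \mid \thetav}}$ is the Bayes accuracy of predicting $\Yv$ from $\thetav$.

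The main tool will be Fano's inequality. Since $\hat\Yv_\thetav$ is a deterministic function of $\thetav$, it is a coarsening of $\thetav$, so $\Hc(\Yv \mid \thetav) = \Hc(\Yv \mid \thetav, \hat\Yv_\thetav) \le \Hc(\Yv \mid \hat\Yv_\thetav)$ (conditioning on more can only reduce entropy). Writing $P_e := \prob{\hat\Yv_\thetav \ne \Yv} = 1 - \ep[\thetav]{\prob{\hat\Yv = \Yv \mid \thetav}}$ for the error probability, Fano gives $\Hc(\Yv \mid \hat\Yv_\thetav) \le H_b(P_e) + P_e \log(|\Yc| - 1)$, where $H_b(\cdot)$ is the binary entropy. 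Using the crude bounds $H_b(P_e) \le 1$ and $\log(|\Yc|-1) \le \log|\Yc|$ collapses this to $\Hc(\Yv \mid \thetav) \le 1 + P_e \log|\Yc|$; combining with $\Hc(\Yv \mid \thetav) \ge \delta$ and rearranging yields $P_e \ge \frac{\delta - 1}{\log|\Yc|}$, hence $\ep[\thetav]{\prob{\hat\Yv = \Yv \mid \thetav}} = 1 - P_e \le 1 - \frac{\delta - 1}{\log|\Yc|}$.

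Subtracting the two estimates then closes the argument: $\ep[\Xv,\thetav]{\prob{\hat\Yv = \Yv \mid \Xv, \thetav} - \prob{\hat\Yv = \Yv \mid \thetav}} \ge (1 - \varepsilon) - \big(1 - \tfrac{\delta - 1}{\log|\Yc|}\big) = \frac{\delta - 1}{\log|\Yc|} - \varepsilon$, which is exactly \eqref{eq:cen-exp-contribution-bound}. (When $\delta \le 1$ the claimed bound is no stronger than the trivial estimate $\ge -\varepsilon$ coming from $\prob{\hat\Yv = \Yv \mid \thetav} \le 1$, so there is nothing to prove in that regime.)

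The part I expect to require the most care is not the calculation but the setup: one must fix the interpretation of $\prob{\hat\Yv = \Yv \mid \thetav}$ as the accuracy of the best predictor \emph{using $\thetav$ only}, since it is precisely the fact that $\hat\Yv_\thetav$ is a function of $\thetav$ that legitimizes the Markov chain $\Yv - \thetav - \hat\Yv_\thetav$ and hence the data-processing step $\Hc(\Yv \mid \thetav) \le \Hc(\Yv \mid \hat\Yv_\thetav)$. Once that is granted, the rest is bookkeeping: for {\CEN} with linear explanations and a (mixture of) delta encoder the marginal $\prob{\Yv \mid \thetav}$ is a genuine conditional distribution, and the slack introduced by $H_b \le 1$ and $|\Yc| - 1 \le |\Yc|$ is exactly what produces the ``$\delta - 1$'' numerator and the $\log|\Yc|$ denominator in the stated lower bound.
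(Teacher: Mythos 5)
Your proof is correct and follows essentially the same route as the paper's: both apply Fano's inequality to the error probability $\Pb_e = \ep[\thetav]{\prob{\hat\Yv \neq \Yv \mid \thetav}}$, weaken it via $\Hc(\Pb_e) \leq 1$ and $\log(|\Yc|-1) \leq \log|\Yc|$ to get $\Pb_e \geq (\delta-1)/\log|\Yc|$, and then combine with the accuracy hypothesis by linearity of expectation. The only difference is cosmetic: you spell out the data-processing step $\Hc(\Yv \mid \thetav) \leq \Hc(\Yv \mid \hat\Yv_\thetav)$ that is implicit in the form of Fano's inequality the paper invokes directly.
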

\begin{proof}
    The statement follows from Fano's inequality. For details, see Appendix~\ref{proof:explanation-contribution-bound}.
\end{proof}
\vspace{-3ex}
\begin{remark}
    The proposition states that explanations are meaningful (as contextual models) only when {\CEN} is accurate (\ie, the expected predictive error is less than $\varepsilon$) and the conditional entropy $\Hc(\Yv \mid \thetav)$ is high.
    High accuracy and low entropy imply spurious explanations.
    Low accuracy and high entropy imply that $\xv$ features are not predictive of $\yv$ within the class of explanations, suggesting to reconsider our modeling assumptions.
\end{remark}

\subsection{{\CEN}-generated vs. Post-hoc Explanations}
\label{sec:CEN-vs-LIME}

In this section, we analyze the relationship between {\CEN}-generated and {\LIME}-generated \emph{post-hoc} explanations.
Given a trained {\CEN}, we can use {\LIME} to approximate its decision boundary and compare the explanations produced by both methods.
The question we ask:

\begin{blockquote}
    \emph{How does the local approximation, $\hat \thetav$, relate to the actual explanation, $\thetav^\star$, generated and used by {\CEN} to make a prediction in the first place?}
\end{blockquote}

\noindent
For the case of binary\footnote{Analysis of the multi-class case can be reduced to the binary in the one-vs-all fashion.} classification, it turns out that when the context encoder is deterministic and the space of explanations is \emph{linear}, local approximations, $\hat\thetav$, obtained by solving \eqref{eq:LIME-general} recover the original {\CEN}-generated explanations, $\thetav^\star$.
Formally, our result is stated in the following theorem.

\begin{theorem}
\label{thm:LIME-CEN}
Let the explanations and the local approximations be in the class of linear models, $\prob{Y = 1 \mid \xv, \thetav} \propto \exp\left\{\xv^\top \thetav\right\}$.
Further, let the encoder be $L$-Lipschitz and pick a sampling distribution, $\pi_{\xv, \cv}$, that concentrates around the point $(\xv, \cv)$, such that $\prob[\pi_{\xv, \cv}]{\|\zv^\prime - \zv\| > t} < \varepsilon(t)$, where $\zv := (\xv, \cv)$ and $\varepsilon(t) \rightarrow 0$ as $t \rightarrow \infty$.
Then, if the loss function is defined as
\begin{equation}
    \Lc = \frac{1}{K} \sum_{k=1}^K \left(\logit{\prob{Y = 1 \mid \xv_k, \cv_k}} - \logit{\prob{Y = 1 \mid \xv_k, \thetav}}\right)^2,\, (\xv_k, \cv_k) \sim \pi_{\xv, \cv},
\end{equation}
the solution of \eqref{eq:LIME-general} concentrates around $\thetav^\star$ as $\mathbb{P}_{\pi_{\xv, \cv}}\left(\|\hat\thetav - \thetav^\star\| > t\right) \leq  \delta_{K, L}(t)$, $\delta_{K, L} \underset{t \rightarrow \infty}{\longrightarrow} 0$.
\end{theorem}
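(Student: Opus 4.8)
The plan is to exploit the explicit logistic--linear form of both the {\CEN} predictive distribution and the surrogate so that {\LIME}'s program \eqref{eq:LIME-general} collapses to ordinary least squares, solve it in closed form, and then control the minimizer by a perturbation argument driven by the Lipschitz property of the encoder and the concentration of $\pi_{\xv,\cv}$. First I would reduce the loss: for a {\CEN} with a deterministic encoder, \eqref{eq:CEN-det} and \eqref{eq:linear-explanation} give $\prob{Y=1 \mid \xv_k,\cv_k} = \sigma\!\left(\xv_k^\top \phi_{\wv}(\cv_k)\right)$ with $\sigma$ the logistic sigmoid, so $\logit{\prob{Y=1\mid\xv_k,\cv_k}} = \xv_k^\top \thetav^\star_k$ where $\thetav^\star_k := \phi_{\wv}(\cv_k)$, and likewise $\logit{\prob{Y=1\mid\xv_k,\thetav}} = \xv_k^\top\thetav$. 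Substituting, the loss becomes the convex quadratic $\Lc(\thetav) = \frac{1}{K}\sum_{k=1}^K \left(\xv_k^\top (\thetav^\star_k - \thetav)\right)^2$. Writing $\hat{\Sigma} := \frac{1}{K}\sum_{k=1}^K \xv_k\xv_k^\top$ and assuming it invertible --- enforced by the complexity penalty $\Omega$ acting as a ridge term, or holding with probability $\ge 1-\rho(K)$, $\rho(K)\to 0$, once the $\xv$-marginal of $\pi_{\xv,\cv}$ has a positive-definite second moment (matrix Chernoff) --- the unique minimizer of \eqref{eq:LIME-general} is $\hat\thetav = \hat{\Sigma}^{-1}\big(\frac{1}{K}\sum_{k} \xv_k\xv_k^\top \thetav^\star_k\big)$.

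Next I would pass to a perturbation around $\thetav^\star$. Since $\thetav^\star = \phi_{\wv}(\cv)$ uniquely minimizes the ``idealized'' loss obtained by replacing each $\thetav^\star_k$ with $\thetav^\star$, whose normal equation is $\hat{\Sigma}\,\thetav^\star = \frac1K\sum_k \xv_k\xv_k^\top\thetav^\star$, subtracting the two normal equations yields $\hat\thetav - \thetav^\star = \hat{\Sigma}^{-1}\big(\frac1K\sum_k \xv_k\xv_k^\top(\thetav^\star_k - \thetav^\star)\big)$, and hence, using $\|\xv_k\xv_k^\top\|_{\mathrm{op}} = \|\xv_k\|^2$, $L$-Lipschitzness of $\phi_{\wv}$, and $\|\cv_k - \cv\| \le \|\zv_k - \zv\|$,
\[
    \|\hat\thetav - \thetav^\star\| \;\le\; \frac{1}{\lambda_{\min}(\hat{\Sigma})}\cdot\frac{1}{K}\sum_{k=1}^K \|\xv_k\|^2\,\|\phi_{\wv}(\cv_k) - \phi_{\wv}(\cv)\| \;\le\; \frac{L}{\lambda_{\min}(\hat{\Sigma})}\cdot\frac{1}{K}\sum_{k=1}^K \|\xv_k\|^2\,\|\zv_k - \zv\| .
\]

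Then I would turn this into a concentration statement. Fix $r>0$ and let $\mathcal{E}_r := \{\|\zv_k - \zv\| \le r\text{ for all }k\}$; a union bound over the $K$ draws gives $\prob[\pi_{\xv,\cv}]{\mathcal{E}_r^c} \le K\,\varepsilon(r)$. On $\mathcal{E}_r$ one has $\|\xv_k\|\le\|\xv\|+r$ and $\|\zv_k-\zv\|\le r$, and with probability $\ge 1-\rho(K)$ also $\lambda_{\min}(\hat{\Sigma})\ge\lambda_0/2$ for a constant $\lambda_0>0$ depending only on $\pi_{\xv,\cv}$, so $\|\hat\thetav - \thetav^\star\|\le 2L(\|\xv\|+r)^2 r/\lambda_0$ on that event. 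Given $t$, choosing $r=r_L(t)$ to be the largest value for which the right-hand side is $\le t$ makes $\{\|\hat\thetav-\thetav^\star\|>t\}$ a subset of $\mathcal{E}_{r_L(t)}^c$ together with the ill-conditioning event, so $\prob[\pi_{\xv,\cv}]{\|\hat\thetav - \thetav^\star\|>t}\le K\,\varepsilon(r_L(t)) + \rho(K) =: \delta_{K,L}(t)$; since $r_L(t)\to\infty$ and $\varepsilon(r_L(t))\to 0$ as $t\to\infty$ (and $\rho(K)$ is negligible for moderate $K$), $\delta_{K,L}(t)\to 0$, which is the asserted concentration.

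The hard part will be guaranteeing that $\hat{\Sigma}$ --- equivalently, the spread of the perturbed attributes induced by $\pi_{\xv,\cv}$ --- is well-conditioned even though the sampling kernel must be tight around $\cv$ (so that $\phi_{\wv}(\cv_k)\approx\thetav^\star$) while remaining spread out in the $\xv$ directions; I would address this via the complexity penalty $\Omega$ (ridge) or a matrix-concentration bound on $\hat{\Sigma}$, then carefully track how $\varepsilon$, $K$, and $L$ enter the final rate $\delta_{K,L}(t)$. The remaining work --- expanding the quadratic, the perturbation identity, and the union bound --- is routine.
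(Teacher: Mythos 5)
Your proposal is correct and follows essentially the same route as the paper's proof: both reduce \eqref{eq:LIME-general} to ordinary least squares in logit space, write the minimizer in closed form, use the perturbation identity $\hat\thetav - \thetav^\star = \hat{\Sigma}^{+}\bigl(\tfrac{1}{K}\sum_k \xv_k\xv_k^\top(\phi(\cv_k)-\phi(\cv))\bigr)$ together with the $L$-Lipschitz encoder and the concentration of $\pi_{\xv,\cv}$, and control $\lambda_{\min}(\hat{\Sigma})$ via a matrix Chernoff bound. The tension you flag between tightness of the sampling kernel and well-conditioning of $\hat{\Sigma}$ is exactly what the paper resolves by keeping the minimal eigenvalue $\mu_{\min}$ of the rescaled feature covariance bounded away from zero in its final bound, so your plan needs no new ideas beyond what is already there.
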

Intuitively, by sampling from a distribution sharply concentrated around $(\xv, \cv)$, we ensure that $\hat\thetav$ will recover $\thetav^\star$ with high probability.
A detailed proof is given in Appendix~\ref{proof:LIME-CEN}.

This result establishes an equivalence between the explanations generated by {\CEN} and those produced by {\LIME} post-hoc \emph{when approximating {\CEN}}.
Note that when {\LIME} is applied to a model other than {\CEN}, equivalence between explanations is not guaranteed.
Moreover, as we further show experimentally, certain conditions such as incomplete or noisy interpretable features may lead to {\LIME} producing inconsistent and erroneous explanations.

\section{Case Studies}
\label{sec:case-studies}

In this section, we move to a number of case studies where we empirically analyze properties of the proposed {\CEN} framework on classification and survival analysis tasks.
In particular, we evaluate {\CEN} with linear explanations on a few classification tasks that involve different data modalities of the context (\eg, images or text).
For survival prediction, we design {\CEN} architectures with structured explanations, derive learning and inference algorithms, and showcase our models on problems from the healthcare domain.

\subsection{Solving Classification using {\CEN} with Linear Explanations}
\label{sec:applications-classification}

We start by examining the properties of {\CEN} with linear explanations (Table~\ref{tab:cen-components}) on a few classification tasks.
Our experiments are designed to answer the following questions:
\begin{itemize}[itemsep=0pt,topsep=1ex,parsep=2pt,leftmargin=2em]
    \item[(i)] When explanation is a part of the learning and prediction process, how does that affect performance of the final predictive model \emph{quantitatively}?

    \item[(ii)] \emph{Qualitatively}, what kind of insight can we gain by inspecting explanations?

    \item[(iii)] Finally, we analyze \emph{consistency} of linear explanations generated by {\CEN} versus those generated using {\LIME}~\citep{ribeiro2016trust}, a popular post-hoc method.
\end{itemize}
Details on our experimental setup, all hyperparameters, and training procedures are given in the tables in Appendix~\ref{app:architectures}.

\subsubsection{Poverty Prediction}
\label{sec:applications-classification-poverty-prediction}

We consider the problem of poverty prediction for household clusters in Uganda from satellite imagery and survey data.
Each household cluster is represented by a collection of $400 \times 400$ satellite images (used as the context) and 65 categorical variables from living standards measurement survey (used as the interpretable attributes).
The task is binary classification of the households into being either below or above the poverty line.

We follow the original study of~\citet{jean2016combining} and use a {\VGGF} network (pre-trained on nightlight intensity prediction) to compute 4096-dimensional embeddings of the satellite images on top of which we build contextual models.
Note that this datasets is fairly small (500 training and 142 test points), and so we keep the {\VGGF} part frozen to avoid overfitting.

\begin{wraptable}{r}{0.35\textwidth}
    \centering
    \caption{Performance of the models on the poverty prediction task.}
    \label{tab:satellite-accuracy}
    \fontsize{10}{12}\selectfont
    \renewcommand{\arraystretch}{1.1}
    \begin{tabular}{@{}lrr@{}}
        \toprule
        & \textbf{Acc $\uparrow$} & \textbf{AUC $\uparrow$} \\
        \midrule
        LR$_\texttt{emb}$                               & $62.5\%$ & $68.1\%$ \\
        LR$_\texttt{att}$                               & $75.7\%$ & $82.2\%$ \\
        MLP                                             & $77.4\%$ & $78.7\%$ \\
        \midrule
        MoE$_\texttt{att}$                              & $77.9\%$ & $\mathbf{85.4}\%$ \\
        CEN$_\texttt{att}$                              & $\mathbf{81.5\%}$ & $84.2\%$ \\
        \bottomrule
    \end{tabular}
\end{wraptable}

\vspace{5ex}

\paragraph{Models.}
For baselines, we use logistic regression (LR) and multi-layer perceptrons (MLP) with 1 hidden layer.
The LR uses either {\VGGF} embeddings (LR$_\texttt{emb}$) or the categorical attributes (LR$_\texttt{att}$) as inputs.
The input of the MLP is concatenated {\VGGF} embeddings and categorical attributes.
Context encoder of the {\CEN} model uses {\VGGF} to process images, followed by an attention layer over a dictionary of 16 trainable linear explanations defined over the categorical features (Figure~\ref{fig:explanation-net}).
Finally, we evaluate a mixture-of-experts (MoE) model of the same architecture as {\CEN}, since it is a special case (see Section~\ref{sec:CEN-special-calses}).
Both {\CEN} and MoE are trained with the dictionary constraint and $L_1$ regularization over the dictionary elements to encourage sparse explanations.

\paragraph{Performance.}
The results are presented in Table \ref{tab:satellite-accuracy}.
Both in terms of accuracy and AUC, {\CEN} models outperform both simple logistic regression and vanilla MLP.
Even though the results suggest that categorical features are better predictors of poverty than {\VGGF} embeddings of images, note that using embeddings to \emph{contextualize} linear models reduces the error.
This indicates that \emph{different} linear models are optimal in different contexts.

\begin{figure}[t]
\begin{subfigure}[b]{0.19\textwidth}
    \hspace{-1.5ex}
    \includegraphics[width=1.03\textwidth]{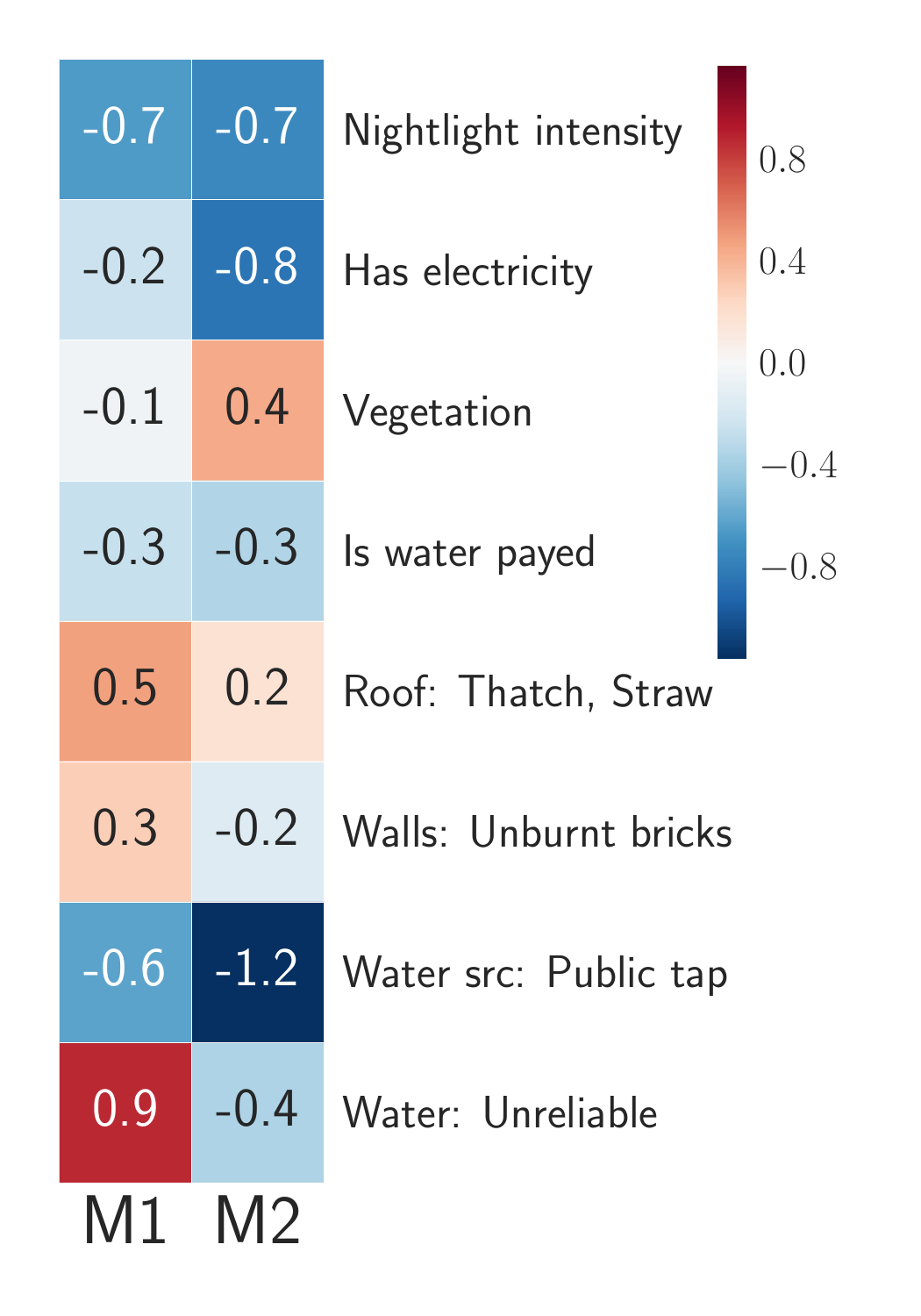}%
    \vspace{-2ex}\caption{}\label{fig:satellite-models}
\end{subfigure}%
\begin{subfigure}[b]{0.19\textwidth}
    \hspace{-0.5ex}
    \includegraphics[width=1.03\textwidth]{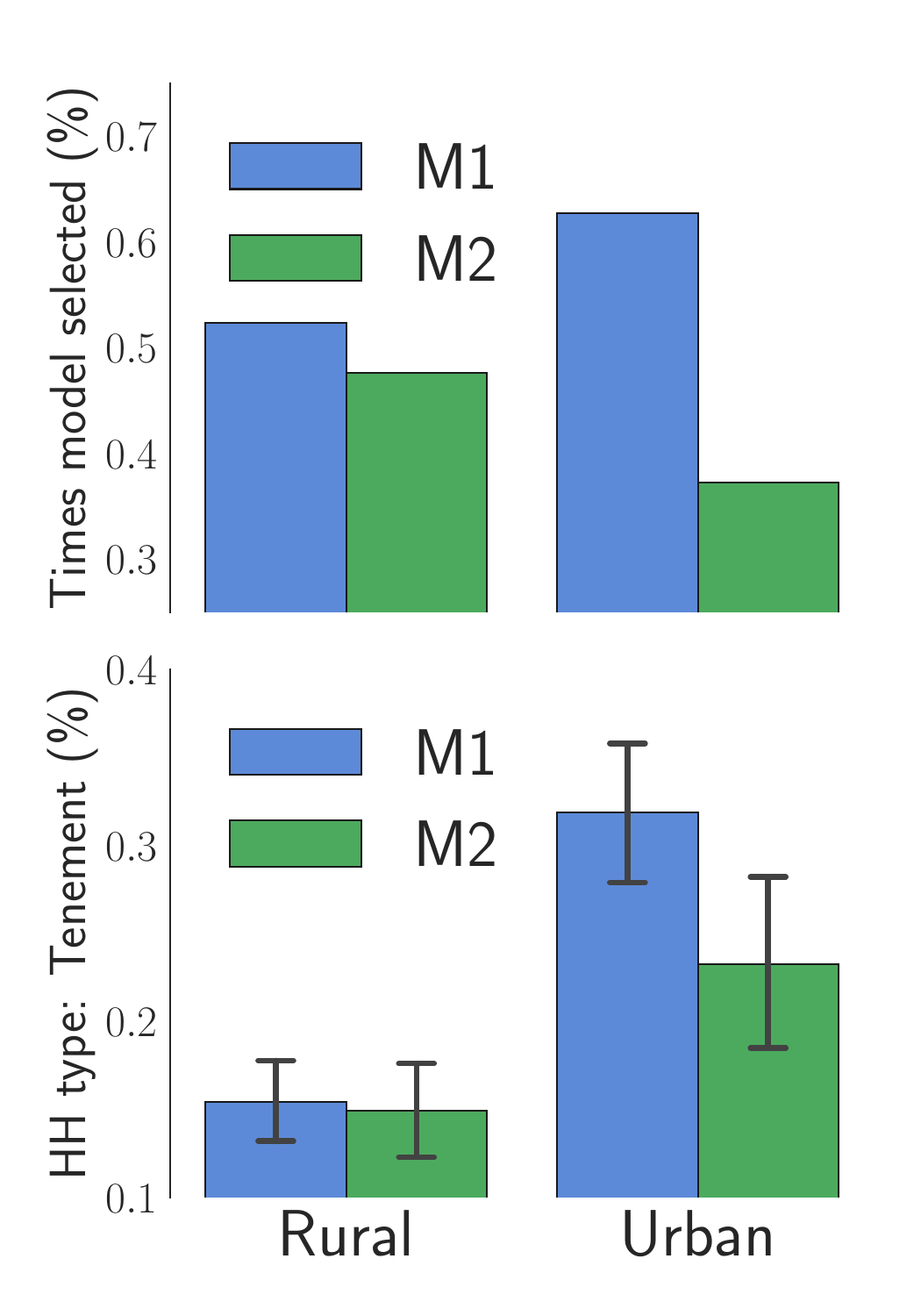}%
    \vspace{-2ex}\caption{}\label{fig:satellite-barplots}
\end{subfigure}%
\begin{subfigure}[b]{0.31\textwidth}
    \hspace{1.5ex}
    \includegraphics[width=1.08\textwidth]{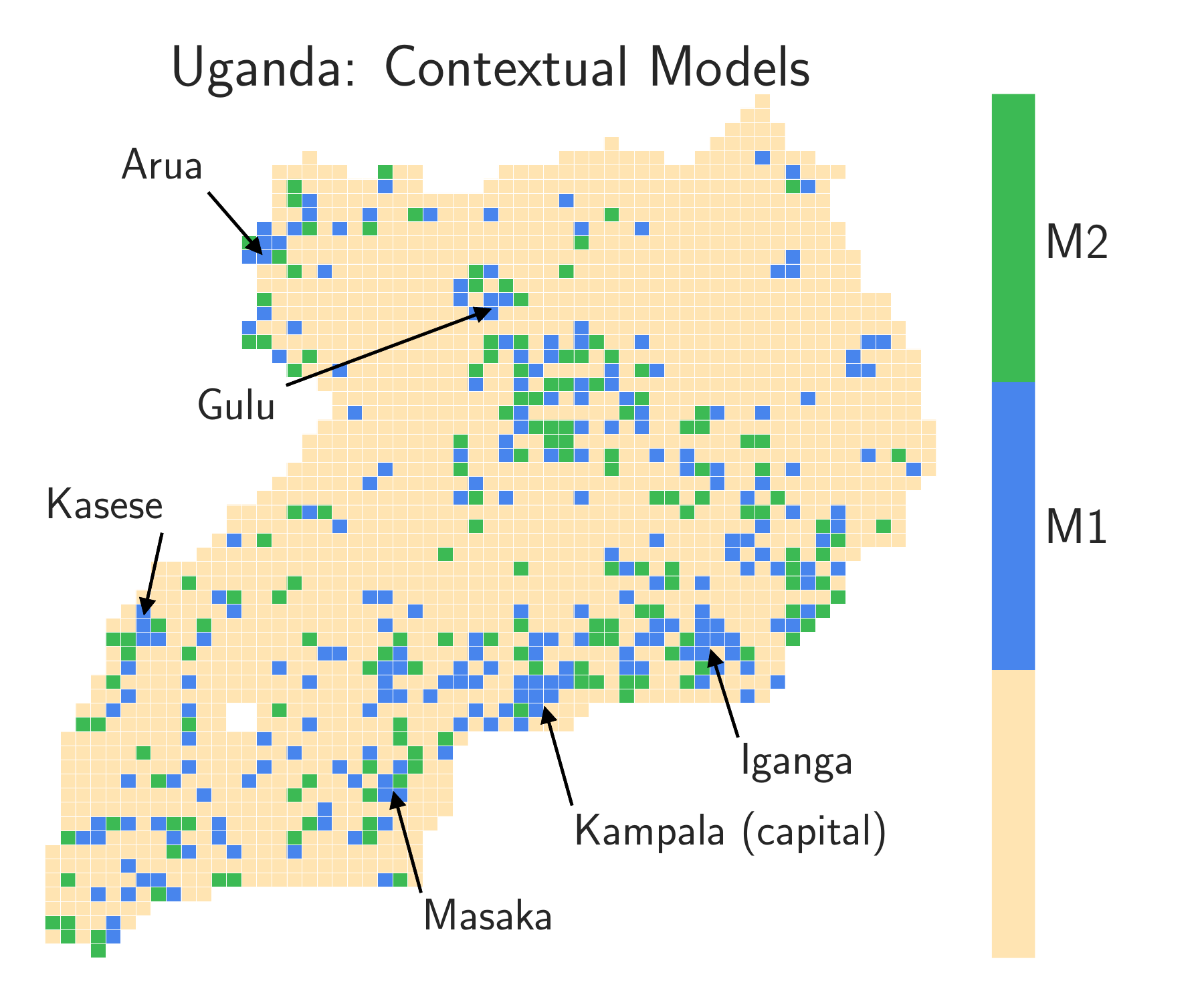}%
    \vspace{-2ex}\caption{}\label{fig:satellite-map-cen}
\end{subfigure}%
\begin{subfigure}[b]{0.31\textwidth}
    \hspace{1.5ex}
    \includegraphics[width=1.08\textwidth]{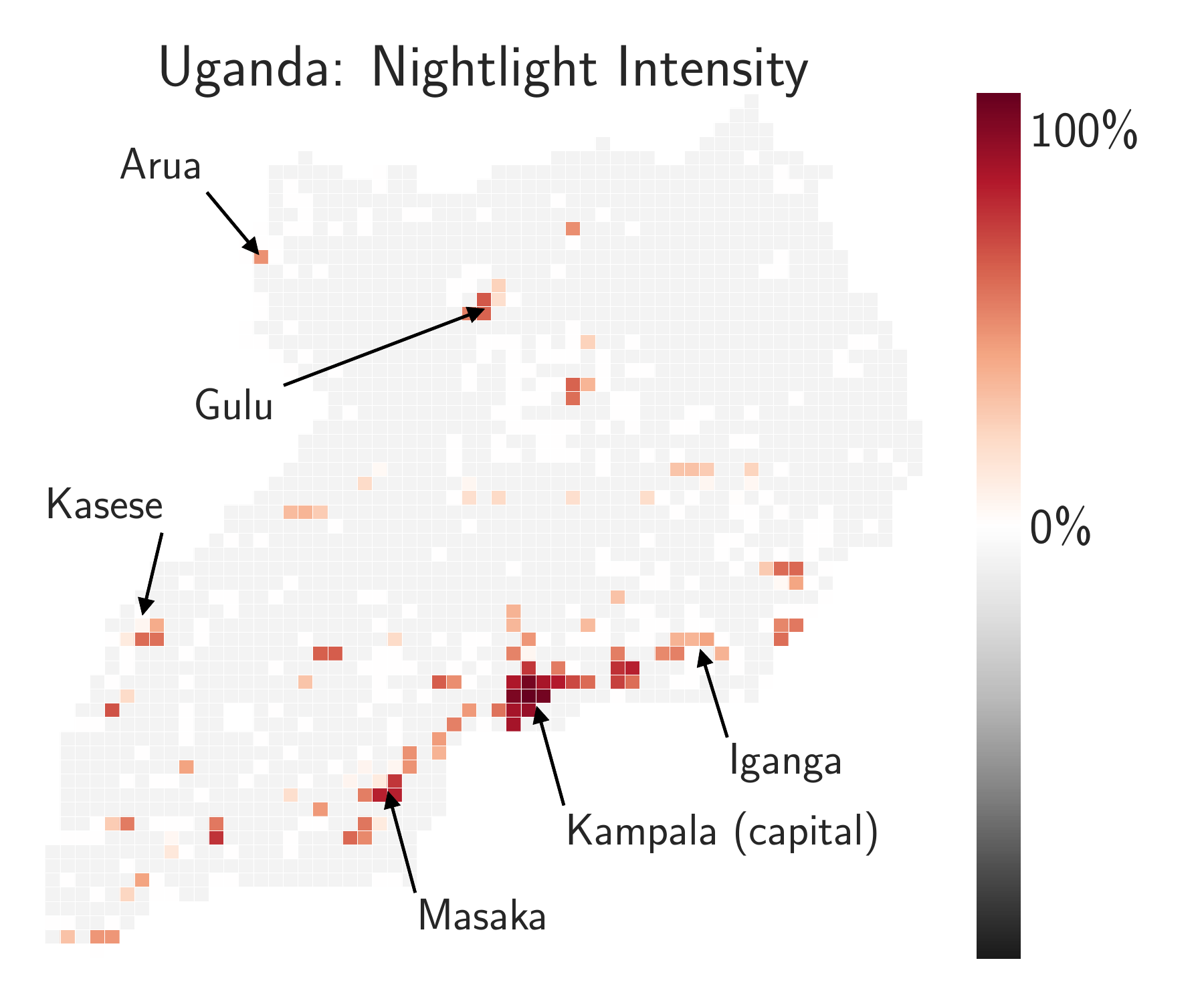}%
    \vspace{-2ex}\caption{}\label{fig:satellite-map-nl}
\end{subfigure}
\caption{%
Qualitative results for the Satellite dataset:
(a) Weights given to a subset of features by the two models (M1 and M2) discovered by CEN.
(b) How frequently M1 and M2 are selected for areas marked rural or urban (top) and the average proportion of Tenement-type households in an urban/rural area for which M1 or M2 was selected.
(c) M1 and M2 models selected for different areas on the Uganda map.
M1 tends to be selected for more urbanized areas while M2 is selected for the rest.
(d) Nightlight intensity of different areas.}
\label{fig:satellite}
\end{figure}

\paragraph{Qualitative analysis.}
We have discovered that, on this task, {\CEN} encoder tends to sharply select one of the two explanations from the dictionary (denoted M1 and M2) for different household clusters in Uganda (Figure~\ref{fig:satellite-models}).
In the survey data, each household cluster is marked as either urban or rural.
Conditional on a satellite image, {\CEN} tends to pick M1 more often for urban areas and M2 for rural (Figure~\ref{fig:satellite-barplots}).
Notice that different explanations weigh categorical features, such as \emph{reliability of the water source} or the \emph{proportion of houses with walls made of unburnt brick}, quite differently.
When visualized on the map, we see that {\CEN} selects M1 more frequently around the major city areas (Figures~\ref{fig:satellite-map-cen}), which also correlates with high nightlight intensity in those areas (Figures~\ref{fig:satellite-map-nl}).

The estimated approximate conditional entropy of the binary targets (poor vs. not poor) given the selected model: $\Hc(\Yv \mid \thetav = \mathrm{M1}) \approx 77\%$ and $\Hc(\Yv \mid \thetav = \mathrm{M2}) \approx 72\%$.
The high performance of {\CEN} along with high conditional entropy makes us confident in the produced explanations (Section~\ref{sec:encoder-regularization}) and allows us to draw conclusions about \emph{what causes the model} to classify certain households in different neighborhoods as poor in terms of interpretable categorical variables.

\subsubsection{Sentiment Analysis}
\label{sec:applications-classification-sentiment-analysis}

The next problem we consider is sentiment prediction of IMDB reviews~\citep{maas2011learning}.
The reviews are given in the form of English text (sequences of words) and the sentiment labels are binary (good/bad movie).
This dataset has 25k labelled reviews used for training and validation, 25k labelled reviews that are held out for test, and 50k unlabelled reviews.

\paragraph{Models.}
Following~\citet{johnson2016supervised}, we use a bi-directional LSTM with max-pooling as our baseline that predicts sentiment directly from text sequences.
The same architecture is used as the context encoder in {\CEN} that produces parameters for linear explanations.
The explanations are applied to either (a) a bag-of-words (BoW) features (with a vocabulary limited to 2,000 most frequent words excluding English stop-words) or (b) a 200-dimensional topic representation produced by a separately trained off-the-shelf topic model~\citep{blei2003latent}.

\begin{table}[t]
\renewcommand{\arraystretch}{1.1}
\centering
\caption{%
Sentiment classification error rate on IMDB dataset.
The standard error ($\pm$) is based on 5 different runs.
It is interesting to note that CENs establishes a new state of the art performance on the supervised prediction task while also outperforming or coming close to many of the semi-supervised methods that used additional 50k unlabeled reviews for pretraining.
All current state of the art methods leverage large-scale pretraining (the bottom section of the table); these results are not directly comparable with methods trained on IMDB data only and included for completeness.}
\label{tab:performance-imdb}
\small
\begin{tabular}{@{}llr@{}}
    \toprule
    \textbf{Reference}                              & \textbf{Method}                           & \textbf{Error $\downarrow$ (\%)}      \\
    \midrule
    \multicolumn{3}{c}{\textbf{Supervised} (trained on 25K labeled reviews only)}                                                       \\
    \midrule
    \citet{Maas:2011:LWV:2002472.2002491}           & Full + BoW (bnc)                          & 11.67                                 \\
    \citet{Dahl:2012:TRB:3042573.3042723}           & WRRBM + BoW (bnc)                         & 10.77                                 \\
    \citet{Wang:2012:BBS:2390665.2390688}           & NBSVM-bi                                  &  8.78                                 \\
    \citet{johnson2014effective}                    & seq2-bow$n$-CNN                           &  7.67                                 \\
    \citet{johnson2015semi}                         & oh-CNN (best)                             &  8.39                                 \\
    \citet{johnson2016supervised}                   & oh-2LSTMp (best)                          &  7.33                                 \\
    \midrule
    Ours                                            & CEN-\texttt{bow}                          & $6.52 \pm 0.15$                       \\
                                                    & CEN-\texttt{tpc}                          & $\boldsymbol{6.24} \pm 0.12$          \\
    \midrule
    \multicolumn{3}{c}{\textbf{Semi-supervised} (trained on 25K labeled + 50K unlabeled only)}                                          \\
    \midrule
    \citet{Maas:2011:LWV:2002472.2002491}           & Full + Unlabeled + BoW                    & 11.11                                 \\
    \citet{le2014distributed}                       & Paragraph vectors                         &  7.42                                 \\
    \citet{dai2015semi}                             & wv-LSTM                                   &  7.24                                 \\
    \citet{johnson2015semi}                         & oh-CNN                                    &  6.51                                 \\
    \citet{johnson2016supervised}                   & oh-2LSTMp                                 &  5.94                  \\
    \citet{Adji2017TopicRNN}                        & TopicRNN                                  &  6.28                                 \\
    \citet{miyato2016adversarial}                   & Virtual adversarial                       &  5.94                  \\
    \midrule
    Ours                                            & CEN-\texttt{bow}                          & ---                                   \\
                                                    & CEN-\texttt{tpc}                          & $\boldsymbol{5.48} \pm 0.09$          \\
    \midrule
    \multicolumn{3}{c}{\textbf{Semi-supervised via large-scale pre-training} (massive external data)}                                   \\
    \midrule
    \citet{gray2017gpu}                             & block-sparse LSTM                         &  5.01                                 \\
    \citet{howard2018universal}                     & ULMFiT                                    &  4.60                                 \\
    \citet{sachan2019revisiting}                    & Mixed-objective LSTM                      &  4.32                                 \\
    \citet{xie2019unsupervised}                     & BERT-large                                &  4.20                                 \\
    \citet{haonan2019graph}                         & Graph Star                                &  4.00                                 \\
    \bottomrule
\end{tabular}
\end{table}

\paragraph{Performance.}
Table~\ref{tab:performance-imdb} compares {\CEN} with other models from the literature.
Not only {\CEN} achieves the state-of-the-art accuracy on this dataset in the supervised setting, it also outperforms or comes close to many of the semi-supervised methods.
This indicates that the inductive biases provided by the {\CEN} architecture lead to a more significant performance improvement than most of the semi-supervised training methods on this dataset.
We also remark that classifiers derived from large-scale language models pretrained on massive unsupervised corpora~\citep[\eg,][]{gray2017gpu, howard2018universal, xie2019unsupervised} have become popular and now dominate the leaderboard for this task.

\paragraph{Qualitative analysis.}
After training {\CEN}-\texttt{tpc} with linear explanations in terms of topics on the IMDB dataset, we generate explanations for each test example and visualize histograms of the weights assigned by the explanations to the 6 selected topics in Figure~\ref{fig:imdb-dict-hist}.
The 3 topics in the top row are acting- and plot-related (and intuitively have positive, negative, or neutral connotation), while the 3 topics in the bottom are related to particular genre of the movies.
Note that acting-related topics turn out to be bimodal, \ie, contributing either positively, negatively, or neutrally to the sentiment prediction in different contexts.
{\CEN} assigns a high negative weight to the topic related to ``bad acting/plot'' and a high positive weight to ``great story/performance'' in most of the contexts (and treats those neutrally conditional on some of the reviews).
Interestingly, genre-related topics almost always have a negligible contribution to the sentiment which indicates that the learned model does not have any particular bias towards or against a given genre.

\begin{figure}[t]
    \centering
    \includegraphics[width=0.9\textwidth]{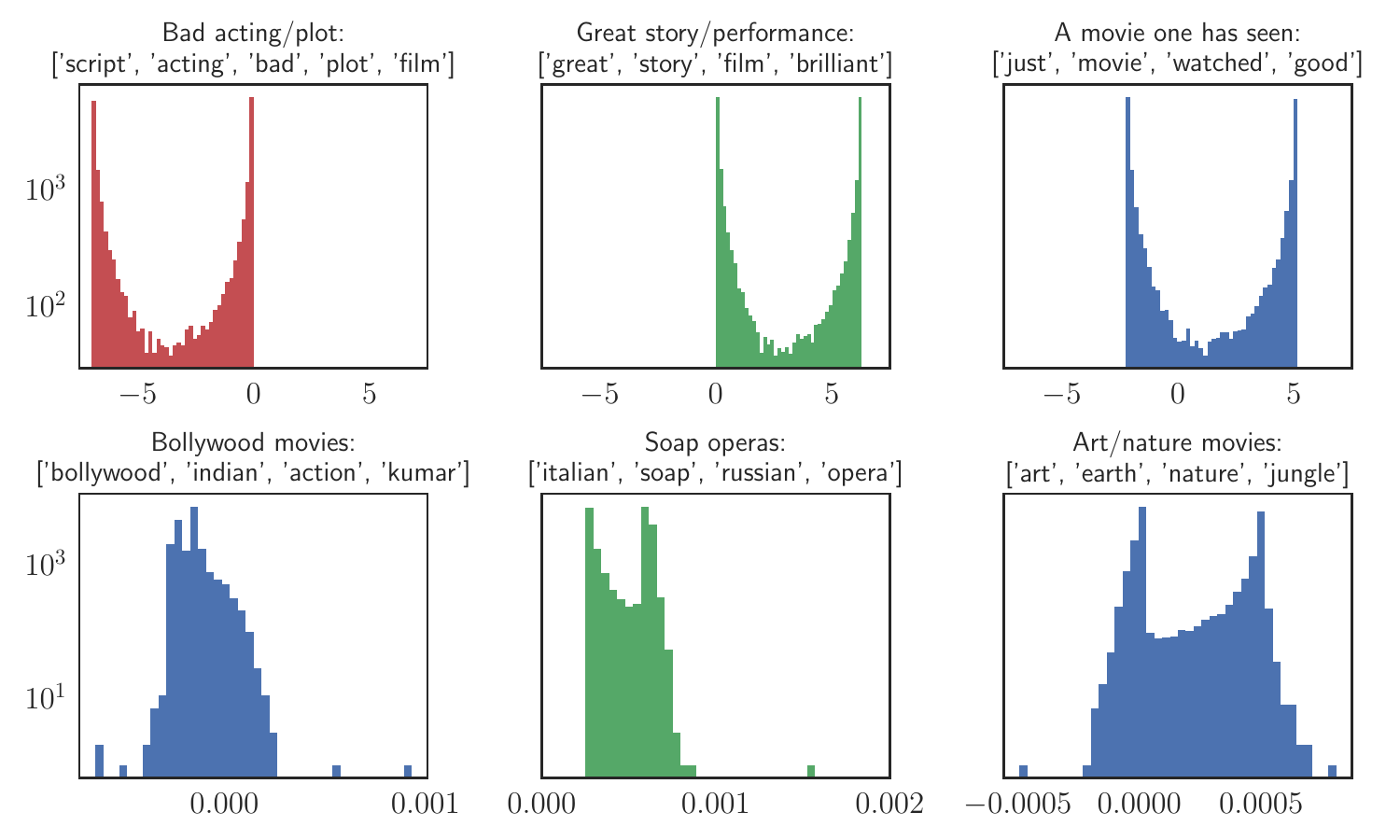}
    \caption{Histograms of test weights assigned by CEN to 6 topics: acting- and plot-related topics (upper charts), genre topics (bottom charts).}
    \label{fig:imdb-dict-hist}
\end{figure}

\subsubsection{Image Classification}
\label{sec:applications-classification-image-classification}

For the purpose of completeness, we also provide results on two classical image datasets: MNIST and CIFAR-10.
For {\CEN}, full images are used as the context; to imitate high-level features, we use (a) the original images cubically downscaled to $20 \times 20$ pixels, gray-scaled and normalized, and (b) HOG descriptors computed using $3 \times 3$ blocks~\citep{dalal2005histograms}.
For each task, we use linear regression and vanilla convolutional networks as baselines (a small convnet for MNIST and {\VGG} for CIFAR-10).
The results are reported in Table~\ref{tab:img_performance}.
{\CENs} are competitive with the baselines and do not exhibit deterioration in performance.
Visualization and analysis of the learned explanations is given in Appendix~\ref{app:qualitative-analysis} and the details on the architectures, hyperparameters, and training are given in Appendix~\ref{app:architectures}

\begin{table}[H]
\centering
\caption{Prediction error of the models on image classification tasks (averaged over 5 runs; the std. are on the order of the least significant digit).
The subscripts denote the features on which the linear models are built: pixels~(\texttt{pxl}), HOG~(\texttt{hog}).}
\label{tab:img_performance}
\vspace{-1ex}
\fontsize{8}{10}\selectfont
\def\arraystretch{1.2}
\setlength\tabcolsep{2.5pt}
\begin{tabular}[t]{@{}rrr rrrr rr rrr rrrr@{}}
\toprule
    \multicolumn{7}{c}{\textbf{MNIST (Error $\downarrow$, \%)}} & & &
    \multicolumn{7}{c}{\textbf{CIFAR10 (Error $\downarrow$, \%)}} \\
    \cmidrule{1-7} \cmidrule{10-16}
    LR$_\texttt{pxl}$ & LR$_\texttt{hog}$ & CNN & MoE$_\texttt{pxl}$  & MoE$_\texttt{hog}$ & CEN$_\texttt{pxl}$ & CEN$_\texttt{hog}$ & & & LR$_\texttt{pxl}$ &  LR$_\texttt{hog}$ & VGG & MoE$_\texttt{pxl}$ & MoE$_\texttt{hog}$ &  CEN$_\texttt{pxl}$ & CEN$_\texttt{hog}$ \\
\cmidrule{1-7} \cmidrule{10-16}
$8.00$ & $2.98$  & $\mathbf{0.75}$ & $1.23$ & $1.10$ & $\mathbf{0.76}$ & $\mathbf{0.73}$  & & & $60.1$ & $48.6$ & $9.4$ & $13.0$ & $11.7$ & $9.6$ & $\mathbf{9.2}$ \\
\bottomrule
\end{tabular}
\vspace{-1ex}
\end{table}

\subsection{Properties of Explanations}
\label{sec:applications-properties}

In this section, we look at the explanations from the regularization and consistency point of view.
As we show next, prediction via explanation not only has a strong regularization effect, but also always produces consistent locally linear models.
Additionally, we analyze the effect of entropy regularization, quantify how much CEN's performance relies on explanations, and discuss computational considerations and tradeoffs for CEN and LIME.

\begin{figure}[t]
    \centering
    \begin{subfigure}[b]{0.48\textwidth}
        \centering
        \includegraphics[width=0.99\textwidth]{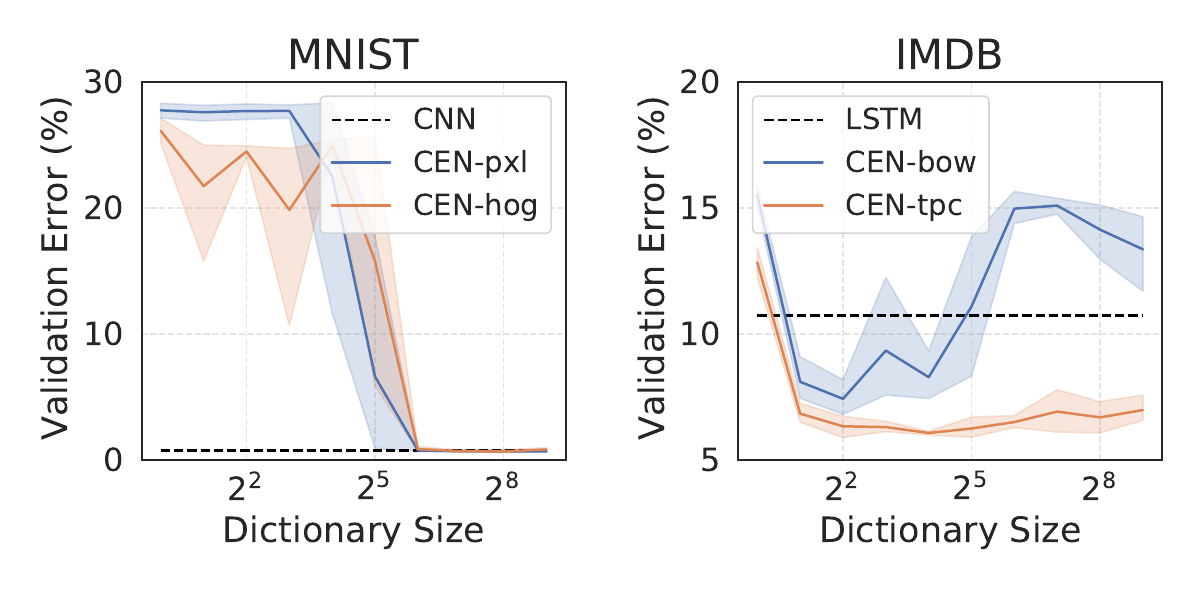}
        \vspace{-16pt}\caption{Validation error vs. dictionary size.}
        \label{fig:dict-size}
    \end{subfigure}
    \begin{subfigure}[b]{0.48\textwidth}
        \centering
        \includegraphics[width=0.96\textwidth]{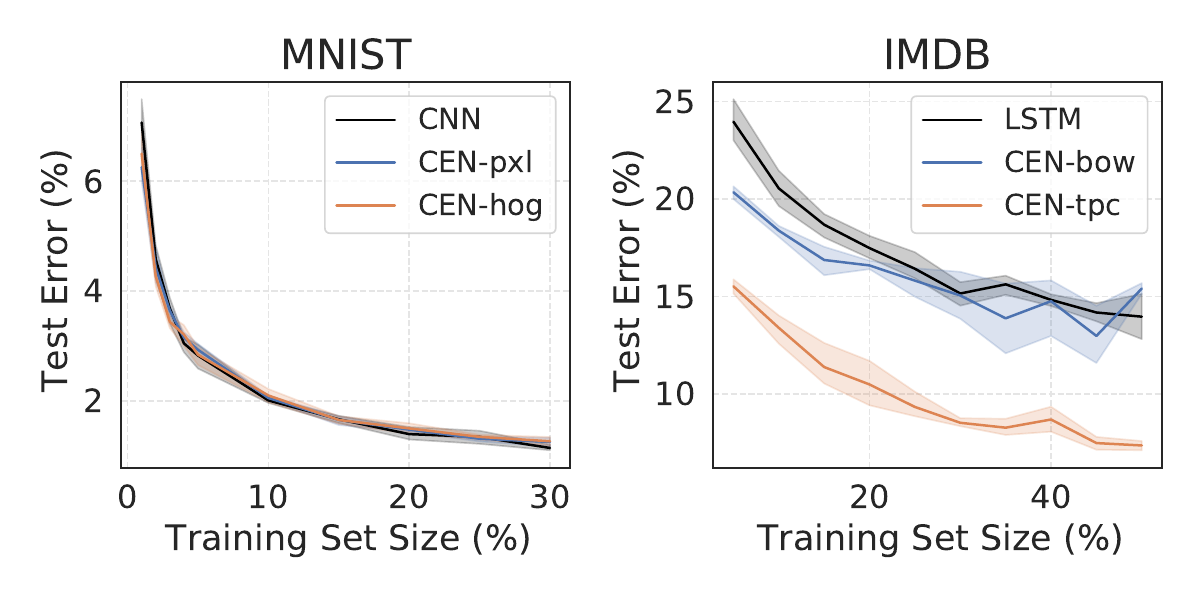}
        \caption{Test error vs. training data size.}
        \label{fig:sample-complexity}
    \end{subfigure}
    \caption{Analysis of the behavior of different {\CEN} models with different dictionary sizes (varied between 1 and 512), feature types, trained on full or on a subset of the data.
    Shaded regions denote 95\% CI based on 5 runs with different random seeds.
    (a) {\CEN} is sensitive to the size of the dictionary---there is a critical size such that models with explanation dictionaries smaller than that tend to significantly underperform.
    (b) Sample complexity of CENs.
    Models are trained with early stopping based on validation performance.}
\end{figure}

\subsubsection{Explanations as a Regularizer}
By controlling the dictionary size, we can control the expressivity of the model class specified by {\CEN}.
For example, when the dictionary size is 1, {\CEN} becomes equivalent to a linear model.\footnote{Note that {\CENs} with the dictionary size of 1 is still trained using stochastic optimization method as a neural network, which tends to yield a somewhat worse performance than the vanilla logistic regression.}
For larger dictionaries, {\CEN} becomes as flexible as a deep network (Figure~\ref{fig:dict-size}).
Adding a small sparsity penalty to each element of the dictionary (between $10^{-6}$ and $10^{-3}$, see Appendix~\ref{app:architectures}) helps to avoid overfitting for very large dictionary sizes, so that the model learns to use only a few dictionary atoms for prediction while shrinking the rest to zero.
Generally, dictionary size is a hyperparameter which optimal value depends on the data and the type of the interpretable features (\cf, \CEN-\texttt{bow} and \CEN-\texttt{tpc} on Figure~\ref{fig:dict-size}).

If explanations can act as a proper regularizer, we must observe improved sample efficiency of the model.
To verify this, we trained \CEN models on subsets of the data (size varied between 1\% and 30\% for MNIST and 2\% and 50\% for IMDB) with early stopping based on the validation performance.
The test error on MNIST and IMDB for different training set sizes is presented on Figure~\ref{fig:sample-complexity}.
On the IMBD dataset, \CEN-\texttt{tpc} required an order of magnitude fewer samples to match the baseline's performance, indicating efficient use of explanations for prediction.
Note that such drastic sample efficiency gains were observed on IMDB only for \CEN-\texttt{tpc} (\ie, when using topics as interpretable features); gains for \CEN-\texttt{bow} were noticeable but moderate; no sample efficiency gains were observed on MNIST for any of our \CEN models.

\subsubsection{Quantifying Contribution of the Explanations}
\label{sec:cen-explanation-contributions}

\begin{figure}[t]
    \centering
    \begin{subfigure}[b]{0.48\textwidth}
        \centering
        \includegraphics[width=0.99\textwidth]{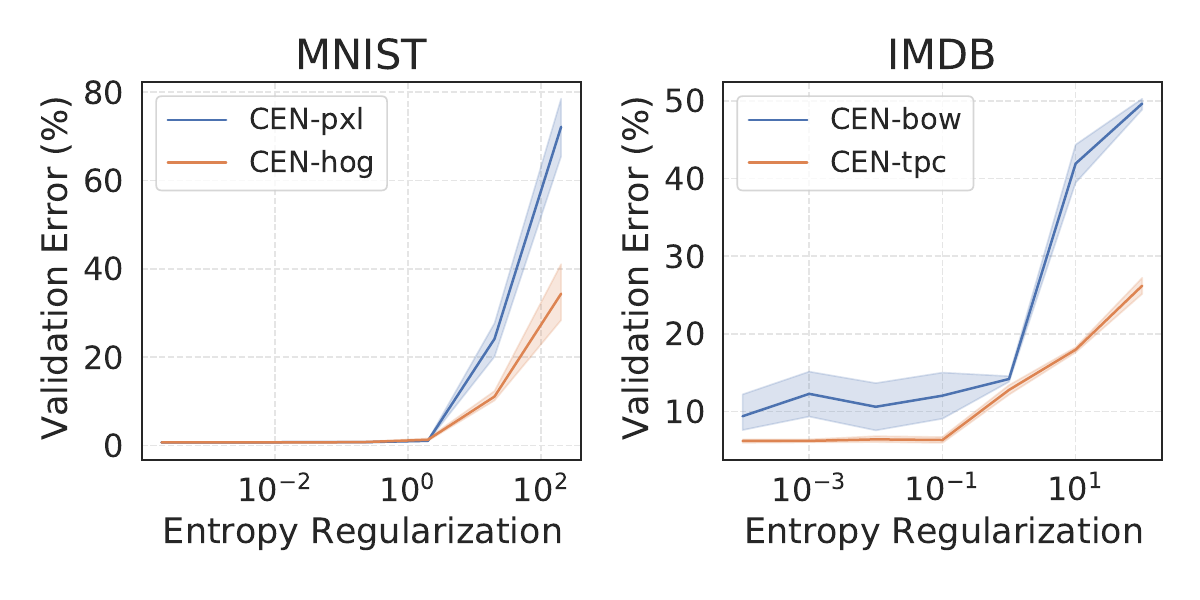}
        \caption{Validation error vs. entropy regularization.}
        \label{fig:err-vs-ent-reg}
    \end{subfigure}%
    \quad
    \begin{subfigure}[b]{0.48\textwidth}
        \centering
        \includegraphics[width=0.99\textwidth]{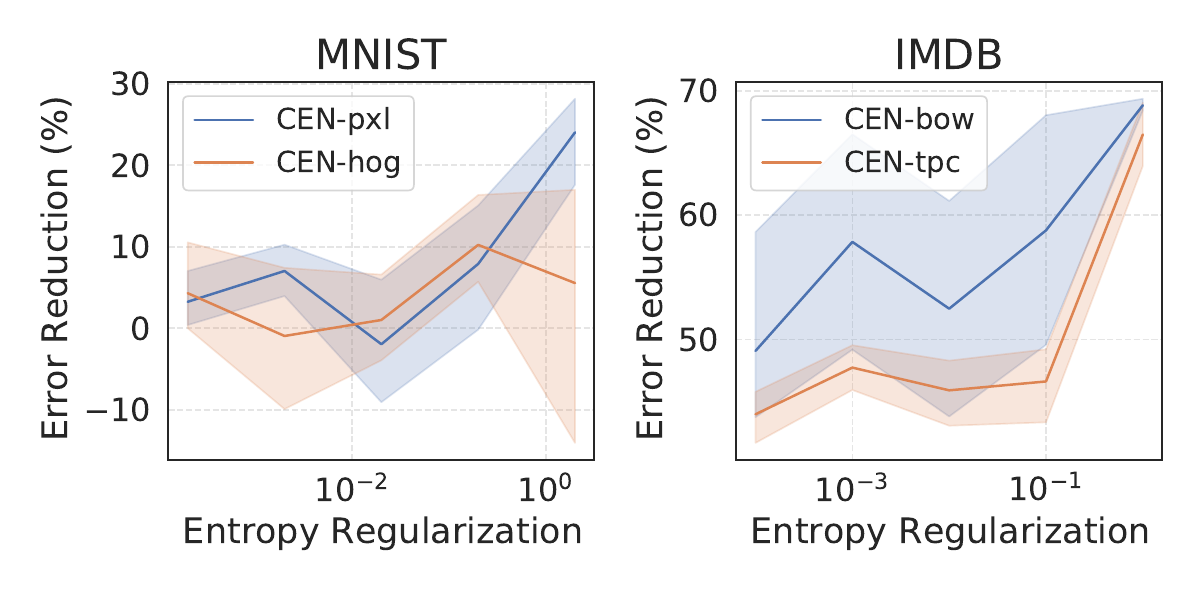}
        \caption{Expected contribution of the explanations.}
        \label{fig:exp-contrib-vs-ent-reg}
    \end{subfigure}
    \caption{The effects of entropy regularization on (a) the predictive performance of a \CEN model and (b) the lower bound on the contribution of the explanations to the relative predictive error reduction.
    Shaded regions are 95\% CI based on 5 runs with different random seeds.}
    \label{fig:ent-reg-effects}
\end{figure}

Even though improved sample efficiency and regularizing effects of explanations indicate their non-trivial contribution indirectly, we wish to further quantify such contribution of explanations to the predictive performance of \CEN.
To do so, we run a set of experiments where we vary conditional entropy regularization coefficient and measure (a) performance of \CEN on the validation set and (b) expected lower bound on the relative reduction of predictive error due to explanations, defined as $\left[\prob{\hat\Yv \neq \Yv \mid \cv} - \prob{\hat\Yv \neq \Yv \mid \xv, \cv} \right] \big{/}\ \prob{\hat\Yv \neq \Yv \mid \cv}$.

As we have shown in Section~\ref{sec:encoder-regularization}, conditional entropy regularization encourages \CEN models to learn context representations that are minimally correlated with the targets, and hence makes the model rely on the explanations rather than contextual information only.
Figure~\ref{fig:err-vs-ent-reg} shows that entropy regularization generally does not affect predictive performance of a \CEN model, unless the regularization coefficient becomes too large (\eg, an order of magnitude larger than the predictive cross-entropy loss).
Increasing conditional entropy regularization leads to \CEN models whose performance relies more on explanations (Figure~\ref{fig:exp-contrib-vs-ent-reg}).
However, note that even without entropy regularization, explanations have a significant relative contribution to the reduction of the predictive error of \CEN, ranging between 10-20\% on MNIST and 40-60\% on IMDB.
This indicates that, while conditional entropy regularization is beneficial, even without it \CEN still learns to generate meaningful, non-spurious explanations.

\subsubsection{Consistency of Explanations}
\label{sec:cen-consistency}

While regularization is a useful aspect, the main use case for explanations is model diagnostics.
Linear explanations assign weights to the interpretable features, $\Xv$, and thus the quality of explanations depends on the quality of the selected features.
In this section, we evaluate explanations generated by {\CEN} and LIME (a post-hoc method).
In particular, we consider two cases: (a) the features are corrupted with additive noise, and (b) the selected features are incomplete.
For analysis, we use MNIST and IMDB datasets.
Our key question is:

\vspace{0.4ex}
\begin{blockquote}
    \centering
    \emph{Can we trust the explanations built on noisy or incomplete features?}
\end{blockquote}

\begin{figure}[h]
    \centering
    \begin{subfigure}[t]{0.48\textwidth}
        \includegraphics[width=\textwidth]{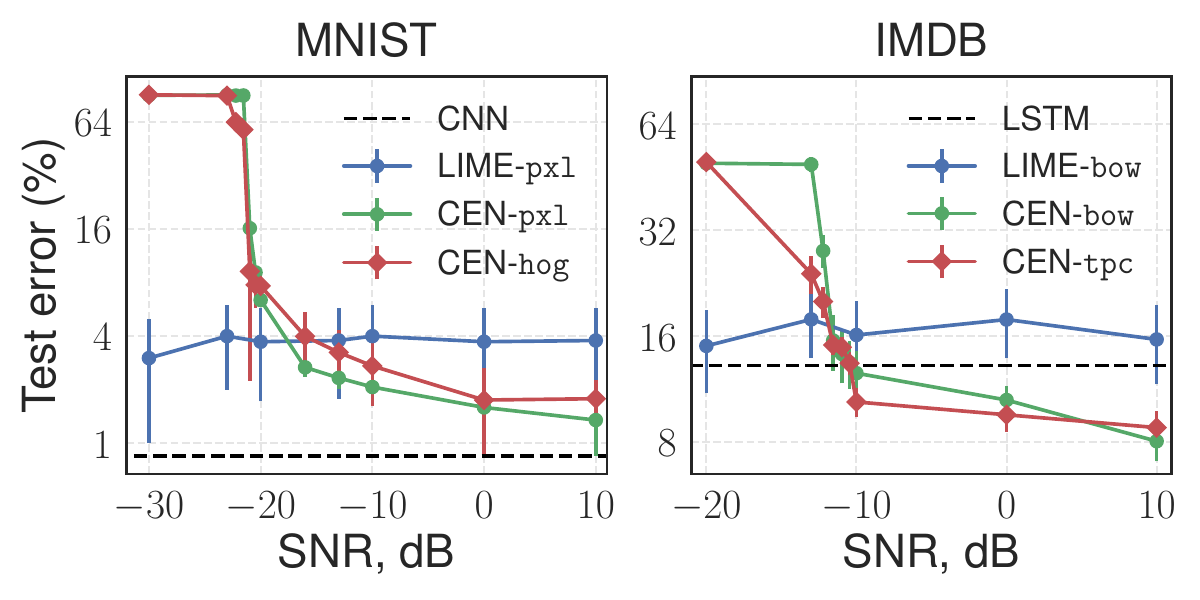}%
        \caption{Explanation test error vs. feature noise.}\label{fig:signal-to-noise}
    \end{subfigure}
    \quad
    \begin{subfigure}[t]{0.48\textwidth}
        \includegraphics[width=\textwidth]{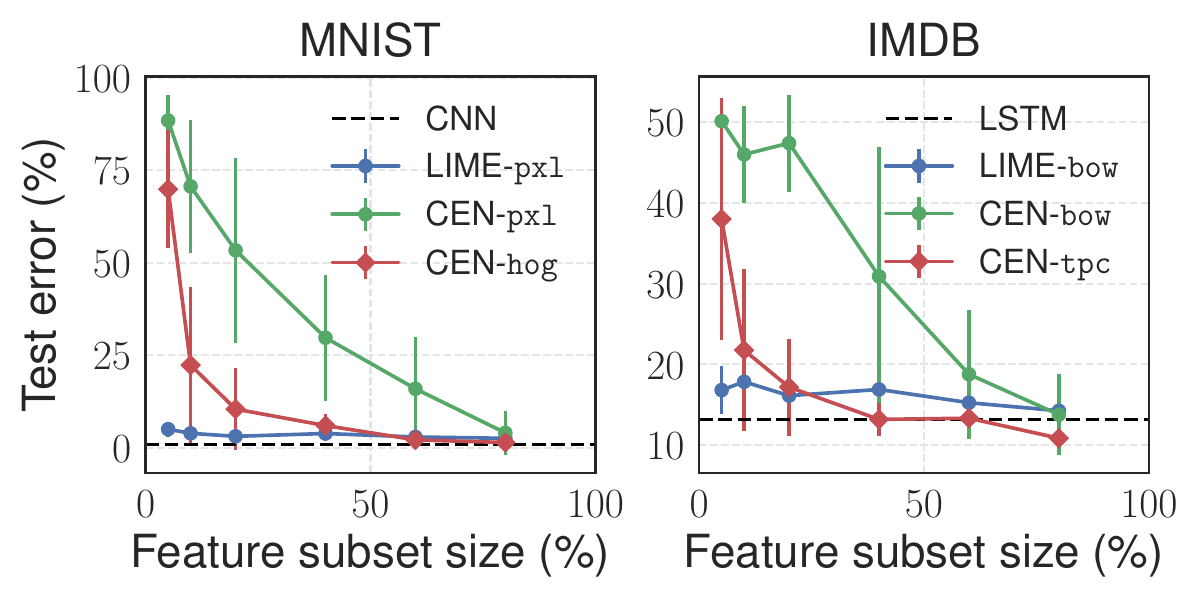}%
        \caption{Explanation test error vs. feature size.}\label{fig:incomplete-features}
    \end{subfigure}
    \caption{%
    The effect of feature quality on explanations.
    (a) Explanation test error vs. the level of the noise added to the interpretable features.
    (b) Explanation test error vs. the total number of interpretable features.
    Error bars indicate 95\% CI.}
\end{figure}

\paragraph{The effect of noisy features.}
In this experiment, we inject noise\footnote{We use Gaussian noise with zero mean and select variance for each signal-to-noise ratio level appropriately.} into the features $\Xv$ and ask {\LIME} and {\CEN} to fit explanations to the corrupted features.
Note that after injecting noise, each data point has a noiseless representation $\Cv$ and a noisy $\Xv$.
{\LIME} constructs explanations by approximating the decision boundary of the baseline model trained to predict $\Yv$ from $\Cv$ features only.
{\CEN} is trained to construct explanations given $\Cv$ and then make predictions by applying explanations to $\Xv$.
The predictive performance of the produced explanations on noisy features is given on Figure~\ref{fig:signal-to-noise}.
Since baselines take only $\Cv$ as inputs, their performance stays the same (dashed line)
Regardless of the noise level, {\LIME} ``successfully'' overfits explanations---it is able to almost perfectly approximate the decision boundary of the baselines essentially using pure noise.
On the other hand, performance of {\CEN} degenerates with the increasing noise level indicating that the model fails to learn when the selected interpretable representation is of very low quality.

\paragraph{The effect of feature selection.}
Using the same setup, instead of injecting noise into $\Xv$, we construct $\Xv$ by randomly subsampling a set of dimensions.\footnote{Subsampling dimensions from $\Xv$ is done to resemble human subjectivity in selecting semantically meaningful features for model interpretation.}
Figure~\ref{fig:incomplete-features} demonstrates that while performance of {\CENs} degrades proportionally to the size of $\Xv$ (\ie, less informative features imply worse performance for \CEN), we see that, again, {\LIME} is again able to perfectly fit explanations to the decision boundary of the original models, despite the loss of information in the interpretable features $\Xv$.

These two experiments indicate a major drawback of explaining predictions post-hoc: when constructed on poor, noisy, or incomplete features, such explanations can overfit an arbitrary decision boundary of a predictor and are likely to be meaningless or misleading.
For example, predictions of a perfectly valid model might end up getting absurd explanations which is unacceptable from the decision support point of view.\footnote{Similar behavior has been observed in recent work that studied post-hoc explanation systems in adversarial settings~\citep{dombrowski2019explanations, lakkaraju2019fool}.}
On the other hand, if we use {\CEN} to generate explanations, high predictive performance would indicate presence of a meaningful signal the selected interpretable features and explanations.

\subsubsection{Computational Overhead and Considerations}
\label{sec:computational-considerations}

\begin{wraptable}{r}{5.2cm}
\vspace{-1.5ex}
\caption{Compute time overhead.}
\vspace{-1ex}
\centering
\scriptsize
\def\arraystretch{1.2}
\begin{tabular}[t]{@{}l|r|r@{}}
    \toprule
    \textbf{Dataset} & \textbf{CEN} & \textbf{LIME} \\
    \midrule
    \multicolumn{3}{c}{Training time overhead} \\
    \midrule
    MNIST & $18.6 \pm 1.7 \%$ & --- \\
    IMDB & $1.8 \pm 0.5 \%$ & --- \\
    Satellite & $0.4 \pm 0.1 \%$ & --- \\
    \midrule
    \multicolumn{3}{c}{Explanation time per data point} \\
    \midrule
    MNIST & $0.05 \pm 0.03$ ms  & $77 \pm 9$ ms \\
    IMDB & $0.07 \pm 0.03$ ms  & $38 \pm 5$ ms \\
    Satellite & $0.01 \pm 0.01$ ms & $22 \pm 6$ ms \\
    \bottomrule
\end{tabular}
\label{tab:cen-vs-lime}
\end{wraptable}

Given all the advantages of \CEN, such as often improved performance and consistency of linear explanations, what is the added computational overhead?
It turns out that \CEN compares quite favorably against the typical bundle solution: \emph{a vanilla deep network plus a post-hoc explanation system (\eg, LIME)}.
The \CEN architecture essentially adds a single bi-linear layer to the top of a network, resulting in a mild overhead of $O(D \times |\Xc|)$ multiplication and addition operations during the forward pass through the model.
The training time overhead in aggregate does not exceed 20\% when compared to a vanilla deep network of the same architecture (Table~\ref{tab:cen-vs-lime}).
Note that the models we used in our experiments are tiny by the modern standards, and we expect \CEN's relative compute overhead to be even smaller for modern large-scale architectures.
Also note that \CENs generate explanations more than three orders of magnitude faster than \LIME, manly because the latter has to solve an optimization problem for each instance of interest to obtain an explanation.%

\subsection{Solving Survival Analysis using {\CEN} with Structured Explanations}
\label{sec:applications-survival-analysis}

In this final case study, we design {\CENs} with structured explanations for survival prediction.
We provide some general background on survival analysis and the structured prediction approach proposed by \citet{lin2011learning}, then introduce {\CENs} with linear CRF-based explanations for survival analysis, and conclude with experimental results on two public datasets from the healthcare domain.

\subsubsection{Background on Survival Analysis via Structured Prediction}
\label{sec:survival-analysis-background}

In survival time prediction, our goal is to estimate the risk and occurrence time of an undesirable event in the future (\eg, death of a patient, earthquake, hard drive failure, customer turnover, etc.).
A common approach is to model the \emph{survival time}, $T$, either for a population (i.e., average survival time) or for each instance.
Classical approaches, such as Aalen additive hazard~\citep{aalen} and Cox proportional hazard~\citep{cox} models, view survival analysis as continuous time prediction and hence a regression problem.

Alternatively, the time can be discretized into intervals (\eg, days, weeks, etc.), and the survival time prediction can be converted into a multi-task classification problem \citep{efron1988logistic}.
Taking this approach one step further, \citet{lin2011learning} noticed that the output space of such a multitask classifier is structured in a particular way, and proposed a model called \emph{sequence of dependent regressors}.
The model is essentially a CRF with a particular structure of the pairwise potentials between the labels.
We introduce the setup in our notation below.

Let the data instances be represented by tuples $(\cv, \xv, \yv)$, where targets are now sequences of $m$ binary variables, $\yv := (y^1, \dots, y^m)$, that indicate occurrence of an event at the corresponding time intervals.\footnote{We assume that the occurrence time is lower bounded by $t_0 = 0$, upper bounded by some $t_m = T$, and discretized into intervals $[t_{i}, t_{i+1})$, where $i \in \{0, \dots, m-1\}$.}
If the event occurred at time $t \in [t_{i}, t_{i+1})$, then $y^j = 0,\, \forall j \leq i$ and $y^k = 1,\, \forall k > i$.
If the event was \emph{censored} (\ie, we lack information for times after $t$), we represent targets $(y^{i+1}, \dots, y^m)$ with latent variables.
Importantly, only $m + 1$ sequences are valid under these conditions, \ie, assigned non-zero probability by the model.
This suggests a linear CRF model defined as follows:
\begin{equation}
    \prob{\Yv = (y^1, y^2, \dots, y^m) \mid \xv, \thetav^{1:m}} \propto
    \exp\left\{\sum_{t=1}^m y^i (\xv^\top \thetav^t) + \omega(y^t, y^{t + 1})\right\}
\end{equation}
The potentials between  $\xv$ and $y^{1:m}$ are linear functions parameterized by $\thetav^{1:m}$.
The pairwise potentials between targets, $\omega(y_i, y_{i + 1})$, ensure that non-permissible configurations where $(y_i = 1, y_{i+1} = 0)$ for some $i \in \{0, \dots, m-1\}$ are improbable (\ie, $\omega(1, 0) = -\infty$ and $\omega(0, 0) = \omega_{00}$, $\omega(0, 1) = \omega_{01}$, $\omega(1, 1) = \omega_{10}$ are learnable parameters).

\begin{figure}[t]
    \centering
    \begin{subfigure}[b]{0.49\textwidth}
        \centering
        \begin{tikzpicture}[
            inner/.style={draw, fill=yellow!20, thin, inner sep=2pt},
        ]
            \tikzstyle{latent} = [circle,fill=white,draw=black,inner sep=1pt,
            minimum size=17pt, font=\fontsize{9}{9}\selectfont, node distance=1]

            \node (c) [obs] {$\cv$};
            \node (x_) [obs, transparent, right=10pt of c] {};

            \node (h4) [det, right=10pt of c] {$\hv^1$};
            \node (h5) [det, right=15pt of h4] {$\hv^2$};
            \node (h6) [det, right=15pt of h5] {$\hv^3$};

            \edge[bend right=45] {c} {h4};
            \edge[bend right=45] {c} {h5};
            \edge[bend right=45] {c} {h6};

            \edge {h4} {h5};
            \edge {h5} {h6};

            \node (x1_) [obs, transparent, above=10pt of h4] {};
            \node (x3_) [obs, transparent, above=10pt of h6] {};
            \node (y3_) [obs, transparent, above=17pt of x3_] {};
            \plate[inner, inner sep=2pt] {CEN} {(x1_)(x3_)(y3_)} {};

            \node (x1) [obs, above=10pt of h4] {$\xv^1$};
            \node (x2) [obs, above=10pt of h5] {$\xv^2$};
            \node (x3) [obs, above=10pt of h6] {$\xv^3$};

            \node (y1) [obs, above=17pt of x1] {$y^1$};
            \node (y2) [obs, above=17pt of x2] {$y^2$};
            \node (y3) [latent, above=17pt of x3] {$y^3$};

            \factor[above=3pt of x1]{x1-y1} {} {} {};
            \node (theta1) [const, right=9pt of x1-y1] {$\thetav^1$};
            \factoredge {x1} {x1-y1} {y1};
            \factor[above=3pt of x2]{x2-y2} {} {} {};
            \node (theta1) [const, right=9pt of x2-y2] {$\thetav^2$};
            \factoredge {x2} {x2-y2} {y2};
            \factor[above=3pt of x3]{x3-y3} {} {} {};
            \node (theta1) [const, right=9pt of x3-y3] {$\thetav^3$};
            \factoredge {x3} {x3-y3} {y3};

            \factor[right=7pt of y1]{y1-y2} {} {} {};
            \factoredge[-] {y1} {y1-y2} {y2};
            \factor[right=7pt of y2]{y2-y3} {} {} {};
            \factoredge[-] {y2} {y2-y3} {y3};

            \edge[bend right=60, dashed] {h4} {x1-y1};
            \edge[bend right=60, dashed] {h5} {x2-y2};
            \edge[bend right=60, dashed] {h6} {x3-y3};

            \draw[dotted, thick] (3.0, 2.75) -- (3.0, 1.75);
            \node[anchor=west, font=\fontsize{7}{8}\selectfont, inner sep=0pt] at (3.05, 2.95) {$t \in [t_2, t_3)$};
        \end{tikzpicture}
        \vspace{-2ex}
        \caption{Architecture used for SUPPORT2.}
        \label{fig:MLP-CRF}
    \end{subfigure}
    \begin{subfigure}[b]{0.49\textwidth}
        \centering
        \begin{tikzpicture}[
            inner/.style={draw, fill=yellow!20, thin, inner sep=2pt},
        ]
            \tikzstyle{latent} = [circle,fill=white,draw=black,inner sep=1pt,
            minimum size=17pt, font=\fontsize{9}{9}\selectfont, node distance=1]

            \node (c1) [obs] {$\cv_1$};
            \node (c2) [obs, right=14pt of c1] {$\cv_2$};
            \node (c3) [obs, right=14pt of c2] {$\cv_3$};
            \node (x_) [obs, transparent, right=10pt of c3] {};

            \node (h1) [det, below=10pt of c1] {$\hv_1$};
            \node (h2) [det, right=10pt of h1] {$\hv_2$};
            \node (h3) [det, right=10pt of h2] {$\hv_3$};

            \node (h4) [det, right=10pt of h3] {$\hv^1$};
            \node (h5) [det, right=15pt of h4] {$\hv^2$};
            \node (h6) [det, right=15pt of h5] {$\hv^3$};

            \edge {c1} {h1};
            \edge {c2} {h2};
            \edge {c3} {h3};

            \edge {h1} {h2};
            \edge {h2} {h3};
            \edge[bend right=45] {h3} {h4};
            \edge[bend right=45] {h3} {h5};
            \edge[bend right=45] {h3} {h6};
            \edge {h4} {h5};
            \edge {h5} {h6};

            \node (x1_) [obs, transparent, above=10pt of h4] {};
            \node (x3_) [obs, transparent, above=10pt of h6] {};
            \node (y3_) [obs, transparent, above=17pt of x3_] {};
            \plate[inner, inner sep=2pt] {CEN} {(x1_)(x3_)(y3_)} {};

            \node (x1) [obs, above=10pt of h4] {$\xv^1$};
            \node (x2) [obs, above=10pt of h5] {$\xv^2$};
            \node (x3) [obs, above=10pt of h6] {$\xv^3$};

            \node (y1) [obs, above=17pt of x1] {$y^1$};
            \node (y2) [obs, above=17pt of x2] {$y^2$};
            \node (y3) [latent, above=17pt of x3] {$y^3$};

            \factor[above=3pt of x1]{x1-y1} {} {} {};
            \node (theta1) [const, right=9pt of x1-y1] {$\thetav^1$};
            \factoredge {x1} {x1-y1} {y1};
            \factor[above=3pt of x2]{x2-y2} {} {} {};
            \node (theta1) [const, right=9pt of x2-y2] {$\thetav^2$};
            \factoredge {x2} {x2-y2} {y2};
            \factor[above=3pt of x3]{x3-y3} {} {} {};
            \node (theta1) [const, right=9pt of x3-y3] {$\thetav^3$};
            \factoredge {x3} {x3-y3} {y3};

            \factor[right=7pt of y1]{y1-y2} {} {} {};
            \factoredge[-] {y1} {y1-y2} {y2};
            \factor[right=7pt of y2]{y2-y3} {} {} {};
            \factoredge[-] {y2} {y2-y3} {y3};

            \edge[bend right=60, dashed] {h4} {x1-y1};
            \edge[bend right=60, dashed] {h5} {x2-y2};
            \edge[bend right=60, dashed] {h6} {x3-y3};

            \draw[dotted, thick] (5.30, 1.75) -- (5.30, 0.75);
            \node[anchor=west, font=\fontsize{7}{8}\selectfont, inner sep=0pt] at (5.25, 1.95) {$t \in [t_2, t_3)$};
        \end{tikzpicture}
        \vspace{-2ex}
        \caption{Architecture used for PhysioNet.}
        \label{fig:LSTM-CRF}
    \end{subfigure}%
    \caption{%
    {\CEN} architectures used in our survival analysis experiments.
    Context encoders were (a) single hidden layer MLP and (b) LSTM.
    Encoders produced inputs for another LSTM over the output time intervals (denoted with $\hv^1$, $\hv^2$, $\hv^3$ hidden states respectively).}
    \label{fig:CEN-CRF}
    \vspace{-2ex}
\end{figure}

To train the model, \citet{lin2011learning} optimize the following objective:
\begin{equation}
    \label{eq:seq-dep-reg-objective}
    \min_{\Thetav} C_1\sum_{t=1}^m \|\theta^{t}\|^2 + C_2 \sum_{t=1}^{m-1} \|\theta^{t+1} - \theta^{t}\|^2 - \log \Lc(\Yv, \Xv; \thetav^{1:m})
\end{equation}
where the first two terms are regularization and the last term is the log of the likelihood:
\begin{equation}
    \label{eq:seq-dep-reg-likelihood}
    \Lc(\Yv, \Xv; \Thetav) = \sum_{i \in \text{NC}} \prob{T = t_i \mid \xv_i, \Thetav} + \sum_{j \in \text{C}} \prob{T > t_j \mid \xv_j, \Thetav}
\end{equation}
where $\text{NC}$ denotes the set of non-censored instances (for which we know the outcome times, $t_i$) and $\text{C}$ is the set of censored inputs (for which we only know the censorship times, $t_j$).
The likelihood of an uncensored and a censored event at time $t \in [t_j, t_{j+1})$ are as follows:\\[-2ex]
\begin{equation}
    \label{eq:uncensored-censored-likelihood}
    \begin{aligned}
        \prob{T = t \mid \xv, \thetav^{1:m}} & = \exp\left\{\sum_{i=j}^m \xv^\top \thetav^i\right\} \Bigg{/} \sum_{k=0}^m \exp\left\{\sum_{i=k+1}^m \xv^\top \thetav^i\right\} \\
        \prob{T \geq t \mid \xv, \thetav^{1:m}} & = \sum_{k=j+1}^m \exp\left\{\sum_{i=k+1}^m \xv^\top \thetav^i\right\}\Bigg{/} \sum_{k=0}^m \exp\left\{\sum_{i=k+1}^m \xv^\top \thetav^i\right\}
    \end{aligned}
\end{equation}

\subsubsection{{\CEN} with Structured Explanations for Survival Analysis}
\label{sec:survival-analysis-cen}

To construct {\CEN} for survival analysis, we follow the structured survival prediction setup described in the previous section.
We define {\CEN} with linear CRF explanations as follows:
\begin{equation}
\label{eq:CEN-CRF}
\begin{aligned}
    & \thetav^t \sim \prob[\wv]{\thetav^t \mid \cv},\, \yv \sim \prob{\Yv \mid \xv, \thetav^{1:m}}, \\
    & \prob{\Yv = (y^1, y^2, \dots, y^m) \mid \xv, \thetav^{1:m}} \propto \exp\left\{\sum_{t=1}^m y^i (\xv^\top \thetav^t) + \omega(y^t, y^{t + 1})\right\}, \\
    & \prob[\wv]{\thetav^t \mid \cv} := \delta(\thetav^t, \phi^t_{\wv, \Dv}(\cv)),\, \phi^t_{\wv, \Dv}(\cv) := \alphav(\hv^{t})^\top \Dv,\, \hv^t := \mathrm{RNN}(\hv^{t-1}, \cv)
    \end{aligned}
\end{equation}
Note that an RNN-based context encoder generates different explanations for each time point, $\thetav^{t}$ (Figure~\ref{fig:CEN-CRF}).
All $\thetav^{t}$ are generated using context- and time-specific attention $\alphav(\hv^t)$ over the dictionary $\Dv$.
We adopt the training objective from \eqref{eq:seq-dep-reg-objective} with the same likelihood \eqref{eq:seq-dep-reg-likelihood}.
The model is a special case of {\CENs} with structured explanations (Section~\ref{sec:structured-explanations}).

\subsubsection{Survival Analysis of Patients in Intense Care Units}
\label{sec:survival-analysis-experiments}

We evaluate the proposed model against baselines on two survival prediction tasks.

\paragraph{Datasets.}
We use two publicly available datasets for survival analysis of of the intense care unit (ICU) patients:
(a) SUPPORT2,\footnote{\url{http://biostat.mc.vanderbilt.edu/wiki/Main/DataSets}.} and
(b) data from the PhysioNet 2012 challenge.\footnote{\url{https://physionet.org/challenge/2012/}.}
The data was preprocessed and used as follows.

\begin{table*}[t!]
    \centering
    \caption{\small%
    Performance of the baselines and {\CENs} with structured explanations.
    The numbers are averages from 5-fold cross-validation; the std. are on the order of the least significant digit.
    ``Acc@K'' denotes accuracy at the K-th temporal quantile (see main text for explanation).}
    \vspace{-1ex}
    \fontsize{8}{10}\selectfont
    \def\arraystretch{1.0}
    \begin{tabular}[t]{@{}lrrrr|lrrrr@{}}
        \toprule
        \multicolumn{5}{c|}{\textbf{SUPPORT2}} &
        \multicolumn{5}{c}{\textbf{PhysioNet Challenge 2012}} \\
        \midrule
        \textbf{Model} &
        \textbf{Acc@25} & \textbf{Acc@50} & \textbf{Acc@75} & \textbf{RAE} &
        \textbf{Model} &
        \textbf{Acc@25} & \textbf{Acc@50} & \textbf{Acc@75} & \textbf{RAE} \\
        \midrule
        Cox         & $84.1$    & $73.7$    & $47.6$    & $0.90$    &
        Cox         & $93.0$    & $69.6$    & $49.1$    & $0.24$    \\
        Aalen       & $87.1$    & $66.2$    & $45.8$    & $0.98$    &
        Aalen       & $93.3$    & $78.7$    & $57.1$    & $0.31$    \\
        CRF         & $84.4$    & $89.3$    & $79.2$    & $0.59$    &
        CRF         & $93.2$    & $85.1$    & $65.6$    & $0.14$    \\
        MLP-CRF     & $\mathbf{87.7}$    & $89.6$    & $80.1$    & $0.62$    &
        LSTM-CRF    & $93.9$    & $86.3$    & $68.1$    & $\mathbf{0.11}$    \\
        \midrule
        MLP-CEN     & $84.4$    & $\mathbf{96.2}$    & $\mathbf{83.3}$    & $\mathbf{0.52}$    &
        LSTM-CEN    & $\mathbf{94.8}$    & $\mathbf{87.5}$    & $\mathbf{70.1}$    & $\mathbf{0.09}$    \\
        \bottomrule
    \end{tabular}
    \label{tab:performance-survival}
    \vspace{-3ex}
\end{table*}

\underline{\texttt{SUPPORT2}}:
The data had 9105 patient records (7105 training, 1000 validation, 1000 test) and 73 variables.
We selected 50 variables for both $\Cv$ and $\Xv$ features (\ie, the context and the variables of interest were identical).
Categorical features (such as \texttt{race} or \texttt{sex}) were one-hot encoded.
The values of all features were non-negative, and we filled the missing values with -1 to preserve the information about missingness.
For CRF-based predictors, we capped the survival timeline at 3 years and converted it into 156 discrete 7-day intervals.

\underline{\texttt{PhysioNet}}:
The data had 4000 patient records, each represented by a 48-hour irregularly sampled 37-dimensional time-series of different measurements taken during the patient's stay at the ICU.
We resampled and mean-aggregated the time-series at 30 min frequency.
This resulted in a large number of missing values that we filled with 0.
The resampled time-series were used as the context, $\Cv$.
For the attributes, $\Xv$, we took the values of the last available measurement for each variable in the series.
For CRF-based predictors, we capped the survival timeline at 60 days and converted into 60 discrete intervals.

\paragraph{Models.}
For baselines, we use the classical Aalen and Cox models\footnote{Implementation based on \url{https://github.com/CamDavidsonPilon/lifelines}.} and the CRF from \citep{lin2011learning}.
All the baselines used $\Xv$ as their inputs.
Next, we combine CRFs with neural encoders in two ways:
\begin{itemize}[noitemsep,topsep=2pt,parsep=2pt,leftmargin=2em]
    \item[(i)] We apply CRFs to the outputs from the neural encoders (the models denoted MLP-CRF and LSTM-CRF).\footnote{Similar models have been very successful in the natural language applications~\citep{collobert2011natural}.}
    Note that parameters of such CRF layer assign weights to the latent features and are not interpretable in terms of the attributes of interest.
    \item[(ii)] We use {\CENs} with CRF-based explanations, that process the context variables, $\Cv$, using the same neural networks as in (i) and output the sequence of parameters $\thetav^{1:m}$ for CRFs, while the latter act on the attributes, $\Xv$, to make structured predictions.
\end{itemize}
More details on the architectures and training are given in Appendix~\ref{app:architectures}.

\begin{figure}[t]
\centering
\includegraphics[width=\textwidth]{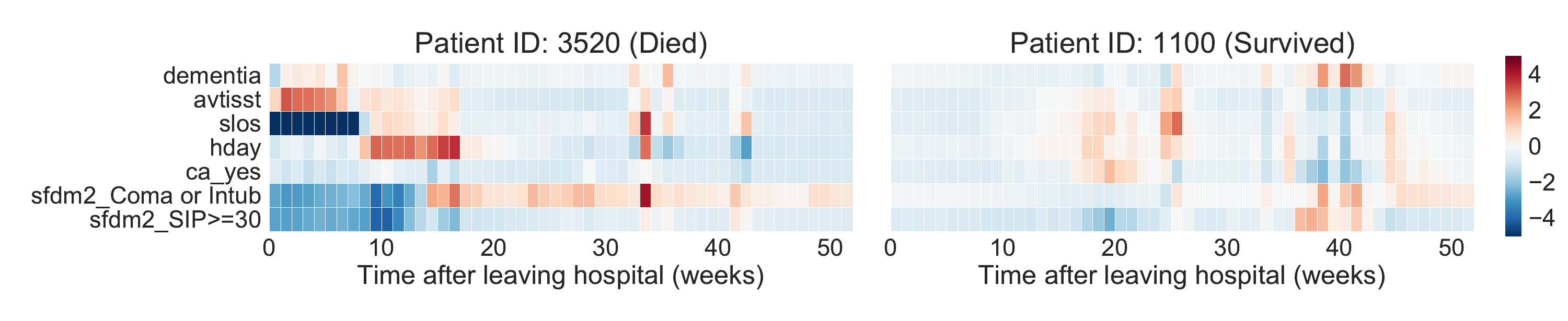}%
\caption{%
Weights of the CEN-generated CRF explanations for two patients from SUPPORT2 dataset for a set of the most influential features:
\texttt{dementia} (comorbidity), \texttt{avtisst} (avg. TISS, days 3-25), \texttt{slos} (days from study entry to discharge), \texttt{hday} (day in hospital at study admit), \texttt{ca\_yes} (the patient had cancer), \texttt{sfdm2\_Coma or Intub} (intubated or in coma at month 2), \texttt{sfdm2\_SIP} (sickness impact profile score at month 2).
Higher weight values correspond to higher contributions to the risk of death after a given time.}
\label{fig:support2-heatmaps}
\end{figure}

\paragraph{Metrics.}
Following \citet{lin2011learning}, we use two metrics specific to survival analysis:
\begin{itemize}[noitemsep,topsep=2pt,parsep=2pt,leftmargin=2em]
    \item[(a)] Accuracy of correctly predicting survival of a patient at times that correspond to 25\%, 50\%, and 75\% population-level temporal quantiles (\ie, the time points such that the corresponding \% of the population in the data were discharged from the study due to censorship or death).
    \item[(b)] The relative absolute error (RAE) between the predicted and actual time of death for non-censored patients.
\end{itemize}

\begin{wrapfigure}[14]{l}{0.37\textwidth}
\centering
\vspace{-2.5ex}
\includegraphics[width=0.37\textwidth]{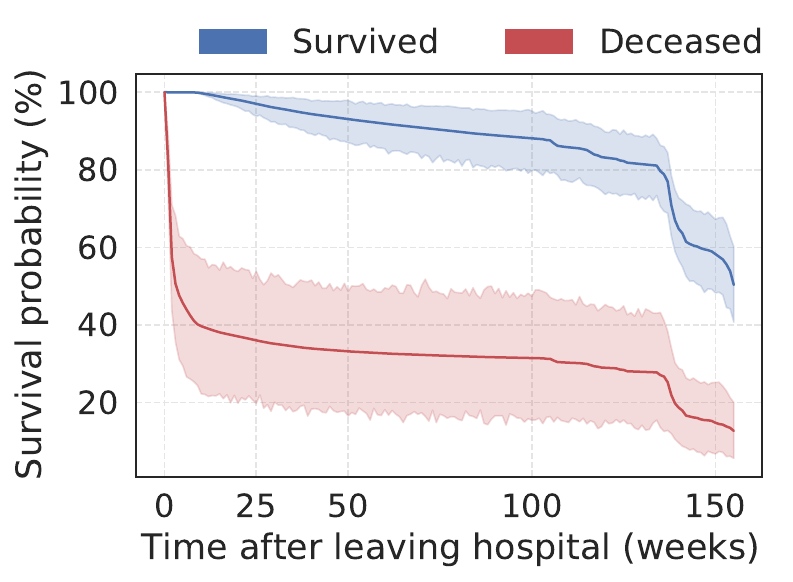}%
\caption{%
CEN-predicted survival curves for 100 random test patients from SUPPORT2.
Color indicates death within 1 year after leaving the hospital.
Shaded regions are 99\% CI.}
\label{fig:support2-lifelines}
\end{wrapfigure}

\paragraph{Performance.}
The results for all models are given in Table~\ref{tab:performance-survival}.
Our implementation of the CRF baseline slightly improves upon the performance reported by~\citet{lin2011learning}.
MLP-CRF and LSTM-CRF improve upon plain CRFs but, as we noted, can no longer be interpreted in terms of the original variables.
{\CENs} outperform or closely match neural CRF models on all metrics while providing interpretable explanations for the predicted risk for each patient at each point in time.

\paragraph{Qualitative analysis.}
To inspect predictions of {\CENs} qualitatively, for any given patient, we can visualize the weights assigned by the corresponding explanation to the respective attributes.
Figure~\ref{fig:support2-heatmaps} shows weights of the explanations for a subset of the most influential features for two patients from SUPPORT2 dataset who were predicted as survivor/non-survivor.
These temporal charts help us (a) to better understand which features the model selects as the most influential at each point in time, and (b) to identify potential inconsistencies in the model or the data---for example, using a chart as in Figure~\ref{fig:support2-heatmaps} we identified and excluded a feature (\texttt{hospdead}) from SUPPORT2 data, which initially was included but leaked information about the outcome as it directly indicated in-hospital death.
Finally, explanations also allow us to better understand patient-specific temporal dynamics of the contributing factors to the survival rates predicted by the model (Figure~\ref{fig:support2-lifelines}).

\section{Conclusion}
\label{sec:conclusion}

In this paper, we have introduced contextual explanation networks (CENs)---a class of models that learn to predict by generating and leveraging intermediate context-specific explanations.
We have formally defined {\CENs} as a class of probabilistic models, considered a number of special cases (\eg, the mixture-of-experts model), and derived learning and inference algorithms within the encoder-decoder framework for simple and sequentially-structured outputs.
We have shown that there are certain conditions when post-hoc explanations are erroneous and misleading.
Such cases are hard to detect unless explanation is a part of the prediction process itself, as in {\CEN}.
Finally, learning to predict and to explain jointly turned out to have a number of benefits, including strong regularization, consistency, and ability to generate explanations with no computational overhead, as shown in our case studies.

We would like to point out a few limitations of our approach and potential ways of addressing those in the future work.
Firstly, while each prediction made by {\CEN} comes with an explanation, the process of conditioning on the context is still uninterpretable.
Ideas similar to context selection~\citep{liu2017context} or rationale generation~\citep{lei2016rationalizing} may help improve interpretability of the conditioning.
Secondly, the space of explanations considered in this work assumes the same graphical structure and parameterization for all explanations and uses a simple sparse dictionary constraint.
This might be limiting, and one could imagine using a more hierarchically structured space of explanations instead, bringing to bear amortized inference techniques~\citep{rudolph2017structured}.
Nonetheless, we believe that the proposed class of models is useful not only for improving prediction capabilities, but also for model diagnostics, pattern discovery, and general data analysis, especially when machine learning is used for decision support in high-stakes applications.

\section{Acknowledgements}
We thank Willie Neiswanger and Mrinmaya Sachan for many useful comments on an early draft of the paper, and Ahmed Hefny, Shashank J. Reddy, Bryon Aragam, and Ruslan Salakhutdinov for helpful discussions.
This work was supported by NIH R01GM114311.
M.A. was supported in part by the CMLH Fellowship.

\clearpage

\appendix
\setcounter{equation}{0}
\renewcommand{\theequation}{\Alph{section}.\arabic{equation}}

\section{Proofs}
\label{app:proofs}

\subsection{Proof of Proposition \ref{prop:CEN-conditional-independence}}
\label{proof:CEN-conditional-independence}

Assume that $\prob{\Yv \mid \Xv, \thetav}$ factorizes as $\prod_{\av \in \Vc_{\Yv}} \prob{\Yv_{\av} \mid \Yv_{\mathrm{MB}(\av)}, \Xv, \thetav_{\av}}$, where $\av$ denotes subsets of the $\Yv$ variables and $\mathrm{MB}(\av)$ stands for the corresponding Markov blankets.
Using the definition of {\CEN} given in \eqref{eq:CEN-general}, we have:
\begin{equation}
    \begin{split}
        \prob{\Yv \mid \Xv, \Cv}
        & = \int \prob{\Yv \mid \Xv, \thetav} \prob{\thetav \mid \Cv} d\thetav \\
        & = \int \prod_{\av \in \Vc_{\Yv}} \prob{\Yv_{\av} \mid \Yv_{\mathrm{MB}(\av)}, \Xv, \thetav_{\av}} \prod_{j}\prob{\theta_j \mid \Cv} d\thetav \\
        & = \prod_{\av \in \Vc_{\Yv}} \left[\int \prob{\Yv_{\av} \mid \Yv_{\mathrm{MB}(\av)}, \Xv, \thetav_{\av}} \prod_{j \in \av}\prob{\theta_j \mid \Cv} d\thetav_{\av}\right] \\
        & = \prod_{\av \in \Vc_{\Yv}} \prob{\Yv_{\av} \mid \Yv_{\mathrm{MB}(\av)}, \Xv, \Cv}
    \end{split}
\end{equation}

\subsection{Proof of Proposition \ref{prop:explanation-contribution-bound}}
\label{proof:explanation-contribution-bound}

To derive the lower bound on the contribution of explanations in terms of expected accuracy, we first need to bound the probability of the error when only $\thetav$ are used for prediction:
\begin{equation*}
    \Pb_e := \prob{\hat \Yv(\thetav) \neq \Yv} = \ep[\thetav \sim \prob{\thetav}]{\prob{\hat \Yv \neq \Yv \mid \thetav}},
\end{equation*}
which we bound using the Fano's inequality~\citep[Ch. 2.11,][]{cover2012elements}:
\begin{equation}
    \label{eq:fano-inequality}
    \Hc\bb{\Pb_e} + \Pb_e \log\bb{|\Yc| - 1} \geq \Hc\bb{\Yc \mid \thetav}
\end{equation}
Since the error ($\hat \Yv(\thetav) \neq \Yv$) is a binary random variable, then $\Hc\bb{\Pb_e} \leq 1$.
After weakening the inequality and using $\Hc\bb{\Yc \mid \thetav} \geq \delta$ from the proposition statement, we get:
\begin{equation}
    \label{eq:expected-predictive-error-noexp-bound}
    \ep[\thetav \sim \prob{\thetav}]{\prob{\hat \Yv \neq \Yv \mid \thetav}} \geq \frac{\Hc\bb{\Yc \mid \thetav} - 1}{\log|\Yc|} \geq \frac{\delta - 1}{\log|\Yc|}
\end{equation}
The claimed lower bound \eqref{eq:cen-exp-contribution-bound} follows after we combine \eqref{eq:expected-predictive-error-noexp-bound} and the assumed bound on the accuracy of the model in terms of $\varepsilon$ given in \eqref{eq:cen-predictive-acc-bound}.

\subsection{Proof of Theorem \ref{thm:LIME-CEN}}
\label{proof:LIME-CEN}

To prove the theorem, consider the case when $f$ is defined by a {\CEN}, instead of $\xv$ we have $(\cv, \xv)$, and the class of approximations, $G$, coincides with the class of explanations, and hence can be represented by $\thetav$.
In this setting, we can pose the same problem as:
\begin{equation}
    \label{eq:lime-cen}
    \hat \thetav = \argmin_{\thetav} \Lc(f, \thetav, \pi_{\cv, \xv}) + \Omega(\thetav)
\end{equation}
Suppose that {\CEN} produces $\thetav^\star$ explanation for the context $\cv$ using a deterministic encoder, $\phi$.
The question is whether and under which conditions $\hat \thetav$ can recover $\thetav^\star$.
Theorem~\ref{thm:LIME-CEN} answers the question in affirmative and provides a concentration result for the case when hypotheses are linear.
Here, we prove Theorem~\ref{thm:LIME-CEN} for a little more general class of log-linear explanations: $\logit{\prob{Y = 1 \mid \xv, \theta}} = \av(\xv)^\top \thetav$, where $\av$ is a $C$-Lipschitz vector-valued function whose values have a zero-mean distribution when $(\xv, \cv)$ are sampled from $\pi_{\xv,\cv}$\footnote{In case of logistic regression, $\av(\xv) = [1, x_1, \dots, x_d]^\top$.}.
For simplicity of the analysis, we consider binary classification and omit the regularization term, $\Omega(g)$.
We define the loss function, $\Lc(f, \thetav, \pi_{\xv, \cv})$, as:
\begin{equation}
    \Lc = \frac{1}{K} \sum_{k=1}^K \left(\logit{\prob{Y = 1 \mid \xv_k - \xv, \cv_k}} - \logit{\prob{Y = 1 \mid \xv_k - \xv, \thetav}}\right)^2
\end{equation}
where $(\xv_k, \cv_k) \sim \pi_{\xv, \cv}$ and $\pi_{\xv, \cv} := \pi_{\xv} \pi_{\cv}$ is a distribution concentrated around $(\xv, \cv)$.
Without loss of generality, we also drop the bias terms in the linear models and assume that $\av(\xv_k - \xv)$ are centered.

\vspace{1ex}
\begin{proof}
The optimization problem \eqref{eq:lime-cen} reduces to the least squares linear regression:
\begin{equation}
    \label{eq:lime-log-linear}
    \hat\thetav = \argmin_{\thetav} \frac{1}{K} \sum_{k=1}^K \left(\logit{\prob{Y = 1 \mid \xv_k - \xv, \cv_k}} - \av(\xv_k - \xv)^\top \thetav\right)^2
\end{equation}
We consider deterministic encoding, $\prob{\thetav \mid \cv} := \delta(\thetav, \phiv(\cv))$, and hence we have:
\begin{equation}
    \begin{split}
        \logit{\prob{Y = 1 \mid \xv_k - \xv, \cv_k}}
        & = \logit{\prob{Y = 1 \mid \xv_k - \xv, \thetav = \phiv(\cv_k)}} \\
        & = \av(\xv_k - \xv)^\top \phiv(\cv_k)
    \end{split}
\end{equation}
To simplify the notation, we denote $\av_k := \av(\xv_k - \xv)$, $\phiv_k := \phiv(\cv_k)$, and $\phiv := \phiv(\cv)$.
The solution of \eqref{eq:lime-log-linear} now can be written in a closed form:
\begin{equation}
    \hat\thetav = \left[\frac{1}{K} \sum_{k=1}^K \av_k \av_k^\top\right]^{+} \left[\frac{1}{K} \sum_{k=1}^K \av_k \av_k^\top \phiv_k\right]
\end{equation}
Note that $\hat\thetav$ is a random variable since $(\xv_k, \cv_k)$ are randomly generated from $\pi_{\xv, \cv}$.
To further simplify the notation, denote $M := \frac{1}{K} \sum_{k=1}^K \av_k \av_k^\top$.
To get a concentration bound on $\|\hat\thetav - \thetav^\star\|$, we will use the continuity of $\phi(\cdot)$ and $\av(\cdot)$, concentration properties of $\pi_{\xv, \cv}$ around $(\xv, \cv)$, and some elementary results from random matrix theory.
To be more concrete, since we assumed that $\pi_{\xv,\cv}$ factorizes, we further let $\pi_\xv$ and $\pi_\cv$ concentrate such that $\prob[\pi_{\xv}]{\|\xv^\prime - \xv\| > t} < \varepsilon_{\xv}(t)$ and $\prob[\pi_{\cv}]{\|\cv^\prime - \cv\| > t} < \varepsilon_{\cv}(t)$, respectively, where $\varepsilon_{\xv}(t)$ and $\varepsilon_{\cv}(t)$ both go to 0 as $t \rightarrow \infty$, potentially at different rates.

First, we have the following bound from the convexity of the norm:
\begin{eqnarray}
    \prob{\|\hat\thetav - \thetav^\star\| > t}
    &=& \prob{\left\|\frac{1}{K} \sum_{k=1}^K\left[M^{+} \av_k \av_k^\top (\phiv_k - \phiv)\right]\right\| > t} \\
    &\leq& \prob{\frac{1}{K} \sum_{k=1}^K \left\|M^{+} \av_k \av_k^\top (\phiv_k - \phiv)\right\| > t}
\end{eqnarray}
By making use of the inequality $\|Ax\| \leq \|A\| \|x\|$, where $\|A\|$ denotes the spectral norm of the matrix $A$, the $L$-Lipschitz property of $\phi(\cv)$, the $C$-Lipschitz property of $\av(\xv)$, and the concentration of $\xv_k$ around $\xv$, we have
\begin{eqnarray}
    \prob{\|\hat\thetav - \thetav^\star\| > t}
    &\leq& \prob{L \frac{1}{K} \sum_{k=1}^K \left\|M^{+} \av_k \av_k^\top\right\| \|\cv_k - \cv\| > t} \\
    &\leq& \prob{C L \left\|M^{+}\right\| \frac{1}{K} \sum_{k=1}^K \left\|\av_k \av_k^\top\right\| \|\cv_k - \cv\| > t} \\
    &\leq& \prob{\frac{C L}{\lambda_\mathrm{min}(M)} \frac{1}{K} \sum_{k=1}^K \|\xv_k - \xv\| \|\cv_k - \cv\| > t} \\
    &\leq& \prob{\frac{C L \tau^2}{\lambda_\mathrm{min}(M)} > t} + \prob{\|\xv_k - \xv\| \|\cv_k - \cv\| > \tau^2} \\
    &\leq& \prob{\lambda_\mathrm{min}\left(M / (C \tau)^2\right) < \frac{L}{C^2 t}} + \varepsilon_\xv(\tau) + \varepsilon_\cv(\tau)
\end{eqnarray}
Note that we used the fact that the spectral norm of a rank-1 matrix, $\av(\xv_k) \av(\xv_k)^\top$, is simply the norm of $\av(\xv_k)$, and the spectral norm of the pseudo-inverse of a matrix is equal to the inverse of the least non-zero singular value of the original matrix: $\|M^+\| \leq \lambda_\mathrm{max}(M^+) = \lambda^{-1}_\mathrm{min}(M)$.

Finally, we need a concentration bound on $\lambda_\mathrm{min}\left(M / (C \tau)^2\right)$ to complete the proof.
Note that $\frac{M}{C^2\tau^2} = \frac{1}{K} \sum_{k=1}^K \left(\frac{\av_k}{C\tau}\right) \left(\frac{\av_k}{C\tau}\right)^\top$, where the norm of $\left(\frac{\av_k}{C\tau}\right)$ is bounded by 1.
If we denote $\mu_\mathrm{min}(C\tau)$ the minimal eigenvalue of $\cov{\frac{\av_k}{C\tau}}$, we can write the matrix Chernoff inequality~\citep{tropp2012user} as follows:
\begin{equation*}
    \prob{\lambda_\mathrm{min}\left(M / (C \tau)^2\right) < \alpha} \leq d \exp\left\{-K D(\alpha \| \mu_\mathrm{min}(C\tau))\right\}, \quad \alpha \in [0, \mu_\mathrm{min}(C\tau)]
\end{equation*}
where $d$ is the dimension of $\av_k$, $\alpha := \frac{L}{C^2 t}$, and $D(a\|b)$ denotes the binary information divergence:
\begin{equation*}
    D(a\|b) = a\log\left(\frac{a}{b}\right) + (1 - a)\log\left(\frac{1 - a}{1 - b}\right).
\end{equation*}
The final concentration bound has the following form:
\begin{equation}
    \prob{\|\hat\thetav - \thetav^\star\| > t} \leq d \exp\left\{-K D\left(\frac{L}{C^2 t} \| \mu_\mathrm{min}(C\tau)\right)\right\} + \varepsilon_\xv(\tau) + \varepsilon_\cv(\tau)
\end{equation}
We see that as $\tau \rightarrow \infty$ and $t \rightarrow \infty$ all terms on the right hand side vanish, and hence $\hat\thetav$ concentrates around $\thetav^\star$.
Note that as long as $\mu_\mathrm{min}(C\tau)$ is far from 0, the first term can be made negligibly small by sampling more points around $(\xv, \cv)$.
Finally, we set $\tau \equiv t$ and denote the right hand side by $\delta_{K,L,C}(t)$ that goes to 0 as $t \rightarrow \infty$ to recover the statement of the original theorem.
\end{proof}
\begin{remark}
    We have shown that $\hat\thetav$ concentrates around $\thetav^\star$ under mild conditions.
    With more assumptions on the sampling distribution, $\pi_{\xv, \cv}$, (e.g., sub-gaussian) one could derive precise convergence rates.
    Note that we are in total control of any assumptions we put on $\pi_{\xv, \cv}$ since precisely that distribution is used for sampling.
    This is a major difference between the local approximation setup here and the setup of linear regression with random design; in the latter case, we have no control over the distribution of the design matrix, and any assumptions we make could potentially be unrealistic.
\end{remark}
\begin{remark}
    Note that concentration analysis of a more general case when the loss $\Lc$ is a general convex function and $\Omega(g)$ is a decomposable regularizer could be done by using results from the M-estimation theory~\citep{negahban2009unified}, but would be much more involved and unnecessary for our purposes.
\end{remark}

\section{Experimental Details}
\label{app:experiments}

This section provides details on the experimental setups including architectures, training procedures, etc.
Additionally, we provide and discuss qualitative results for {\CENs} on the MNIST and IMDB datasets.

\subsection{Additional Details on the Datasets and Experiment Setups}
\label{app:setup-details}

\paragraph{MNIST.}
We used the classical split of the dataset into 50k training, 10k validation, and 10k testing points.
All models were trained for 100 epochs using the AMSGrad optimizer~\citep{reddi2019convergence} with the learning rate of $10^{-3}$.
No data augmentation was used in any of our experiments.
HOG representations were computed using $3 \times 3$ blocks.

\paragraph{CIFAR10.}
For this set of experiments, we followed the setup given by \citet{cifar10blog}, reimplemented in Keras~\citep{chollet2015keras} with TensorFlow~\citep{abadi2016tensorflow} backend.
The input images were global contrast normalized (a.k.a. GCN whitened) while the rescaled image representations were simply standardized.
Again, HOG representations were computed using $3 \times 3$ blocks.
No data augmentation was used in our experiments.

\paragraph{IMDB.}
We considered the labeled part of the data only (50,000 reviews total).
The data were split into 20,000 train, 5,000 validation, and 25,000 test points.
The vocabulary was limited to 20,000 most frequent words (and 5,000 most frequent words when constructing BoW representations).
All models were trained with the AMSGrad optimizer~\citep{} with $10^{-2}$ learning rate.
The models were initialized randomly; no pre-training or any other unsupervised/semi-supervised technique was used.

\paragraph{Satellite.}
As described in the main text, we used a pre-trained {\VGG} network\footnote{The model was taken form \url{https://github.com/nealjean/predicting-poverty}.} to extract features from the satellite imagery.
Further, we added one fully connected layer network with 128 hidden units used as the context encoder.
For the VCEN model, we used dictionary-based encoding with Dirichlet prior and logistic normal distribution as the output of the inference network.
For the decoder, we used an MLP of the same architecture as the encoder network.
All models were trained with Adam optimizer with 0.05 learning rate.
The results were obtained by 5-fold cross-validation.

\paragraph{Medical data.}
We have used minimal pre-processing of both SUPPORT2 and PhysioNet datasets limited to standardization and missing-value filling.
We found that denoting missing values with negative entries ($-1$) often led a slightly improved performance compared to any other NA-filling techniques.
PhysioNet time series data was irregularly sampled across the time, so we had to resample temporal sequences at regular intervals of 30 minutes (consequently, this has created quite a few missing values for some of the measurements).
All models were trained using Adam optimizer with $10^{-2}$ learning rate.

\begin{figure}[ht]
\begin{subfigure}[b]{0.3308\linewidth}
    \includegraphics[width=\textwidth]{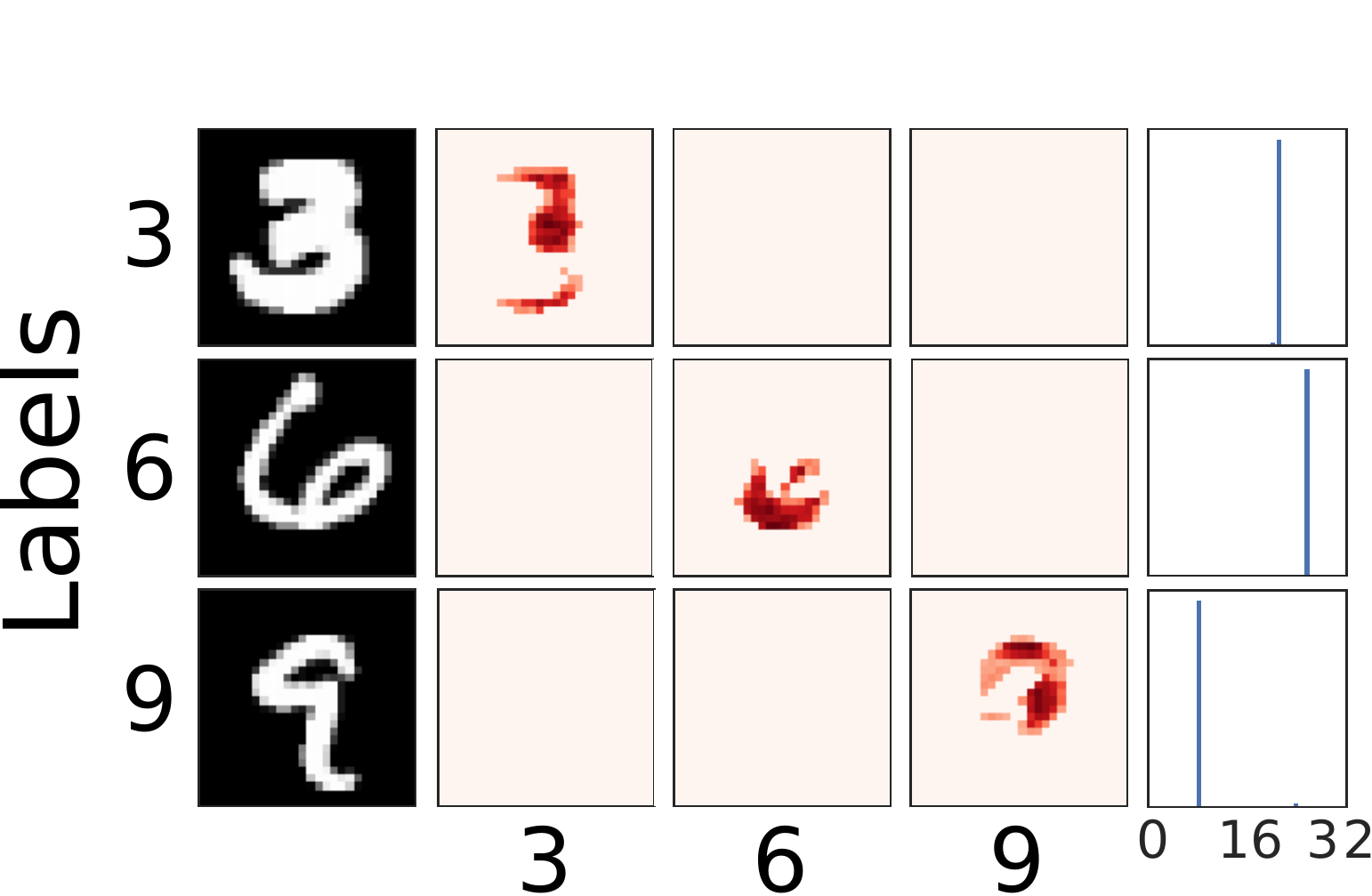}%
    \caption{Correct}\label{fig:mnist-qualitative-correct}
\end{subfigure}
~
\begin{subfigure}[b]{0.3\linewidth}
    \includegraphics[width=\textwidth]{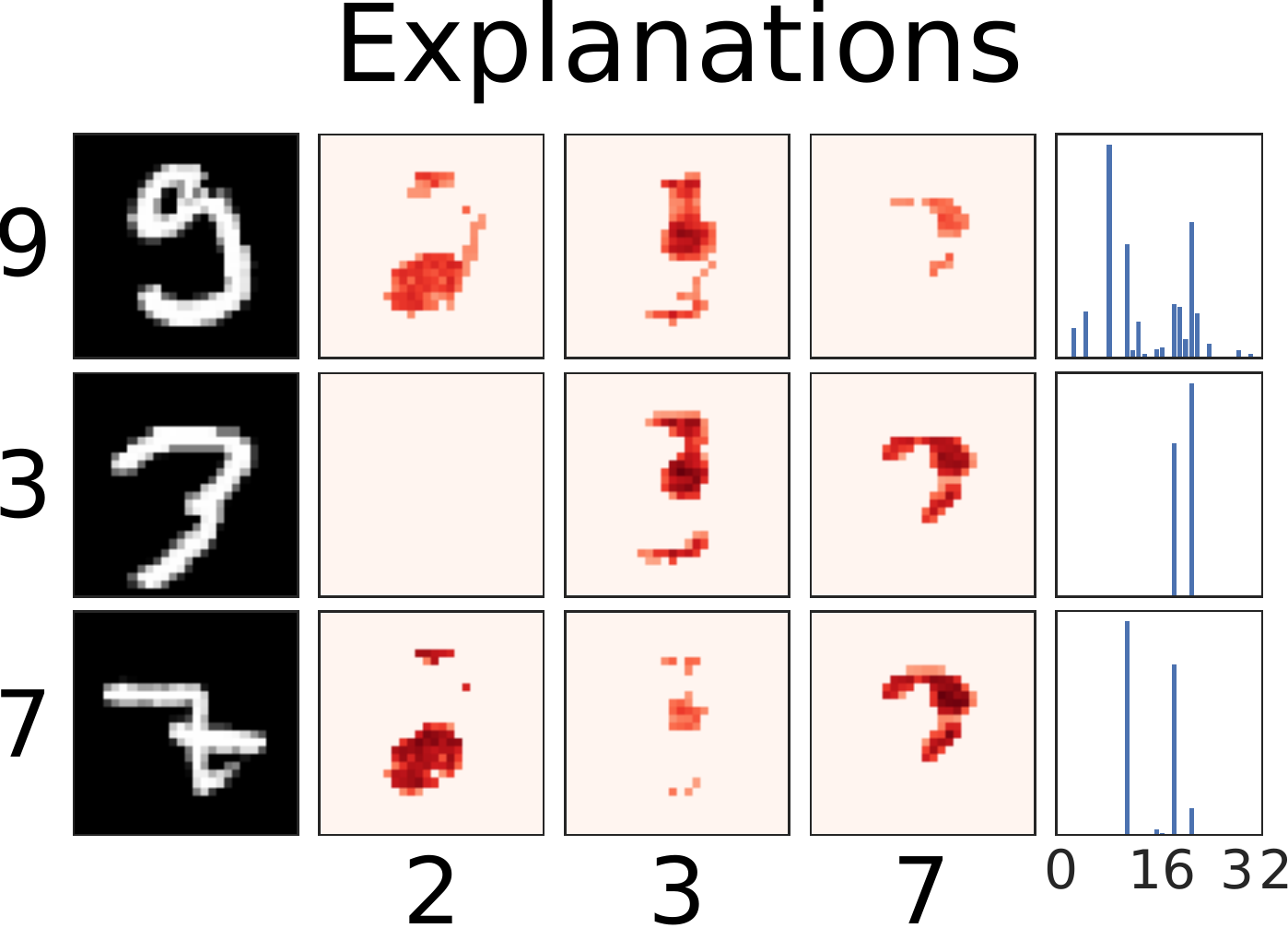}%
    \caption{Misclassified}\label{fig:mnist-qualitative-incorrect}
\end{subfigure}
~
\begin{subfigure}[b]{0.3265\linewidth}
    \includegraphics[width=\textwidth]{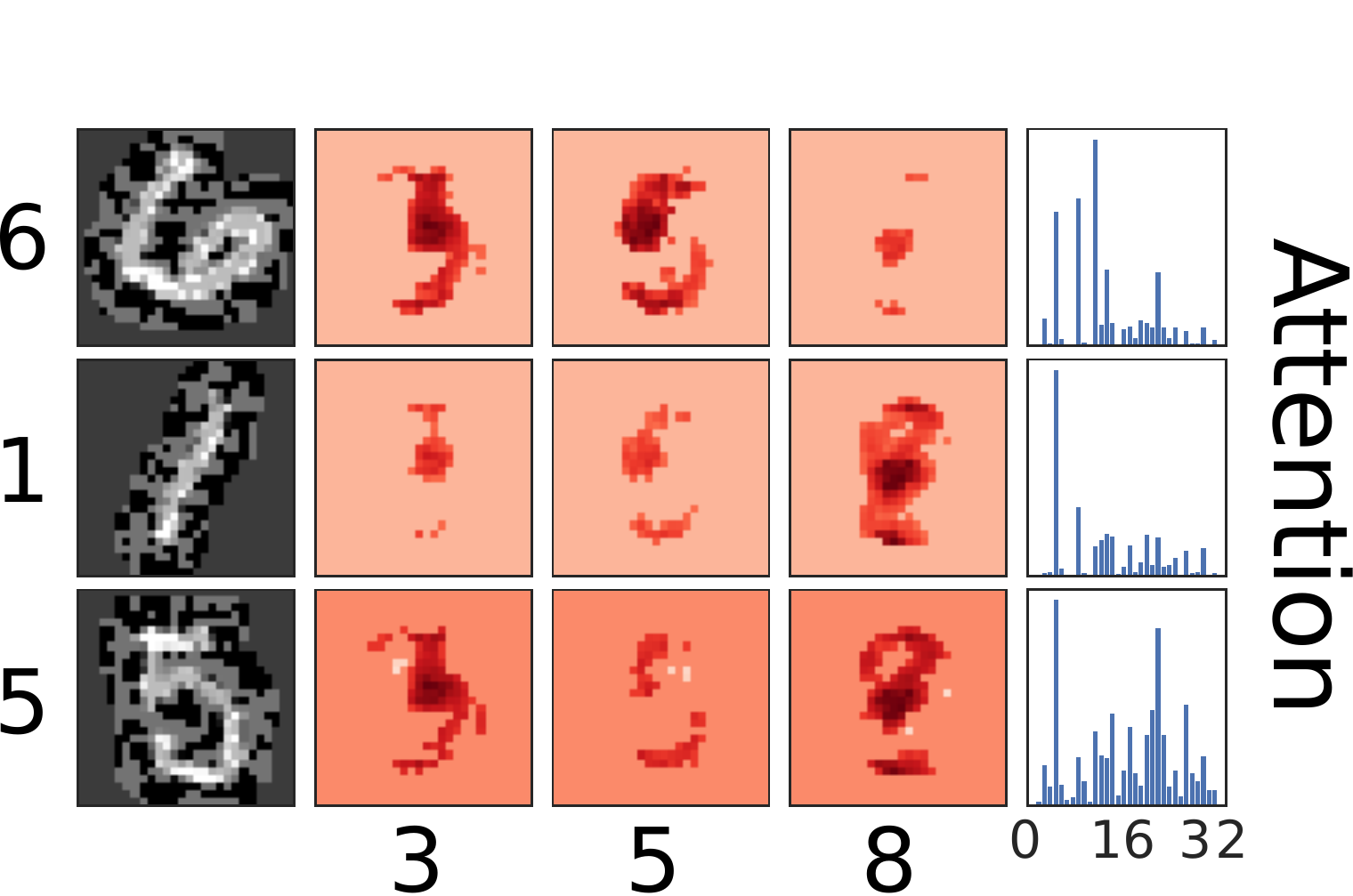}%
    \caption{Adversarial}\label{fig:mnist-qualitative-adversarial}
\end{subfigure}
\begin{subfigure}[b]{0.55\linewidth}
    \includegraphics[width=\textwidth]{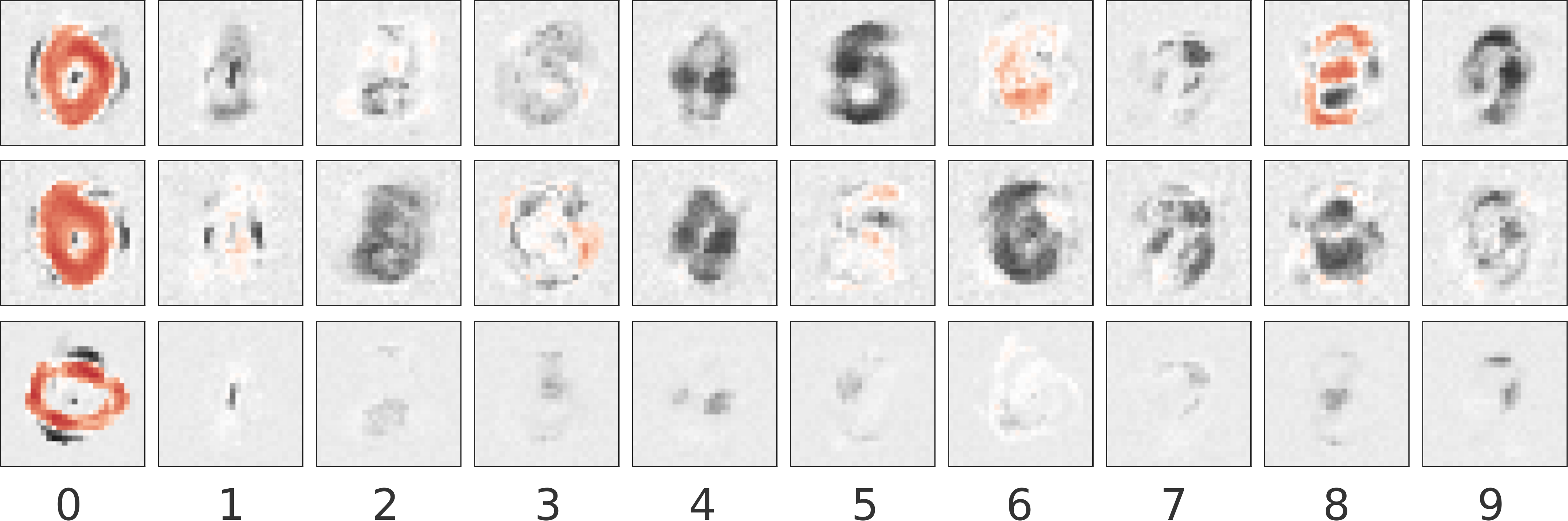}%
    \caption{Selected elements of the explanation dictionary}\label{fig:mnist-zero-explanations}
\end{subfigure}%
\begin{subfigure}[b]{0.45\linewidth}
    \includegraphics[width=\textwidth]{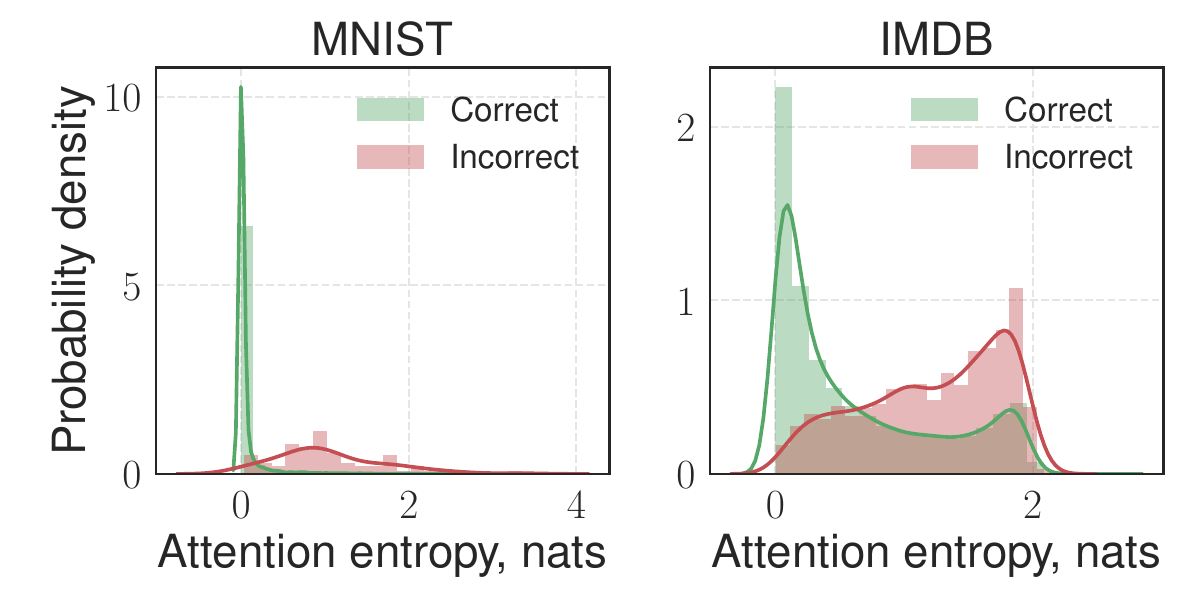}%
    \caption{Attention}\label{fig:attention-entropy}
\end{subfigure}
\caption{%
Explanations generated by CEN for the 3 top classes and the corresponding attention vectors for (a) correctly classified, (b) misclassified, and (c) adversarially constructed images.
Adversarial examples were generated using the fast gradient sign method (FGSM)~\citep{papernot2016practical}.
(d) Elements from the learned 32-element dictionary that correspond to different writing styles of 0 digits.
(e) Histogram of the attention entropy for correctly and incorrectly classified test instances for {\CEN}-\texttt{pxl} on MNIST and {\CEN}-\texttt{tpc} on IMDB.
}\label{fig:mnist-plots}
\end{figure}

\subsection{More on Qualitative Analysis}
\label{app:qualitative-analysis}

\subsubsection{MNIST}

Figures~\ref{fig:mnist-qualitative-correct}, \ref{fig:mnist-qualitative-incorrect}, and \ref{fig:mnist-qualitative-adversarial} visualize explanations for predictions made by {\CEN}-\texttt{pxl} on MNIST.
The figures correspond to 3 cases where {\CEN} (a) made a correct prediction, (b) made a mistake, and (c) was applied to an adversarial example (and made a mistake).
Each chart consists of the following columns: true labels, input images, explanations for the top 3 classes (as given by the activation of the final softmax layer), and attention vectors used to select explanations from the global dictionary.
A small subset of explanations from the dictionary is visualized in Figure~\ref{fig:mnist-zero-explanations} (the full dictionary is given in Figure~\ref{fig:mnist-dict}), where each image is a weight vector used to construct the pre-activation for a particular class.
Note that different elements of the dictionary capture different patterns in the data (in Figure~\ref{fig:mnist-zero-explanations}, different styles of writing the 0 digit) which {\CEN} actually uses for prediction.

Also note that confident correct predictions (Figures~\ref{fig:mnist-qualitative-correct}) are made by selecting a single explanation from the dictionary using a sharp attention vector.
However, when the model makes a mistake, its attention is often dispersed (Figures~\ref{fig:mnist-qualitative-incorrect} and \ref{fig:mnist-qualitative-adversarial}), i.e., there is uncertainty in which pattern it tries to use for prediction.
Figure~\ref{fig:attention-entropy} further quantifies this phenomenon by plotting histogram of the attention entropy for all test examples which were correctly and incorrectly classified.
While {\CENs} are certainly not adversarial-proof, high entropy of the attention vectors is indicative of ambiguous or out-of-distribution examples which is helpful for model diagnostics.

\subsubsection{IMDB}

Similar to MNIST, we train {\CEN}-\texttt{tpc} with linear explanations in terms of topics on the IMDB dataset.
Then, we generate explanations for each test example and visualize histograms of the weights assigned by the explanations to 6 selected topics in Figure~\ref{fig:imdb-dict-hist}.
The 3 topics in the top row are acting- and plot-related (and intuitively have positive, negative, or neutral connotation), while the 3 topics in the bottom are related to particular genre of the movies.

Note that acting-related topics turn out to be bi-modal, i.e., contributing either positively, negatively, or neutrally to the sentiment prediction in different contexts.
As expected intuitively, {\CEN} assigns highly negative weight to the topic related to ``bad acting/plot'' and highly positive weight to ``great story/performance'' in most of the contexts (and treats those neutrally conditional on some of the reviews).
Interestingly, genre-related topics almost always have a negligible contribution to the sentiment (i.e., get almost 0 weights assigned by explanations) which indicates that the learned model does not have any particular bias towards or against a given genre.
Importantly, inspecting summary statistics of the explanations generated by {\CEN} allows us to explore the biases that the model picks up from the data and actively uses for prediction\footnote{%
If we wish to enforce or eliminate certain patterns from explanations (e.g., to ensure fairness), we may impose additional constraints on the dictionary.
However, this is beyond the scope of this work.
}.

Figure~\ref{fig:imdb-dict-tpc} visualizes the full dictionary of size 16 learned by {\CEN}-\texttt{tpc}.
Each column corresponds to a dictionary atom that represents a typical explanation pattern that {\CEN} attends to before making a prediction.
By inspecting the dictionary, we can find interesting patterns.
For instance, atoms 5 and 11 assign inverse weights to topics \texttt{[kid, child, disney, family]} and \texttt{[sexual, violence, nudity, sex]}.
Depending on the context of the review, {\CEN} may use one of these patterns to predict the sentiment.
Note that these two topics are negatively correlated across all dictionary elements, which again is quite intuitive.

\subsubsection{Satellite}

We visualize the two explanations, M1 and M2, learned by {\CEN}-\texttt{att} on the Satellite dataset in full in
Figures~\ref{fig:satellite-models-full} and provide additional correlation plots between the selected explanation and values of each survey variable in Figure~\ref{fig:satellite-models-feature-corr}.

\begin{figure}[t!]
    \centering
    \includegraphics[width=0.71\textwidth]{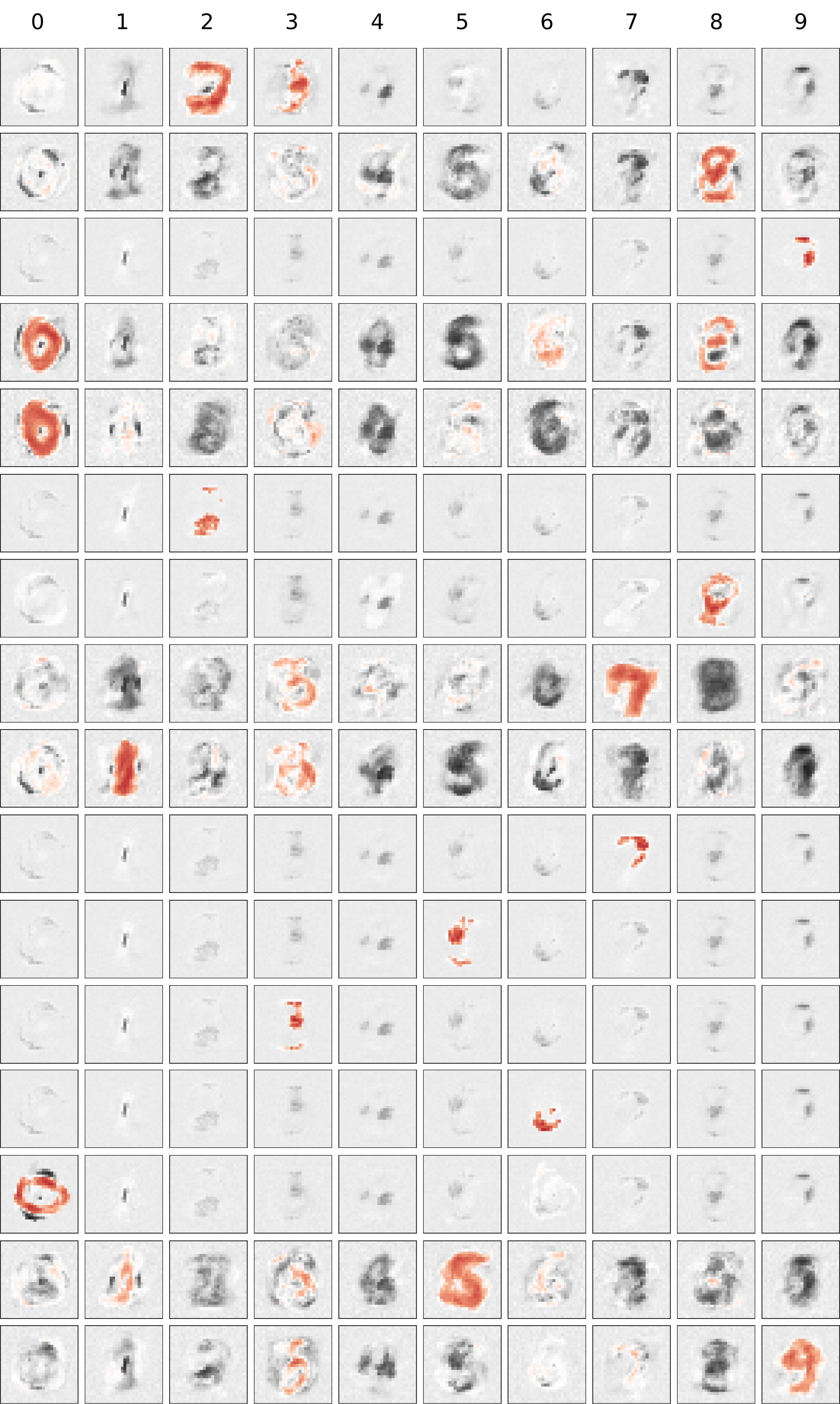}
    \caption{%
    Visualization of the model dictionary learned by CEN on MNIST.
    Each row corresponds to a dictionary element, and each column corresponds to the weights of the model voting for each class of digits.
    Images visualize the weights of the models.
    Red corresponds to high positive values, dark gray to high negative values, and white to values that are close to 0.}
    \label{fig:mnist-dict}
\end{figure}

\begin{figure}[t!]
\centering
\includegraphics[width=0.97\textwidth]{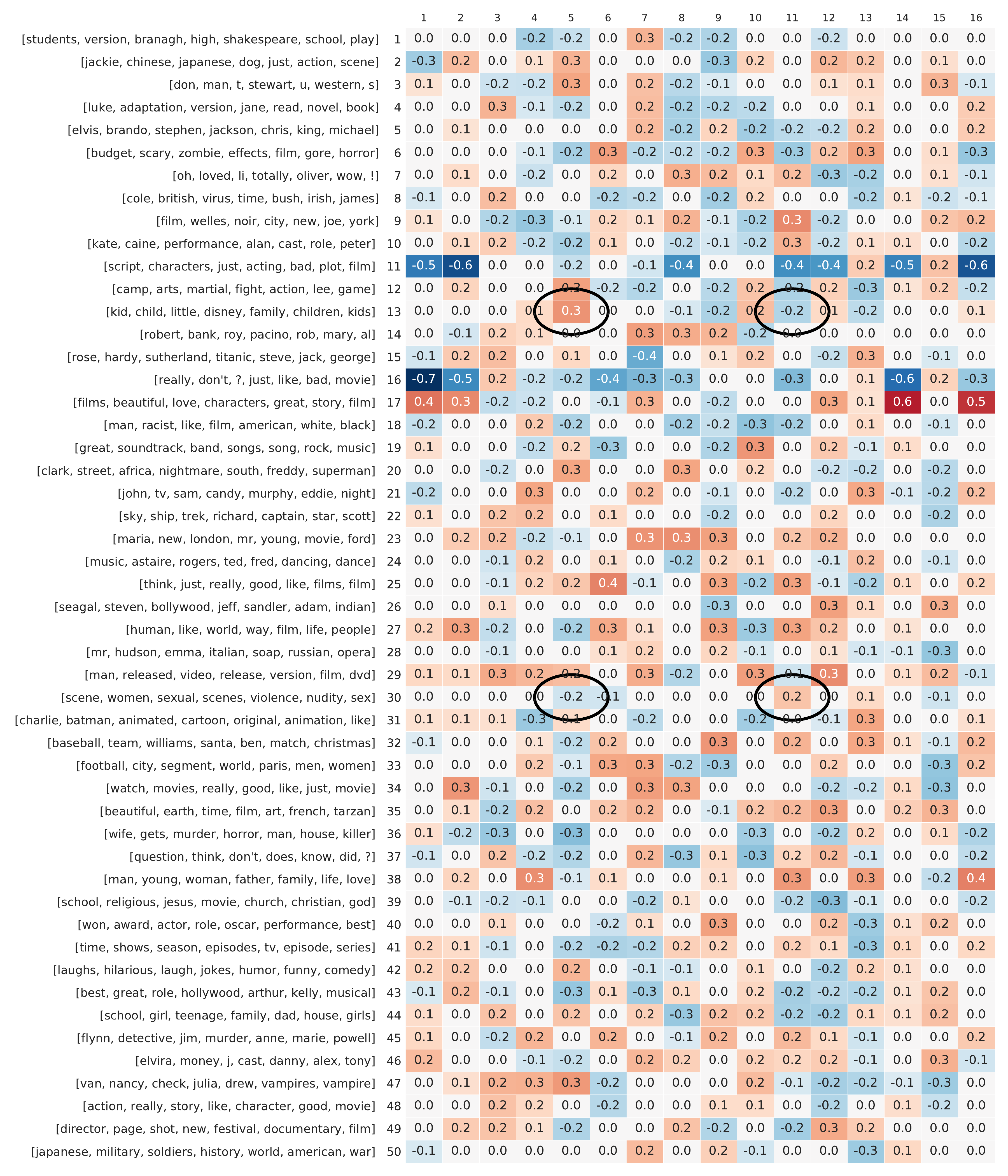}%
\caption{%
The full dictionary learned by CEN-\texttt{tpc} model: rows correspond to topics and columns correspond to dictionary atoms.
Very small values were thresholded for visualization clarity.
Different atoms capture different prediction patterns;
for example, atom 5 assigns a highly positive weight to the \texttt{[kid, child, disney, family]} topic and down-weighs \texttt{[sexual, violence, nudity, sex]}, while atom 11 acts in an opposite manner.
Given the context of the review, CEN combines just a few atoms to make a prediction.}
\label{fig:imdb-dict-tpc}
\end{figure}

\begin{figure}[th!]
\begin{subfigure}[b]{\textwidth}
    \centering
    \includegraphics[width=\textwidth]{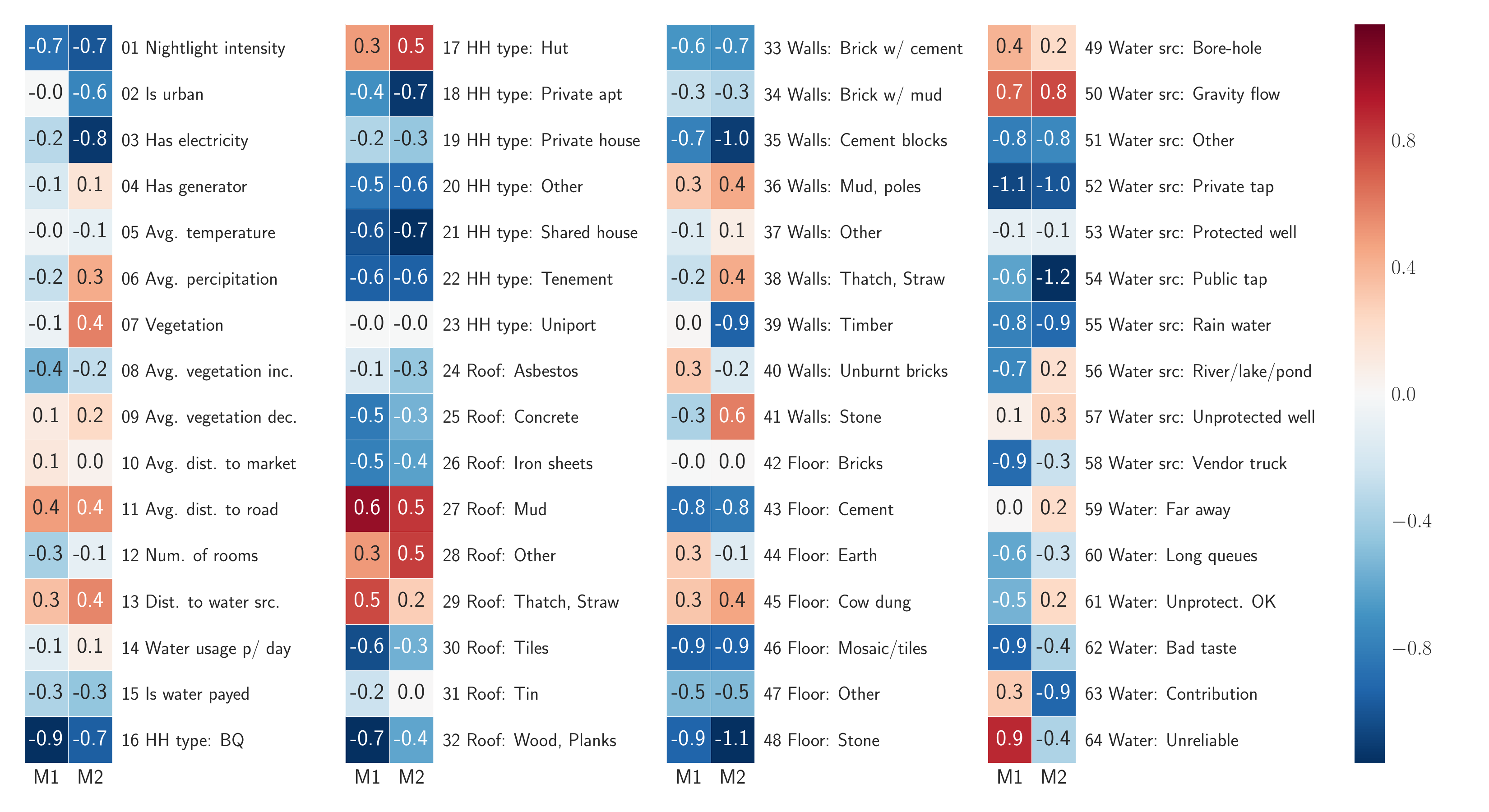}
    \vspace{-4ex}\caption{%
        Full visualization of models M1 and M2 learned by CEN on Satellite data.
    }\label{fig:satellite-models-full}
\end{subfigure}
\begin{subfigure}[b]{\textwidth}
    \centering
    \includegraphics[width=\textwidth]{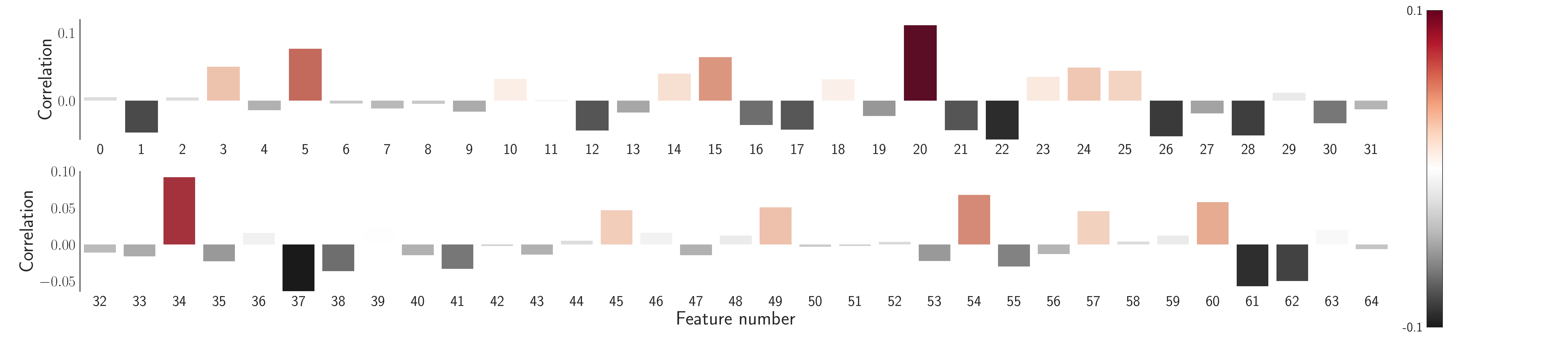}
    \vspace{-4ex}\caption{%
        Correlation between the selected explanation and the value of a particular survey variable.
    }\label{fig:satellite-models-feature-corr}
\end{subfigure}
\caption{%
Additional visualizations for CENs trained on the Satellite data.}
\label{fig:satellite-appendix}
\end{figure}

\subsection{Model Architectures}
\label{app:architectures}

Architectures of the model used in our experiments are summarized in Tables~\ref{tab:architectures-mnist-imdb},~\ref{tab:architectures-cifar10-satellite},~\ref{tab:architectures-survival}.

\clearpage

\begin{table}[t!]
\caption{\small%
Top-performing architectures used in our experiments on MNIST and IMDB datasets.}
\label{tab:architectures-mnist-imdb}
\smallskip
\centering
\begin{subtable}[t]{.52\textwidth}
\caption{\small MNIST}
\scriptsize
\def\arraystretch{1.2}
\setlength\tabcolsep{2pt}
\begin{tabular}[t]{@{}c|l|r@{}}
    \toprule
    \multicolumn{3}{c}{\textbf{Convolutional Encoder}}          \\
    \midrule
    \multirow{15}{*}{\rotatebox{90}{Convolutional Block}}
    &   layer                           & Conv2D                \\
    &   \# filters                      & 32                    \\
    &   kernel size                     & $3 \times 3$          \\
    &   strides                         & $1 \times 1$          \\
    &   padding                         & valid                 \\
    &   activation                      & ReLU                  \\
    \cmidrule{2-3}
    &   layer                           & Conv2D                \\
    &   \# filters                      & 32                    \\
    &   kernel size                     & $3 \times 3$          \\
    &   strides                         & $1 \times 1$          \\
    &   padding                         & valid                 \\
    &   activation                      & ReLU                  \\
    \cmidrule{2-3}
    &   layer                           & MaxPoo2D              \\
    &   pooling size                    & $2 \times 2$          \\
    &   dropout                         & 0.25                  \\
    \midrule
    &   layer                           & Dense                 \\
    &   units                           & 128                   \\
    &   dropout                         & 0.50                  \\
    \midrule
    \multicolumn{2}{l}{\# blocks}       & 1                     \\
    \multicolumn{2}{l}{\# params}       & 1.2M                  \\
    \bottomrule
\end{tabular}%
\hspace{2pt}%
\setlength\tabcolsep{2pt}
\begin{tabular}[t]{@{}l|r@{}}
    \toprule
    \multicolumn{2}{c}{\textbf{Contextual Explanations}}        \\
    \midrule
    model                               & Logistic regr.        \\
    features                            & HOG (3, 3)            \\
    \# features                         & 729                   \\
    standardized                        & Yes                   \\
    dictionary                          & 256                   \\
    $l_1$ penalty                       & $5 \cdot 10^{-5}$     \\
    $l_2$ penalty                       & $1 \cdot 10^{-6}$     \\
    \midrule
    model                               & Logistic reg.         \\
    features                            & Pixels (20, 20)       \\
    \# features                         & 400                   \\
    standardized                        & Yes                   \\
    dictionary                          & 64                    \\
    $l_1$ penalty                       & $5 \cdot 10^{-5}$     \\
    $l_2$ penalty                       & $1 \cdot 10^{-6}$     \\
    \midrule[.75pt]
    \multicolumn{2}{c}{\textbf{Contextual VAE}}                 \\
    \midrule
    prior                               & $\mathrm{Dir}(0.2)$   \\
    sampler                             & LogisticNormal        \\
    \bottomrule
\end{tabular}
\end{subtable}%
\begin{subtable}[t]{.49\textwidth}
\caption{\small IMDB}
\scriptsize
\def\arraystretch{1.2}
\setlength\tabcolsep{2pt}
\begin{tabular}[t]{@{}l|r@{}}
    \toprule
    \multicolumn{2}{c}{\textbf{Squential Encoder}}              \\
    \midrule
    layer                           & Embedding                 \\
    vocabulary                      & 20k                       \\
    dimension                       & 1024                      \\
    \midrule
    layer                           & LSTM                      \\
    bidirectional                   & Yes                       \\
    units                           & 256                       \\
    max length                      & 200                       \\
    dropout                         & 0.25                      \\
    rec. dropout                    & 0.25                      \\
    \midrule
    layer                           & MaxPool1D                 \\
    \midrule
    \# params                       & 23.1M                     \\
    \bottomrule
\end{tabular}%
\hspace{2pt}%
\setlength\tabcolsep{2pt}
\begin{tabular}[t]{@{}l|r@{}}
    \toprule
    \multicolumn{2}{c}{\textbf{Contextual Explanations}}        \\
    \midrule
    model                               & Logistic reg.         \\
    features                            & BoW                   \\
    \# features                         & 20k                   \\
    Dictionary                          & 32                    \\
    $l_1$ penalty                       & $5 \cdot 10^{-5}$     \\
    $l_2$ penalty                       & $1 \cdot 10^{-6}$     \\
    \midrule
    model                               & Logistic reg.         \\
    features                            & Topics                \\
    \# features                         & 50                    \\
    Dictionary                          & 16                    \\
    $l_1$ penalty                       & $1 \cdot 10^{-6}$     \\
    $l_2$ penalty                       & $1 \cdot 10^{-8}$     \\
    \midrule[.75pt]
    \multicolumn{2}{c}{\textbf{Contextual VAE}}                 \\
    \midrule
    Prior                               & $\mathrm{Dir}(0.1)$   \\
    Sampler                             & LogisticNormal        \\
    \bottomrule
\end{tabular}
\end{subtable}
\end{table}

\begin{table}[t!]
\caption{\small%
Top-performing architectures used in our experiments on CIFAR10 and Satellite datasets.
\VGG architecture for CIFAR10 was taken from \url{https://github.com/szagoruyko/cifar.torch} but implemented in Keras with TensorFlow backend.
Weights of the pre-trained \VGGF model for the Satellite experiments were taken from \url{https://github.com/nealjean/predicting-poverty}.}
\label{tab:architectures-cifar10-satellite}
\smallskip
\begin{subtable}[t]{.52\textwidth}
    \caption{\small CIFAR10}
    \centering
    \scriptsize
    \def\arraystretch{1.2}
    \setlength\tabcolsep{2pt}
    \begin{tabular}[t]{@{}c|l|r@{}}
        \toprule
        \multicolumn{3}{c}{\textbf{Convolutional Encoder}}              \\
        \midrule
        \multirow{3}{*}{\rotatebox{90}{VGG-16}}
        &   model                           & \VGG                      \\
        &   pretrained                      & No                        \\
        &   fixed weights                   & No                        \\
        \midrule
        \multirow{6}{*}{\rotatebox{90}{MLP}}
        &   layer                           & Dense                     \\
        &   pretrained                      & No                        \\
        &   fixed weights                   & No                        \\
        &   units                           & 16                        \\
        &   dropout                         & 0.25                      \\
        &   activation                      & ReLU                      \\
        \midrule
        \multicolumn{2}{l}{\# params}       & 20.0M                     \\
        \bottomrule
    \end{tabular}%
    \hspace{1pt}%
    \setlength\tabcolsep{2pt}
    \begin{tabular}[t]{@{}l|r@{}}
        \toprule
        \multicolumn{2}{c}{\textbf{Contextual Explanations}}            \\
        \midrule
        model                         & Logistic reg.                   \\
        features                      & HOG (3, 3)                      \\
        \# features                   & 1024                            \\
        dictionary                    & 16                              \\
        $l_1$ penalty                 & $1 \cdot 10^{-5}$               \\
        $l_2$ penalty                 & $1 \cdot 10^{-6}$               \\
        \midrule[.75pt]
        \multicolumn{2}{c}{\textbf{Contextual VAE}}                     \\
        \midrule
        prior                               & $\mathrm{Dir}(0.2)$       \\
        sampler                             & LogisticNormal            \\
        \bottomrule
    \end{tabular}
\end{subtable}%
\begin{subtable}[t]{.52\textwidth}
    \caption{\small Satellite}
    \centering
    \scriptsize
    \def\arraystretch{1.2}
    \setlength\tabcolsep{2pt}
    \begin{tabular}[t]{@{}c|l|r@{}}
        \toprule
        \multicolumn{3}{c}{\textbf{Convolutional Encoder}}              \\
        \midrule
        \multirow{3}{*}{\rotatebox{90}{VGG-F}}
        &   model                           & \VGGF                     \\
        &   pretrained                      & Yes                       \\
        &   fixed weights                   & Yes                       \\
        \midrule
        \multirow{6}{*}{\rotatebox{90}{MLP}}
        &   layer                           & Dense                     \\
        &   pretrained                      & No                        \\
        &   fixed weights                   & No                        \\
        &   units                           & 128                       \\
        &   dropout                         & 0.25                      \\
        &   activation                      & ReLU                      \\
        \midrule
        \multicolumn{2}{l}{\# trainable params}    &    0.5M            \\
        \bottomrule
    \end{tabular}%
    \hspace{1pt}%
    \setlength\tabcolsep{2pt}
    \begin{tabular}[t]{@{}l|r@{}}
        \toprule
        \multicolumn{2}{c}{\textbf{Contextual Explanations}}            \\
        \midrule
        model                               & Logistic reg.             \\
        features                            & Survey                    \\
        \# features                         & 64                        \\
        dictionary                          & 16                        \\
        $l_1$ penalty                       & $1 \cdot 10^{-3}$         \\
        $l_2$ penalty                       & $1 \cdot 10^{-4}$         \\
        \# params                           &                           \\
        \midrule[.75pt]
        \multicolumn{2}{c}{\textbf{Contextual VAE}}                     \\
        \midrule
        prior                               & $\mathrm{Dir}(0.2)$       \\
        sampler                             & LogisticNormal            \\
        \bottomrule
    \end{tabular}
\end{subtable}
\end{table}

\begin{table}[t]
\caption{\small%
Top-performing architectures used in our experiments on SUPPORT2 and PhysioNet.}
\label{tab:architectures-survival}
\smallskip
\begin{subtable}[t]{.52\textwidth}
    \caption{\small SUPPORT2}
    \centering
    \scriptsize
    \def\arraystretch{1.2}
    \setlength\tabcolsep{2pt}
    \begin{tabular}[t]{@{}c|l|r@{}}
        \toprule
        \multicolumn{3}{c}{\textbf{MLP Encoder}}                        \\
        \midrule
        \multirow{6}{*}{\rotatebox{90}{MLP}}
        &   layer                           & Dense                     \\
        &   pretrained                      & No                        \\
        &   fixed weights                   & No                        \\
        &   units                           & 64                        \\
        &   dropout                         & 0.50                      \\
        &   activation                      & ReLU                      \\
        \bottomrule
    \end{tabular}
    \hspace{2pt}
    \setlength\tabcolsep{2pt}
    \begin{tabular}[t]{@{}l|r@{}}
        \toprule
        \multicolumn{2}{c}{\textbf{Contextual Explanations}}            \\
        \midrule
        model                         & Linear CRF                      \\
        features                      & Measurements                    \\
        \# features                   & 50                              \\
        dictionary                    & 16                              \\
        $l_1$ penalty                 & $1 \cdot 10^{-3}$               \\
        $l_2$ penalty                 & $1 \cdot 10^{-4}$               \\
        \bottomrule
    \end{tabular}
\end{subtable}%
\begin{subtable}[t]{.52\textwidth}
    \caption{\small PhysioNet Challenge 2012}
    \centering
    \scriptsize
    \def\arraystretch{1.2}
    \setlength\tabcolsep{2pt}
    \begin{tabular}[t]{@{}c|l|r@{}}
        \toprule
        \multicolumn{3}{c}{\textbf{Sequential Encoder}}  \\
        \midrule
        \multirow{6}{*}{\rotatebox{90}{LSTM}}
        & layer                           & LSTM         \\
        & bidirectional                   & No           \\
        & units                           & 32           \\
        & max length                      & 150          \\
        & dropout                         & 0.25         \\
        & rec. dropout                    & 0.25         \\
        \bottomrule
    \end{tabular}
    \hspace{2pt}
    \setlength\tabcolsep{2pt}
    \begin{tabular}[t]{@{}l|r@{}}
        \toprule
        \multicolumn{2}{c}{\textbf{Contextual Explanations}}        \\
        \midrule
        model                         & Linear CRF                  \\
        features                      & Statistics                  \\
        \# features                   & 111                         \\
        dictionary                    & 16                          \\
        $l_1$ penalty                 & $1 \cdot 10^{-3}$           \\
        $l_2$ penalty                 & $1 \cdot 10^{-4}$           \\
        \bottomrule
    \end{tabular}
\end{subtable}
\end{table}

\clearpage
\bibliography{references}

\end{document}